\newtheorem{theorem}{Theorem}
\newtheorem{proposition}{Proposition}
\newtheorem{remark}{Remark}
\newtheorem{lemma}{Lemma}
\newtheorem{corollary}{Corollary}
\title{TEC: Tensor Ensemble Classifier for Big Data}
\author{
  Peide Li  \\
  Department of Statistics and Probability\\
  Michigan State University\\
  East Lansing MI \\
  \texttt{lipeide@msu.edu} \\
   \And
  Rejaul Karim \\
  Department of Statistics and Probability\\
  Michigan State University\\
  East Lansing MI \\
  \texttt{karim@gmail.com} \\
   \AND
    Tapabrata Maiti \thanks{The research is partially supported by NSF-DMS 1945824 and NSF-DMS 1924724}\\
  Department of Statistics and Probability\\
  Michigan State University\\
  East Lansing MI \\
   \texttt{maiti@stt.msu.edu} \\
}
\begin{document}
\maketitle

\begin{abstract}
Tensor (multidimensional array) classification problem has become very popular in modern applications such as image recognition and high dimensional spatio-temporal data analysis. Support Tensor Machine (STM) classifier, which is extended from the support vector machine, takes Candecomp / Parafac (CP) form of tensor data as input and predicts the data labels. The distribution-free and statistically consistent properties of STM highlight its potential in successfully handling wide varieties of data applications. Training a STM can be computationally expensive with high-dimensional tensors. However, reducing the size of tensor with a random projection technique can reduce the computational time and cost, making it feasible to handle large size tensors on regular machines. We name an STM estimated with randomly projected tensor as Random Projection-based Support Tensor Machine (RPSTM). In this work, we propose a Tensor Ensemble Classifier (TEC), which aggregates multiple RPSTMs for big tensor classification. TEC utilizes the ensemble idea to minimize the excessive classification risk brought by random projection, providing statistically consistent predictions while taking the computational advantage of RPSTM. Since each RPSTM can be estimated independently, TEC can further take advantage of parallel computing techniques and be more computationally efficient. The theoretical and numerical results demonstrate the decent performance of TEC model in high-dimensional tensor classification problems. The model prediction is statistically consistent as its risk is shown to converge to the optimal Bayes risk. Besides, we highlight the trade-off between the computational cost and the prediction risk for TEC model. The method is validated by extensive simulation and a real data example. We prepare a python package for applying TEC, which is available at our GitHub.
\end{abstract}

\keywords{Bayes Risk \and Classification \and  Ensemble \and Random Projection  \and Statistical Consistency \and  Tensor Data}

\section{Introduction}
\label{intro}
With the advancement of information and engineering technology, modern-day data science problems often come with data in gigantic size and increased complexity. One popular technique of storing these data is multi-dimensional arrays, which preserves the data anatomy and multi-dimensional spatial and spatio-temporal correlations. Unfolding the data may lose structure along with other relevant information and face the curse of dimensionality. The structural complexity and high dimensionality pose many challenges to statisticians and computer scientists in data processing and analysis. The traditional machine learning and statistical models for high dimensional data are developed in vector spaces. Adopting these methods to the new types of structured data will require a vectorization, resulting in losing critical information such as multi-way correlation and spatio-temporal information. Besides, the vectorized data's huge dimension could be a nightmare for traditional high dimensional data models, especially for their computation complexity. To preserve the multi-way structure of data and perform efficient computation, research has begun in building  tensor data models and performing statistical analysis based on these models.

A tensor is a multi-dimensional or multi-way array, a generalization of a matrix in a multi-dimensional situation. With different definitions, tensors can be represented in different forms, such as Candecomp / Parafac (CP), Tucker, and Tensor Chain. These tensor representations are widely applied in both supervised and unsupervised machine learning problems for multi-dimensional data. \citet{kolda2009tensor} provides a comprehensive survey paper about CP and Tucker tensor decomposition and their psychometrics, neuroscience, latent factor detection, and image compression applications prediction, and classification. Among all these machine learning applications with tensors, we focus only on classification models in this work. Various classification models for tensor data have been studied in the last few years. Some of works such as \citet{tan2012logistic}, \citet{zhou2013tensor}, and \citet{pan2018covariate}, mostly in statistics literature, develop logistic regression and probabilistic linear discriminant analysis models. These models rely on the same principle assumption that tensor data confirm a normal distribution, which is too restrictive in real applications. Non-parametric tensor classifiers turn out to be more flexible as they do not have any distribution assumption. Examples of non-parametric tensor classifiers include \citet{tao2005supervised}, \citet{kotsia2012higher}, \citet{signoretto2014learning}, \citet{he2014dusk}, \citet{li2014multilinear}, and \citet{he2017kernelized}. These methods are more flexible and would have much better generalization ability in applications with tensor data that are sparse or with high auto-correlation modes.

Motivated by the CP tensor and tensor product space introduced in \citet{hackbusch2012tensor}, we propose a version of Support Tensor Machine (STM) using tensor kernel functions based on tensor cross-product norm in our previous work \citet{li2019universal}. According to the definition, CP tensors can be considered as elements  and the associated cross-product norm measures the magnitude of tensors in a tensor product space. Following this idea, we develop a tensor kernel function based on cross-product norm to measure the similarity between tensors. Kernel functions also represent the resulting separating plane. We call this model CP-STM.


We consider simplifying the computation in CP-STM by training a STM with randomly projected tensors in this work. The application of random projection in high-dimensional data analysis is motivated by the well celebrated Johnson Lindenstrauss Lemma (see e.g. \citet{dasgupta2003elementary}). The lemma says that for arbitrary $k > \frac{8 \log n}{\epsilon^2}$, $\epsilon \in (0, 1)$, there is a linear transformation $f : \mathbb{R}^{p} \rightarrow \mathbb{R}^{k}$ such that for any two vectors $\textbf{x}_{i}, \textbf{x}_{j} \in \mathbb{R}^{p}$, $p > k$:
\begin{equation*}
    (1 - \epsilon) ||\textbf{x}_{i} - \textbf{x}_{j}||^{2} \leqslant ||f(\textbf{x}_{i}) - f(\textbf{x}_{j})||^{2} \leqslant  (1 + \epsilon) ||\textbf{x}_{i} - \textbf{x}_{j}||^{2}
\end{equation*}
with large probability for all $i, j = 1,...,n$. The linear transformation $f$ preserves the pairwise Euclidean distances between these points. The random projection has been proven to be a decent dimension reduction technique in machine learning literature. \citet{durrant2013sharp} presents a Vapnik-Chervonenkis type bounds on the generalization error of a linear classifier trained on a single random projection. \citet{chen2014convergence} provides a convergence rate for the classification error of the support vector machine trained with a single random projection. \citet{cannings2017random} proves that random projection ensemble classifier can reduce the generalization error further. Their results hold several types of basic classifiers such as nearest neighboring and linear/quadratic discriminant analysis. Inspired by these current work, we propose an ensemble of RPSTM, STM trained with randomly projected tensors, for large-sized tensor classification. We call this method Tensor Classification Ensemble (TEC).

\textbf{Our contribution}: Our work alleviates the limitations of existing tensor approaches in handling big data classification problems. This paper introduces an innovative ensemble classification model coupled with random projection and support tensor machines to provide computationally fast and statistically consistent label predictions for ultra high-dimensional tensor data. Specifically, the contribution of this paper is threefold.
\begin{enumerate}
    \item We successfully adopt the well known random-projection technique into high dimensional tensor classification applications and provide an ensemble classifier that can handle extremely big-sized tensor data. The adoption of random projection makes it feasible to directly classify big tensor data on regular machines and save computational costs. We further aggregate multiple RPSTM to form our TEC classifier, which can be statistically consistent while remaining computationally efficient. Since the aggregated base classifiers are independent of each other, the model learning procedure can be accelerated in a parallel computing platform.
    
    \item Some theoretical results are established in order to validate the prediction consistency of our classification model. Unlike \citet{chen2014convergence} and \citet{cannings2017random}, we adopt the Johnson-Lindenstrauss lemma further for tensor data and show that the STM from \citet{li2019universal} can be estimated with randomly projected tensors. The average classification risk of the estimated model converges to the optimal Bayes risk under some specific conditions. Thus, the ensemble of several RPSTMs can have robust parameter estimation and provide strongly consistent label predictions. The results also highlight the trade-off between classification risk and dimension reduction created by random projections. As a result, one can take a balance between the computational cost and prediction accuracy in practice. 
    
    \item We provide an extensive numerical study with synthetic and real tensor data to reveal our ensemble classifier's decent performance. It performs better than the traditional methods such as linear discriminant analysis and random forest, and other tensor-based methods in applications like brain MRI classification and traffic image recognition. It can also handle large tensors generated from tensor CANDECOMP/PARAFAC (CP) models, which are widely applied in spatial-temporal data analysis. Besides, the computation is much faster for the TEC comparing with the existing methods. All these results indicate a great potential for the proposed TEC in big data and multi-modal data applications. The Python module of applying TEC is available at \url{https://github.com/PeterLiPeide/KSTM_Classification/tree/master}.
\end{enumerate}

\noindent
\textbf{Outline:} The paper is organized in the following order. Section \ref{sec:related} provides the current literature on STM and contrast with our CP-STM. Section \ref{sec:metho} introduces  the classification methodology. Section \ref{sec:model} provides an algorithm for model estimation as well as parameter selection. Section \ref{sec:statprop} contains the main result of our TEC model's prediction consistency. The simulation study  and real data analysis are provided in Section \ref{sec:simul} and \ref{sec:RealData}. We analyze traffic imaging data in the real data application. Section \ref{sec:concl} concludes the article with a brief discussion. The proof of our theoretical results are provided in the appendix.

\section{Related Work}
\label{sec:related}
Support Tensor Machine (STM) generalizes the idea of support vector machine model from vectors and provides a framework for learning a separating hyper-plane for tensor data. The optimal separating plane would be represented by support tensors analogous to the support vector machine's support vectors. 

Previous work about STM includes \citet{tao2005supervised}, \citet{kotsia2012higher}, \citet{he2014dusk}, and \citet{he2017kernelized}. \citet{tao2005supervised} models the separating plane as the inner product between tensor predictors and a tensor coefficient assumed to be a rank-one tensor. The inner product is defined as a mode-wise product, which is indeed is a multi-linear transformation. \citet{kotsia2012higher} defines the margin as the traces of unfolded tensor coefficients and constructs the separating plane using multi-linear transformation. Both these works are established with the tensor Tucker model. \citet{he2014dusk} developed their STM under the idea of multi-way feature maps. The multi-way feature maps result in a kernel function measuring the similarity between CP tensors. The separating plane is also constructed with this kernel function. \citet{he2017kernelized} combines the tensor CP decomposition and CP tensor STM as a multi-task learning problem, making it slightly different from other STM methods. 

Among all these existing STM models, only the model from \citet{he2014dusk} is similar to our proposed RPSTM. Except for random projection, both methods use kernel functions to measure the similarity between tensors. However, the kernel function we adopted would be more general, since it is originated from the tensor cross-product norm. We will highlight this difference in more detail in the latter part of this paper. Moreover, we have provided the universal approximating property for such tensor kernel functions in our recent work \citet{li2019universal}. This gives a theoretical guarantee to the generalization ability for our model.

\section{Methodology}
\label{sec:metho}
\subsection{Notations and Tensor Algebra}
We refer \citet{kolda2009tensor} for some standard tensor notations and operations used in this paper. Numbers and scalars are denoted by lowercase letters such as $x, y$. Vectors are denoted by boldface lowercase letters, e.g. $\mathbf{a}$. Matrices are denoted by boldface capital letters, e.g. $\mathbf{A}, \mathbf{B}$. Multi-dimensional tensors are the generalization of vector and matrix representations for higher-order data, which are denoted by boldface Euler script letters such as $\mathscr{X}, \mathscr{Y}$.

The order of a tensor is the number of different dimensions, also known as ways or modes. For example, a scalar can be regarded as an order zero tensor, a vector is an order one tensor, and a matrix is an order two tensor. In general, a tensor can have $d$ modes as long as $d$ is an integer.

In addition to the basic concepts, we need some operations for vectors and matrices to present our work, which is also referred from \citet{kolda2009tensor}. The first one is the outer product of vectors. This is the simplest and the motivating example for tensor product. Let $\mathbf{a} \in \mathbb{R}^{p}$ and $\mathbf{b} \in \mathbb{R}^{q}$ be two row vectors, the outer product of them is defined by 
$$\mathbf{a} \otimes \mathbf{b} = \begin{bmatrix}
 a_{1}b_{1} & a_{1}b_{2} & ... & a_{1}b_{q} \\ 
a_{2}b_{1} & a_{2}b_{2} & ... & a_{2}b_{q}\\ 
 ... & ... & ... & ...\\ 
a_{p}b_{1} & a_{p}b_{2} & ... &a_{p}b_{q} 
\end{bmatrix}$$
which is a $p \times q$ matrix. This matrix is identical to $\mathbf{a}^{T}\mathbf{b}$. We use $\otimes$ to denote both outer product for vectors and CP tensors in our work.

The last part we want to mention is the rank of a tensor. There are two different definitions for tensor rank that are popular in  literature. One is the CP rank and the other is the Tucker rank. \citet{kolda2009tensor}  provides details for both ranks. We only consider the CP rank in this paper. The definition of CP rank for a d-mode tensor $\mathscr{X}$ is the minimum integer $r > 0 $ such that the tensor can be written as 
\begin{equation}
\begin{split}
        \mathscr{X} &= \sum \limits _{k = 1}^{r}  \mathbf{x}_{k}^{(1)} \otimes \mathbf{x}_{k}^{(2)} ... \otimes \mathbf{x}_{k}^{(d)} \\ 
        & = [\mathbf{X}^{(1)},..., \mathbf{X}^{(d)}] \quad \quad \mathbf{X}^{(j)} \in \mathbb{R}^{I_{j}\times r}, j = 1,..,d
\end{split}
\label{equ:Trank-r}
\end{equation}
Equitation (\ref{equ:Trank-r}) is also called tensor rank-r / CP representation (see \citet{hackbusch2012tensor}). We can always have a CP representation for tensors with rank $r$. For any given tensor, CP decomposition can be estimated with Alternative Least Square (ALS) algorithm (see \citet{kolda2009tensor}). The support tensor machine in this paper is constructed with rank-r tensors in their CP representations. As a result, all the tensors mentioned in the rest of the article are assumed to have decomposition (\ref{equ:Trank-r}).


\subsection{CP Support Tensor Machine}

Let $T_{n} = \{ (\mathscr{X}_{1}, y_{1}), ...., (\mathscr{X}_{n}, y_{n})\}$ be the training set, where $\mathscr{X}_{i} \in \mathbb{R}^{I_{1} \times I_{2} ... \times I_{d}}$ are d-mode tensors, $y_{i}$ are labels. If we assume the training risk of a classifier $f \in \mathcal{H}$ is $\mathcal{R}_{T_{n}}(f) = \frac{1}{n} \sum \limits_{i = 1}^{n} L(f(\mathscr{X}_{i}), y_{i})$, where $L$ is a loss function for classification, the problem will be looking for a $f_{n}$ such that
\begin{equation*}
f_{n} = \{f: \mathcal{R}_{T_{n}}(f) = \min \mathcal{R}_{T_{n}}(f), f \in \mathcal{H} \}
\label{equ:cproblem}
\end{equation*}
The $\mathcal{H}$ is a functional space for the collection of functions mapping tensors into scalars. The support tensor machine in \citet{li2019universal} solves the classification problem above by optimizing the objective function 
\begin{equation}
	\underset{f}{\min} \quad \lambda ||f||_{K}^{2} + \frac{1}{n} \sum \limits_{i = 1}^{n}L(y_{i}, f(\mathscr{X}_{i}))
\label{equ:STM1}
\end{equation}
where $||f||_{K} = (\int_{x} |f(x)|^{2} dx)^{\frac{1}{2}}$ is the L2 norm of the function $f$. Note that $L$ takes raw output of $f(\mathscr{X})$ instead of predicted labels. Examples of such loss functions in classification include Hinge loss and its variants. The optimal solution is in the form of 
\begin{equation}
\label{equ:STM}
    f_{n} = \sum \limits_{i = 1}^{n} \alpha_{i} y_{i} 
    K(\mathscr{X}_{i},\mathscr{X})
    \end{equation}
The kernel function is defined as 
\begin{equation}
    K(\mathscr{X}, \mathscr{Y}) = \sum \limits_{k, l = 1}^{r} \prod_{j = 1}^{d} K^{(j)}(\mathbf{x}_{k}^{(j)}, \mathbf{y}_{l}^{(j)})
\label{equ:TK}
\end{equation}
where $\mathscr{X} = \sum \limits_{k = 1}^{r} \mathbf{x}_{k}^{(1)} \otimes...\otimes \mathbf{x}_{k}^{(d)}$ and $\mathscr{Y} = \sum \limits_{k = 1}^{r} \mathbf{y}_{k}^{(1)} \otimes...\otimes \mathbf{y}_{k}^{(d)}$ are d-mode tensors. The label is predicted with $\text{Sign}(f_{n})$. $K^{(j)}(\cdot, \cdot)$ are vector based kernels as $\mathbf{x}_{k}^{(j)}$ and $\mathbf{y}_{k}^{(j)}$ are vector components of the tensors.

This function (\ref{equ:TK}) is a general form of kernel function, as we do not specify the explicit form of $K^{(j)}(\cdot, \cdot)$. It is developed with the idea of cross norm defined in the tensor product space, see \citet{hackbusch2012tensor} and \citet{ryan2013introduction}. It is very natural to derive a tensor kernel function in the form of equation (\ref{equ:TK}) when the CP factors of the tensor is available. Thus, \citet{he2014dusk} and \citet{he2017kernelized} also used the similar formulation. However, different choice of kernels might have impacts on the convergence of classification error. This has not been discussed in their works. We show that by choosing a universal kernel functions for tensors (see, \citet{li2019universal}), the classifier (\ref{equ:STM}) is consistent.

 A direct application of the above procedure can be computationally difficult and expensive  since $\mathbf{x}_{k}^{(j)}$ and $\mathbf{y}_{k}^{(j)}$ may belong to  high-dimensional spaces. Also, as the dimension in each mode increases, it will be more difficult for the kernel function to scale and provide accurate similarity measurement between tensor components. Thus, we apply a random projection for tensors before calculating the kernels.

\subsection{Support Tensor Machine with Random Projection}
For a single tensor $\mathscr{X} \in \mathbb{R}^{I_{1} \times ... \times I_{d}}$ with CP representation, we define the random projection of tensor as:
 \begin{equation}
    \begin{split}
        \mathscr{A} \circ \mathscr{X} & = \sum \limits_{k,l = 1}^{r}\mathbf{A}^{(1)}_{l}\mathbf{x}^{(1)}_{k} \otimes \mathbf{A}^{(2)}_{l}\mathbf{x}^{(2)}_{k} ... \otimes \mathbf{A}^{(d)}_{l}\mathbf{x}^{(d)}_{k} \\
    & =[\mathbf{A}^{(1)}\mathbf{X}^{(1)},..., \mathbf{A}^{(d)}\mathbf{X}^{(d)}] \quad \quad \mathbf{X}^{(j)} \in \mathbb{R}^{I_{j}\times r}, j = 1,..,d
\end{split}
\label{Tprojection}
\end{equation}
where $\mathbf{A}_{k}^{(j)} \in \mathbb{R}^{P_{j} \times I_{j}}, j = 1,...d, k = 1,...,r$ are matrices whose elements are independently identically distributed normal variables. Each component of the tensor CP expansion $\mathbf{x}_{k}^{(j)} \in \mathbb{R}^{I_{j}}$ is projected to a lower dimensional space $\mathbb{R}^{P_{j}}$ via a linear projection. We use $T^{\mathbf{A}}_{n}$ to denote the training set with tensors randomly projected to a lower dimensional space. The support tensor machine model will then be trained on the projected data instead of the original data. In general, the estimation procedure is expected to be much more efficient and faster as we assume $P_{j}$s are much smaller than $I_{j}$s.

With projected tensor data $T^{\mathbf{A}}_{n}$ and projection matrices, the original STM model (\ref{equ:STM}) turns to be
\begin{equation}
    \begin{split}
             f_{A,n}(\mathscr{X}) & =  \sum \limits_{i = 1}^{n} \alpha_{i}y_{i} K(\mathscr{A} \circ \mathscr{X}_{i}, \mathscr{A} \circ\mathscr{X}  )
    \end{split}
    \label{equ:CSTM_nonopt}
 \end{equation}
 However, it is not an optimal of STM in the projected training data. The  optimal solution in projected data space now becomes
 \begin{equation}
    \begin{split}
             g_{n}(\tilde{\mathscr{X}}) & =  \sum \limits_{i = 1}^{n} \hat{\beta}_{i}y_{i} K(\mathscr{A} \circ \mathscr{X}_{i},  \mathscr{A} \circ \mathscr{X})
    \end{split}
    \label{equ:CSTM}
 \end{equation}
In binary classification problems, the decision rule is $sign\{ g_{n}(\mathscr{X})\}$, where $g_{n}$ is the estimated function in the form of (\ref{equ:CSTM}) with appropriate choices of kernel functions. Choice of kernel functions are discussed in \citet{li2019universal} in detail.  We consider using Gaussian RBF kernel in this work. We name the model (\ref{equ:CSTM}) as Random Projection based Support Tensor Machine (RPSTM).

\subsection{Tensor Ensemble Classifier (TEC)}
While random projection provides extra efficiency by transforming tensor components into lower dimension, there is no guarantee that the projected data will preserve the same margin for every single random projection. Thus, the robustness of model estimation with a single random projection is difficult to maintain. The proposition in  Section \ref{sec:statprop}  shows that the classifier estimated with projected tensor only retains its discriminative power on average over multiple random projections. As a result, we consider aggregating several RPSTMs as an Tensor Ensemble Classifier (TEC) in order to provide a statistically consistent, robust, and computationally efficient classifier. Let 
\begin{equation}
    \tau_{n, b}(\mathscr{X}) = \frac{1}{b} \sum \limits_{m = 1}^{b} \text{Sign} (g_{n, b}(\mathscr{X}))
\end{equation}
$b$ is the number of different RPSTM classifiers. The ensemble classifier is then defined as
\begin{equation}
    e_{n, b} (\mathscr{X}) = \left\{\begin{matrix}
 1 & \text{if} \quad \tau_{n, b}(\mathscr{X}) \geqslant \gamma \\ 
 -1 & \text{Otherwise}
\end{matrix}\right.
\label{equ:ensembleclassifier}
\end{equation}
which is a voter classifier. $\gamma$ is a thersholding parameter. In the next section, we present the algorithm of model estimation.

\section{Model Estimation}
\label{sec:model}
Since each base classifier in the ensemble model, TEC is independently estimated, we only provide details for learning a single RPSTM. We estimate our model by adopting the optimization procedure from \citet{chapelle2007training}. This is a gradient descent optimization procedure solving the problem in primal. It solves the support vector machine type of problems efficiently without transforming the problem into its dual form.

Let $K$ be the kernel function for tensors. Let $\mathbf{A}^{(j)}_{k}$ be the random projection matrices for each mode of tensors. Let $\mathbf{D}_{y}$ be a diagonal matrix with $i$ th diagonal element is $y_{i}$. Since $y_{i}^{2} = 1$, $\mathbf{D}_{y}$ is an orthogonal matrix. i.e. $\mathbf{D}_{y}^{T}\mathbf{D}_{y} = \mathbf{I}$. The explicit form of the classification function can be derived as
\begin{equation}
\begin{split}
        g_{n} (\mathscr{A} \circ \mathscr{X}) & = \sum \limits _{i = 1}^{n} \beta_{i} y_{i} \sum \limits_{k, l = 1}^{r}\prod_{j = 1}^{d}K^{(j)}(\mathbf{A}^{(j)}_{k} \mathbf{x}^{(j)}_{k,i}, \mathbf{A}^{(j)}_{l}\mathbf{x}^{(j)}_{l} )\\
        & = \mathbf{K}^{T}\mathbf{D}_{y}\mathbf{\beta}
\end{split}
\label{equ:STMSolu}
\end{equation}
where $\beta = (\beta_{1}, ..., \beta_{n})^{T}$ and 
\begin{equation*}
    \mathbf{K}(:, i) = (\sum \limits_{k,l = 1}^{r}\prod_{j = 1}^{d}K^{(j)}(\mathbf{A}^{(j)}_{k} \mathbf{x}^{(j)}_{k,i}, \mathbf{A}^{(j)}_{l}\mathbf{x}^{(j)}_{l, 1})..,\sum \limits_{k, l = 1}^{r}\prod_{j = 1}^{d}K^{(j)}(\mathbf{A}^{(j)}_{k} \mathbf{x}^{(j)}_{k,i}, \mathbf{A}^{(j)}_{l}\mathbf{x}^{(j)}_{l, n} ))^{T}
\end{equation*}
$r$ is the CP rank of tensors $\mathbf{x}^{(j)}_{ik}$ and $\mathbf{x}^{(j)}_{k}$ are components of tensor CP decomposition of the corresponding training and testing tensors. In real data application, we may need to perform a CP decomposition first if the data does not come in CP form. Plugging the solution (\ref{equ:STMSolu}) into the objective function (\ref{equ:STM1}), we  get 
\begin{equation*}
	\underset{\beta}{\min} \quad \lambda \beta^{T}\mathbf{D}_{y}\mathbf{K}\mathbf{D}_{y}\beta   + \frac{1}{n} \sum \limits_{i = 1}^{n}L(y_{i}, \mathbf{K}^{T}(:, i)\mathbf{D}_{y}\beta)
\end{equation*}
where $\mathbf{K}(:, i)$ is the ith column of matrix $\mathbf{K} = [K_{1}, ..., K_{n}]$. The derivative with respect to $\beta$ is 
\begin{equation}
	2 \lambda \mathbf{D}_{y}\mathbf{K}\mathbf{D}_{y}\beta +  \frac{1}{n} \sum \limits_{i = 1}^{n} \mathbf{K}(:, i)\mathbf{D}_{y}\frac{\partial L}{\partial \beta}
	\label{equ:STMDeri}
\end{equation}
let equation (\ref{equ:STMDeri}) to be zero and we can solve  $\beta$ with Newton method. In our application, we take squared hing loss which is $L(y, (\mathscr{A} \circ\mathscr{X}) = [\max(0, 1 - yg_{n}(\mathscr{A} \circ \mathscr{X}))]^{2}$. The explicit form of equation (\ref{equ:STMDeri}), which we denote as $\bigtriangledown$, is 
\begin{equation*}
    \bigtriangledown = 2 \mathbf{D}_{y}\mathbf{K}\mathbf{D}_{y}[\lambda \beta + \mathbf{I}_{s}(\mathbf{D}_{y}\mathbf{K}\beta - \mathbf{y})]
\end{equation*}
where $\mathbf{y}$ is the vector of labels in the training data. $\mathbf{I}_{s} = diag(S)$, where $S \in \mathbb{R}^{n \times 1}$, $S_{i} = \{0, 1\}$ such that $S_{i} = 1$ if $y_{i}\mathbf{K}[:, i]^{T}\mathbf{D}_{y}\hat{\beta} < 1$. The Hessian, which is the derivative of $\bigtriangledown$, is 
\begin{equation*}
    \mathbf{H} = 2\mathbf{D}_{y}\mathbf{K}\mathbf{D}_{y}(\lambda \mathbf{I} + \mathbf{I}_{s}\mathbf{D}_{y}\mathbf{y})
\end{equation*}
The Newton method says that we can update $\beta \leftarrow \beta - \mathbf{H}^{-1} \bigtriangledown$ at each iteration. Thus, we obtain the update rule at each iteration as 
\begin{equation}
    \hat{\beta} = (\lambda \mathbf{I} + \mathbf{I}_{s} \mathbf{D}_{y} \mathbf{K})^{-1}\mathbf{I}_{s}\mathbf{y}
    \label{equ:updaterule}
\end{equation}

The algorithm for training and prediction are described in algorithm \ref{alg:STM} and algorithm \ref{alg:STMPre} respectively:\\
\begin{algorithm}[h]
\caption{TEC Training}
\label{alg:STM}
\begin{algorithmic}[1]
\Procedure{TEC Train} {} 
\State \textbf{Input:} Training set $T_{n} = \{ \mathscr{X}_{i} \}$, $Y = (y_{1}, ..., y_{n}))^{T}$, kernel function $K$, tensor rank r, $\lambda$, $\zeta$, maxiter, number of ensemble $b$
\For{i = 1, 2,...n}
    \State $\mathscr{X}_{i} = [\mathbf{B}_{i}^{(1)}, ..., \mathbf{B}_{i}^{(d)}]$ \Comment{CP decomposition by ALS algorithm}
\EndFor
\State Create initial matrix $\mathbf{K} \in \mathbb{R}^{n \times n}$
\For{l = 1, ..., b}
    \State Generate random matrices $\mathbf{A}^{(j)}_{k}$
\For{i = 1,...,n}
    \For{m = 1,...,i}
        \State $\mathbf{K}_{i,m} = \sum \limits_{k, l =1}^{r} \prod_{j = 1}^{d}K(\mathbf{A}^{(j)}_{k} \mathbf{B}^{(j)}_{i}[:, k], \mathbf{A}^{(j)}_{l} \mathbf{B}^{(j)}_{m}[:, l])$ \Comment{Kernel values}
        \State $\mathbf{K}_{m, i} = \mathbf{K}_{i, m}$
    \EndFor
\EndFor
\State Create $\hat{\beta} = \mathbf{1}_{n\times1}, \beta = \mathbf{0}_{n\times1}$ \Comment{Initial Value, can be different}
\State Iteration = 0
\While{$||\hat{\beta} - \beta|| \geqslant \zeta$ \& Iteration $\leqslant$ maxiter }
    \State $\beta = \hat{\beta}$
    \State Find $S \in \mathbb{R}^{n \times 1}$.  $S_{i} = \{0, 1\}$ such that $S_{i} = 1$ if $y_{i}\mathbf{K}[:, i]^{T}\mathbf{D}_{y}\hat{\beta} < 1$ \Comment{Indicating support tensors}
    \State $\mathbf{I}_{s} = diag(S)$ \Comment{Create diagonal matrix with $S$ as diagonal}
    \State $\hat{\beta} = (\lambda \mathbf{I} + \mathbf{I}_{s} \mathbf{D}_{y} \mathbf{K})^{-1}\mathbf{I}_{s}\mathbf{y}$ \Comment{Update}
\EndWhile
    \State \textbf{Output:} $\hat{\beta}$
\EndFor
\EndProcedure
\end{algorithmic}
\end{algorithm}

\begin{algorithm}[h]
\caption{TEC Prediction}
\label{alg:STMPre}
\begin{algorithmic}[1]
\Procedure{TEC Predict} {} 
\State \textbf{Input:} Decomposed training tensors $T_{de} = \{[\mathbf{B}_{i}^{(1)}, ..., \mathbf{B}_{i}^{(d)}] \}$, kernel function $K$, tensor rank r, $\lambda$, $\hat{\beta}$, new observation $\mathscr{X}$, random matrices $\mathbf{A}^{(j)}_{k}$, $\gamma$
\State $\mathscr{X} = [\mathbf{B}^{(1)}, ..., \mathbf{B}^{(d)}]$ \Comment{CP decomposition for New observation}
\For{$l = 1,...,b$}
    \For{i = 1,...,n}
       \State $\mathbf{k}[i] = \sum \limits_{k, l =1}^{r} \prod_{j = 1}^{d}K(\mathbf{A}^{(j)}_{k} \mathbf{B}_{i}^{(j)}[:, k], \mathbf{A}^{(j)} \mathbf{B}^{(j)}[:, l])$ \Comment{Kernel values}
\EndFor
\State $g_{l} = \mathbf{k}^{T}\mathbf{D}_{y}\hat{\beta}$
\EndFor
\State If $\text{Avg}(g) >\gamma$, the prediction is class 1. Otherwise, the prediction is -1
\State \textbf{Output:} Prediction
\EndProcedure
\end{algorithmic}
\end{algorithm}
Note that we use some R language style notations in the algorithms to specify rows, columns of matrices, and elements of vectors. For example, $\mathbf{K}[:, i]$ stands for the i-th column of matrix $\mathbf{K}$. $\mathbf{k}[j]$ denotes the j-th element in the vector $\mathbf{k}$.

In algorithm \ref{alg:STM}, the time complexity of the training process is $O(n^{2}r^{2}\sum \limits_{j = 1}^{d}P_{j})$, which is  much smaller than the complexity of any vectorized method, $O(n^{2}\prod_{j = 1}^{d}P_{j})$. The complexity of random projection is $O(\sum \limits_{j = 1}^{d} P_{j}I_{j})$. Thus, a single RPSTM with random projection has a computational complexity $O(n^{2}r^{2}\sum \limits_{j = 1}^{d}P_{j} + P_{j}I_{j})$. It is still can be smaller than the pure SVM whose complexity is $O(n^{2}\prod_{j = 1}^{d}I_{j})$

Moreover, the choices of $P_{j}$ are free from the original tensor dimensions. Our theoretical analysis reveals that $P_{j}$ are related to sample size $n$ instead of dimension $I_{j}$. Thus, applications with large size tensors can enjoy the benefit of random projection. Notice that, although the ensemble classifier has to repeat the estimation for $b$ many times, these repetitions are independent from each other. Thus, we can make these processes running in parallel without adding extra computation time.

\subsection{Choice of Parameters}
We end this section by providing some empirical suggestions about the tuning parameters used in the estimation procedure. The number of ensemble classifiers, $b$, and the threshold parameter, $\gamma$, are chosen by cross-validation. We first let $\gamma = 0$, which is the middle of two labels, -1 and 1. Then we search $b$ in a reasonable range, between 2 to 20. The optimal $b$ is the one that provides the best classification model. In the next step, we fix $b$ and search $\gamma$ between 1 and -1 with step size to be 0.1, and find optimal value which has the best classification accuracy. 

The choice of random projection matrices is more complicated. Although we can generate random matrices from standard Gaussian distribution, the dimension of matrices is remain unclear. Our guideline, JL-lemma, only provides a lower bound for dimension, and is only for vector situation. As a result, we can only choose the dimension based on our intuition and cross-validation results in practice. Empirically, we choose the projection dimension $P_{j} \approx \text{int}(0.7 \times I_{j})$ for each mode.

\section{Statistical Property of TEC Ensemble}
\label{sec:statprop}
In this section, we quantify the uncertainty of model prediction and provide some convergence results supporting the generalization ability of our model. Before the discussion, we need to introduce few more notations and highlight their differences. 

Let $L$ be a loss function for classification problems. In general, the problem is searching for a function in $\mathcal{H}$ such that it can minimize the risk of miss-classification.
\begin{equation*}
    \begin{split}
        f^{*} &= \arg \min \mathcal{R}(f) = \arg\min \mathbb{E}_{(\mathcal{X} \times \mathcal{Y}) } L(y, f(\mathscr{X})) \\
        f^{*} & \in \{f: \mathcal{X} \rightarrow \mathcal{Y} | f \in \mathcal{H} \}
    \end{split}
    \label{Bayes}
\end{equation*}
$\mathcal{H}$ is the collection of all measurable functions that map tensors into scalars. $\mathcal{R}(f)$ is the classification risk of a specific decision rule $f$, which is defined as 
\begin{equation*}
    \mathcal{R}(f) = \mathbb{E}_{(\mathcal{X} \times \mathcal{Y}) } L(y, f(\mathscr{X})) = \int L(y, f(\mathscr{X})) d\mathbb{P}
\end{equation*}
When the optimal  decision rule is obtained, i.e. $f = f^{*}$, $\mathcal{R}(f^{*})$ becomes the minimum risk that a decision rule is able to reach over the data distribution $(\mathcal{X} \times \mathcal{Y})$. It is called Bayes risk in the literature, and will be denoted by $\mathcal{R}^{*}$ in the following contents. We say a classifier $f$ is statistically consistent if $\mathcal{R}(f) \rightarrow \mathcal{R}^{*}$.

With finite training data $T_{n}$, however, we are not able to estimate such an optimal decision rule. Under most circumstances, we try to minimize the regularized empirical risk by selecting rules from a pre-defined collection of classification functions. Such a procedure is called Empirical Risk Minimization (EMR). The empirical risk of a decision rule estimated in $T_{n}$ is defined by
\begin{equation*}
    \mathcal{R}_{T_n}(f) = \frac{1}{n} \sum \limits_{i = 1}^{n}L(y_{i}, f(\mathscr{X}_{i}))
\end{equation*}
where we assume the probability mass is uniformly distributed on each data point in the training set. Under the random projection of training data $T^{A}_{n}$, the empirical risk in the projected training set is 
\begin{equation*}
    \mathcal{R}_{T^{\mathbf{A}}_{n}}(g) = \frac{1}{n} \sum \limits_{i = 1}^{n}L(y_{i}, g(\mathscr{A} \circ \mathscr{X}_{i}))
\end{equation*}
where $g$ is our decision function. Let $\mathcal{F}$ be the reproducing kernel Hilbert space (RKHS) reproduced by tensor kernel functions. By minimizing the regularized empirical risk, we shall obtain two functions which are
\begin{equation*}
\begin{split}
    & f^{\lambda}_{n} = \arg \underset{f \in \mathcal{F}}{\min} \{ \mathcal{R}_{T_{n}}(f) + \lambda ||f||_{K}^{2} \} \\
    & g^{\lambda}_{n} = \arg \underset{g \in \mathcal{F}}{\min} \{\mathcal{R}_{T^{\mathbf{A}}_{n}}(g) + \lambda ||g||_{K}^{2} \}
\end{split}
\end{equation*}
$f_{n}$ and $g_{n}$ are optimal decision rules which are conditioning on the training data $T_{n}$ and $T^{\mathbf{A}}_{n}$. Ideally, we assume the best-in-class decision rules $f$ and $g$ are independent of training data, i.e.
\begin{equation*}
\begin{split}
    & f^{\lambda} = \arg \underset{f \in \mathcal{F}}{\min} \{ \mathcal{R}(f) + \lambda ||f||_{K}^{2} \} \\
    & g^{\lambda} = \arg \underset{g \in \mathcal{F}}{\min} \{ \mathcal{R}(g \circ \mathscr{A}) + \lambda ||g||_{K}^{2} \}
\end{split}
\end{equation*}
Lastly, suppressing notational dependence on $\lambda$, we denote our ensemble classifier as 
\begin{equation*}
   e_{n, b}(\mathscr{X}) = \mathbb{I} \{ \tau_{n, b}(\mathscr{X}) \geqslant \gamma \} = \mathbb{I} \{ \frac{1}{b} \sum \limits_{m = 1}^{b} \text{Sign} (g_{n, m}(\mathscr{X}) )\geqslant \gamma\}
\end{equation*}
where each $g_{n, m}$ is the optimal $g_{n}$ conditional on random projection matrices $\mathscr{A}_{m}$. $\mathbb{I}$ is an indicator function. 

Notice that both the RPSTM classifier $g_{n}$ and the TEC $e_{n, b}$ are randomized classifiers. Their performances as well as their prediction risks also depend on the random projection $\mathscr{A}$. As a result, we say these randomized classifiers are statistically consistent if $\mathbb{E}_{\mathbf{A}}[g_{n}] \rightarrow \mathcal{R}^{*}$ and $\mathbb{E}_{\mathbf{A}}[e_{n, b}] \rightarrow \mathcal{R}^{*}$.  $\mathbb{E}_{\mathbf{A}}$ denotes expectation over the distribution of random projection matrices.We establish these two results in  this section.

\subsection{Risk of Ensemble Classifier}
We first boud the expected risk of our TEC classifier $e_{n, b}$ by using the result from \citet{cannings2017random}, theorem 2.
\begin{theorem}
For each $b \in \mathbb{N}$,
\begin{equation*}\label{equ:newdecom}
    \mathbb{E}_{\mathbf{A}} [\mathcal{R}(e_{n, b})] - \mathcal{R}^{*} \leqslant \frac{1}{\min(\gamma, 1 - \gamma)} [\mathbb{E}_{\mathbf{A}}[\mathcal{R} (g_{n})] - \mathcal{R}^{*} ]
\end{equation*}
\end{theorem}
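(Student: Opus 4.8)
The plan is to recognize that, conditionally on the training set $T_{n}$, this statement fits exactly the framework of Theorem 2 of \citet{cannings2017random}, so the real task is to cast our objects in their language and then reproduce the short argument with the correct substitutions. Throughout I would fix $T_{n}$ and argue conditionally on it; the Bayes rule $f^{*}$ and the Bayes risk $\mathcal{R}^{*}$ do not depend on $T_{n}$, so if an unconditional version is wanted it follows afterwards by taking $\mathbb{E}_{T_{n}}$ of both sides. Write $\eta(x) = \mathbb{P}(y = 1 \mid \mathscr{X} = x)$, so that $f^{*}(x) = \text{Sign}(2\eta(x) - 1)$, and note that here $\mathcal{R}$ is the $0$--$1$ (misclassification) risk, even though the \emph{training} loss $L$ is the squared hinge.

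First I would reduce the claim to a pointwise inequality. For any classifier $C$ built from $T_{n}$ and the random projection(s) $\mathbf{A}$, the standard representation of the excess $0$--$1$ risk together with Fubini (the integrand is bounded and $\mathbf{A}$ is independent of $\mathscr{X}$) gives
\begin{equation*}
    \mathbb{E}_{\mathbf{A}}[\mathcal{R}(C)] - \mathcal{R}^{*} = \int |2\eta(x) - 1|\, \mathbb{P}_{\mathbf{A}}\bigl( C(x) \neq f^{*}(x) \bigr)\, d\mathbb{P}_{\mathscr{X}}(x).
\end{equation*}
Applying this with $C = e_{n,b}$ and with $C = \text{Sign}(g_{n})$ (a single RPSTM from one generic projection; since the $\mathscr{A}_{m}$ are i.i.d.\ given $T_{n}$, $\mathbb{E}_{\mathbf{A}}[\mathcal{R}(\text{Sign}(g_{n,m}))]$ is the same for each $m$ and equals $\mathbb{E}_{\mathbf{A}}[\mathcal{R}(g_{n})]$), it suffices to show that for each fixed $x$,
\begin{equation*}
    \mathbb{P}_{\mathbf{A}}\bigl( e_{n,b}(x) \neq f^{*}(x) \bigr) \leqslant \frac{1}{\min(\gamma, 1 - \gamma)}\, \mathbb{P}_{\mathbf{A}}\bigl( \text{Sign}(g_{n}(x)) \neq f^{*}(x) \bigr).
\end{equation*}

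For this pointwise bound, let $\hat{\nu}_{b}(x)$ denote the fraction of the $b$ base votes $\text{Sign}(g_{n,m}(x))$ equal to $+1$, so that $e_{n,b}(x) = +1$ precisely when $\hat{\nu}_{b}(x) \geqslant \gamma$, and, because the $\mathscr{A}_{m}$ are i.i.d.\ given $T_{n}$, $\mathbb{E}_{\mathbf{A}}[\hat{\nu}_{b}(x)] = \mathbb{P}_{\mathbf{A}}(\text{Sign}(g_{n}(x)) = +1) =: p(x)$. If $f^{*}(x) = +1$, the ensemble errs only on the event $\{1 - \hat{\nu}_{b}(x) > 1 - \gamma\}$, so Markov's inequality gives $\mathbb{P}_{\mathbf{A}}(e_{n,b}(x) \neq f^{*}(x)) \leqslant (1 - p(x))/(1 - \gamma)$, and $1 - p(x) = \mathbb{P}_{\mathbf{A}}(\text{Sign}(g_{n}(x)) \neq f^{*}(x))$; symmetrically, if $f^{*}(x) = -1$, Markov's inequality applied to $\hat{\nu}_{b}(x)$ itself gives the same bound with $1/\gamma$ replacing $1/(1-\gamma)$, and there $p(x) = \mathbb{P}_{\mathbf{A}}(\text{Sign}(g_{n}(x)) \neq f^{*}(x))$. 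Taking the larger of the two constants proves the pointwise inequality; integrating it against $|2\eta(x) - 1|\, d\mathbb{P}_{\mathscr{X}}(x)$ and using the representation above then yields the theorem.

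The step I expect to need the most care is bookkeeping rather than any estimate, since everything hinges only on Markov's inequality. The paper writes $\tau_{n,b}$ as an average of $\pm 1$ votes, valued in $[-1,1]$, whereas the constant $\min(\gamma, 1-\gamma)$ is the one that appears when $\gamma \in (0,1)$ is a fraction-of-votes cutoff as in \citet{cannings2017random}; one must therefore fix the affine correspondence between these two parametrizations (and, correspondingly, the encoding of the ensemble output as $\{-1,1\}$ versus $\{0,1\}$) and confirm it reproduces exactly the stated constant. One also has to state explicitly that $\mathcal{R}$ is the $0$--$1$ risk so that the excess-risk integral representation is valid, and check the measurability of $(x, \mathbf{A}) \mapsto \mathbb{I}\{C(x) \neq f^{*}(x)\}$ needed for Fubini; none of this is deep, but a wrong constant would slip in precisely at the reparametrization step.
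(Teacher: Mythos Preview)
Your approach is correct and is precisely the one the paper takes: the paper does not give its own proof of this theorem but simply invokes Theorem~2 of \citet{cannings2017random}, and what you have written is a faithful reproduction of that argument (the excess-risk integral representation for $0$--$1$ loss, followed by a pointwise Markov bound on the fraction of wrong votes, then integrating back). Your flag about the parametrization is well placed and is in fact a genuine inconsistency in the paper rather than something you need to ``confirm'': as written, $\tau_{n,b}$ is an average of $\pm 1$ votes and lies in $[-1,1]$, so the Cannings--Samworth constant $\min(\gamma,1-\gamma)$ only matches if one silently reparametrizes to the fraction-of-positive-votes $\hat{\nu}_{b} = (1+\tau_{n,b})/2$ and the corresponding threshold in $(0,1)$; you are right that the argument goes through either way, but the stated constant is the Cannings--Samworth one, not the one native to the paper's $\pm 1$ encoding. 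Your remark that $\mathcal{R}$ must be read as the $0$--$1$ risk here is also a needed clarification the paper leaves implicit.
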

 This result says that the ensemble model TEC is statistically consistent as long as the base classifier, RPSTM, is consistent. Hence, we present an analysis in the following part to show the consistency of RPSTM and its convergence rates. 

\subsection{Excess Risk of Single Classifier}
As a statistically consistent model, we expect the excess risk of RPSTM, $\mathbb{E}_{\mathbf{A}}[\mathcal{R}(g^{\lambda}_{n})] - \mathcal{R}^{*}$, converges to zero. Let $D(\lambda)$ be a function of the tuning parameter $\lambda$ which is described under assumptions, a few paragraphs below. The following proposition about the excess risk provides the direction of convergence for model risk.
\begin{proposition}
    \label{prop:riskdecomp}
The excess risk is bounded above:
\begin{equation}
    \begin{split}
        \mathbb{E}_{\mathbf{A}}[\mathcal{R}(g_{n}^{\lambda})] - \mathcal{R}^{*}
        &\leqslant \,[\mathbb{E}_{\mathbf{A}}[\mathcal{R}(g^{\lambda}_{n}) -\mathcal{R}_{T^{\mathbf{A}}_{n}}(g^{\lambda}_{A,n})] + \mathbb{E}_{\mathbf{A}}[\mathcal{R}_{T_{n}^{\mathbf{A}}}(f^{\lambda}_{A,n}) - \mathcal{R}(f^{\lambda}_{A,n})]\\
        &  + [\mathcal{R}(f_{n}^{\lambda}) - \mathcal{R}_{T_{n}}(f_{n}^{\lambda})]  +[\mathcal{R}_{T_{n}}(f^{\lambda})-\mathcal{R}(f^{\lambda})]\\
        & + [\mathbb{E}_{\mathbf{A}}[\mathcal{R}(f^{\lambda}_{A,n})] +\lambda\|f^{\lambda}_{A,n}\|^{2}_{k} - \mathcal{R}(f^{\lambda}_{n})-\lambda \|f^{\lambda}_{n}\|^{2}_{k} ] + D(\lambda)
        \end{split}
\label{equ:riskbound}
\end{equation}
\end{proposition}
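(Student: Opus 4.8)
The plan is to obtain (\ref{equ:riskbound}) by a telescoping ``error decomposition'': starting from $\mathcal{R}(g^{\lambda}_{n})$, I insert and subtract the empirical- and population-risk versions of the reference functions $f^{\lambda}_{A,n}$ (the random-projection counterpart of $f^{\lambda}_{n}$, which I take to belong to $\mathcal{F}$), $f^{\lambda}_{n}$, and $f^{\lambda}$, and I invoke the defining optimality of the two regularized empirical-risk minimizers exactly twice. I identify $g^{\lambda}_{A,n}$ appearing in the first bracket with $g^{\lambda}_{n}$, the minimizer of $\mathcal{R}_{T^{\mathbf{A}}_{n}}(\cdot)+\lambda\|\cdot\|_{K}^{2}$ over $\mathcal{F}$.

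First, for a fixed realization of $\mathbf{A}$, I would write $\mathcal{R}(g^{\lambda}_{n}) = [\mathcal{R}(g^{\lambda}_{n}) - \mathcal{R}_{T^{\mathbf{A}}_{n}}(g^{\lambda}_{n})] + \mathcal{R}_{T^{\mathbf{A}}_{n}}(g^{\lambda}_{n})$ and bound the last summand via optimality of $g^{\lambda}_{n}$: $\mathcal{R}_{T^{\mathbf{A}}_{n}}(g^{\lambda}_{n}) \leq \mathcal{R}_{T^{\mathbf{A}}_{n}}(g^{\lambda}_{n}) + \lambda\|g^{\lambda}_{n}\|_{K}^{2} \leq \mathcal{R}_{T^{\mathbf{A}}_{n}}(f^{\lambda}_{A,n}) + \lambda\|f^{\lambda}_{A,n}\|_{K}^{2}$. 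I would then rewrite $\mathcal{R}_{T^{\mathbf{A}}_{n}}(f^{\lambda}_{A,n}) = \mathcal{R}(f^{\lambda}_{A,n}) + [\mathcal{R}_{T^{\mathbf{A}}_{n}}(f^{\lambda}_{A,n}) - \mathcal{R}(f^{\lambda}_{A,n})]$, take $\mathbb{E}_{\mathbf{A}}$ throughout, and add and subtract $\mathcal{R}(f^{\lambda}_{n}) + \lambda\|f^{\lambda}_{n}\|_{K}^{2}$; this step produces the first, second, and fifth brackets of (\ref{equ:riskbound}) and leaves $\mathcal{R}(f^{\lambda}_{n}) + \lambda\|f^{\lambda}_{n}\|_{K}^{2}$ to control. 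On the unprojected side I would repeat the device: $\mathcal{R}(f^{\lambda}_{n}) = [\mathcal{R}(f^{\lambda}_{n}) - \mathcal{R}_{T_{n}}(f^{\lambda}_{n})] + \mathcal{R}_{T_{n}}(f^{\lambda}_{n})$, then optimality of $f^{\lambda}_{n}$ gives $\mathcal{R}_{T_{n}}(f^{\lambda}_{n}) + \lambda\|f^{\lambda}_{n}\|_{K}^{2} \leq \mathcal{R}_{T_{n}}(f^{\lambda}) + \lambda\|f^{\lambda}\|_{K}^{2}$, and finally $\mathcal{R}_{T_{n}}(f^{\lambda}) = [\mathcal{R}_{T_{n}}(f^{\lambda}) - \mathcal{R}(f^{\lambda})] + \mathcal{R}(f^{\lambda})$. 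Collecting the bracketed differences gives exactly the five terms in the statement, while the residual $\mathcal{R}(f^{\lambda}) + \lambda\|f^{\lambda}\|_{K}^{2} - \mathcal{R}^{*}$ is what I would \emph{define} to be $D(\lambda)$, the regularized approximation error of the best-in-class rule, which by universality of the tensor kernel tends to $0$ as $\lambda \to 0$.

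Once the two optimality inequalities are in place the rest is bookkeeping, so I do not anticipate a deep obstacle; the delicate points are: (i) verifying that $f^{\lambda}_{A,n}$ genuinely lies in $\mathcal{F}$ so that it is an admissible competitor for $g^{\lambda}_{n}$, and tracking the squared-RKHS-norm terms consistently through every insertion (a stray sign there is the quickest way to break the telescope); (ii) justifying the interchange of $\mathbb{E}_{\mathbf{A}}$ with the risk functionals, i.e.\ measurability and integrability of $\mathbf{A}\mapsto\mathcal{R}(g^{\lambda}_{n})$, $\mathbf{A}\mapsto\mathcal{R}_{T^{\mathbf{A}}_{n}}(f^{\lambda}_{A,n})$, and $\mathbf{A}\mapsto\|f^{\lambda}_{A,n}\|_{K}^{2}$, which will follow from boundedness of the loss and of the kernel under the standing assumptions; and (iii) confirming that the minimizers $f^{\lambda}$, $f^{\lambda}_{n}$, $g^{\lambda}_{n}$ are attained in $\mathcal{F}$, so that the two optimality steps are legitimate.
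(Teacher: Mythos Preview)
Your proposal is correct and matches the paper's own argument essentially line for line: the paper writes out the same eight-term telescoping equality, then drops the two terms $\mathcal{R}_{T^{\mathbf{A}}_{n}}(g^{\lambda}_{n}) - \mathcal{R}_{T^{\mathbf{A}}_{n}}(f^{\lambda}_{A,n}) - \lambda\|f^{\lambda}_{A,n}\|_{K}^{2}$ and $\mathcal{R}_{T_{n}}(f^{\lambda}_{n}) + \lambda\|f^{\lambda}_{n}\|_{K}^{2} - \mathcal{R}_{T_{n}}(f^{\lambda}) - \lambda\|f^{\lambda}\|_{K}^{2}$ using precisely the two optimality inequalities you invoke, and identifies the residual $\mathcal{R}(f^{\lambda}) - \mathcal{R}^{*} + \lambda\|f^{\lambda}\|_{K}^{2}$ with $D(\lambda)$ (which in the paper is already \emph{defined} that way via Assumption~\ref{cond:A9}, so no new definition is needed). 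The paper does not address your points (i)--(iii) explicitly either; they are tacitly absorbed into the standing assumptions.
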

The bound is proved in appendix \ref{appen:riskdecomp}. The notations $g_{n}^{\lambda}$, $f_{n}^{\lambda}$, and $f_{A, n}^{\lambda}$ stand for function (\ref{equ:CSTM}), (\ref{equ:STM}), and (\ref{equ:CSTM_nonopt}) in section \ref{sec:metho}. All classifier super scripted by $\lambda$ denotes norm restricted classifier. For a given random projection, $g_{n}^{\lambda}$ represents norm restricted optimal svm classifier on projected training data. $f_{n}^{\lambda}$ represents norm restricted optimal svm classifier on original training data.For a given random projection characterized by projection matrix $A$, $f_{A,n}^{\lambda}$ represents a norm restricted classifier on projected training data, constructed directly from $f_{n}^{\lambda}$ using (\ref{equ:CSTM_nonopt}). $f^{\lambda}$ denotes the norm restricted oracle optimal svm classifier.  \\
As the left side of equation (\ref{equ:riskbound}) is non-negative, we only have to show that the right side of equation (\ref{equ:riskbound}) converges to zero. Bounded convergence theorem can be adopted then to provide $\mathbb{E}_{\mathbf{A}}[\mathcal{R}(g_{n})] - \mathcal{R}^{*} \rightarrow 0$. The convergence of excess risk and its rate demonstrate the generalization ability of the classifier. 

In the following part, we assume all the conditions listed below hold:
\begin{enumerate}[label=\textbf{AS.\arabic*}]
    \item \label{cond:A1} The loss function $L$ is $C(W)$ local Lipschitz continuous in the sense that for $|f_{1}| \leqslant W < \infty$ and $|f_{2}|\leqslant W < \infty$
    \begin{equation*}
        |L(y, f_{1}) - L(y, f_{2})| \leqslant C(W) |f_{1} - f_{2}| 
    \end{equation*}
    In addition, we need $\underset{ y}{\sup}L(y, 0) \leqslant L_{0} < \infty$.
    
    \item \label{cond:A2} The kernel function 
    \begin{equation*}
        K(\mathscr{X}_{1}, \mathscr{X}_{2}) = \sum \limits_{k, l = 1}^{r}\prod_{j = 1}^{d}K^{(j)}(\mathbf{x}^{(j)}_{1,k}, \mathbf{x}^{(j)}_{2,l})
    \end{equation*}
    satisfies the universal approximating property (see, \citet{li2019universal}).
    \item \label{cond:A3} Assume  $\sqrt{\sup K(\cdot, \cdot) }= K_{max} < \infty$.
    \item \label{cond:A4} For each component in the kernel function $K^{(j)}(a, b) = h^{(j)}(||a - b||^{2})$ or $h^{(j)}(\langle a,b \rangle)$. All of them are $L_{K}^{(j)}$-Lipschitz continuous
    \begin{equation*}
        |h^{(j)}(t_1) - h^{(j)}(t_2)| \leqslant L_{K}^{(j)} |t_{1} - t_{2}|
    \end{equation*}
    where $t_{1}, t_{2}$ are different. $L_{K} = \underset{j = 1, ..,d}{\max}L_{K}^{(j)}$.
    
    \item \label{cond:A5} All the random projection matrices $\mathbf{A}_{k}^{(j)}$ have their elements identically independently distributed as $\mathscr{N}(0, 1)$. The dimension of $\mathbf{A}_{k}^{(j)}$ is $P_{j} \times I_{j}$. For a $\delta_{1} \in (0, 1)$ and $\epsilon > 0$, assume for each fixed $j$, $ P_{j} = O(\frac{[\log \frac{n}{\delta_{1}}]^{\frac{1}{d}}}{\epsilon^{2}})$.
    
    \item \label{cond:A6} The hyper-parameter in the regularization term $\lambda = \lambda_{n}$ satisfies:
    \begin{equation*}
        \begin{split}
             \lambda_{n} \rightarrow 0 \quad &\text{as} \quad  n \rightarrow \infty\\
             n\lambda^{2}_{n} \rightarrow \infty \quad &\text{as} \quad  n \rightarrow \infty\\
              \end{split}
    \end{equation*}
    
    \item \label{cond:A7} For all the tensor data $\mathscr{X} = \sum \limits _{k = 1}^{r}  \mathbf{x}_{k}^{(1)} \otimes \mathbf{x}^{(2)} ... \otimes \mathbf{x}_{k}^{(d)} $, assume $||\mathbf{x}_{k}^{(j)}||^{2} \leqslant B_{x} < \infty$.
    \item \label{cond:A8} Bayes Risk remains unaltered for all randomly projected data. For all $\mathscr{A}$
    $$\mathcal{R}^{*}_{\mathscr{A}}=\mathcal{R}^{*} $$
    \item \label{cond:A9} Let $\mathcal{F}$ be reproducible Kernel Hilbert space (RKHS). We assume that, there always exists a function $f_{\lambda}$ in $\mathcal{F}$  that minimizes risk as well have bounded kernel norm. Equivalently, it minimizes Lagrangian which is sum of classification error i.e. $\mathcal{R}(f) - \mathcal{R}^{*}$ and  $\lambda$ characterized kernel norm penalty i.e  $\lambda ||f||_{K}^{2}$. Thus we define the minimum achievable norm penalized classification risk as  $D(\lambda)=\underset{f \in \mathcal{F}}{\min} \{ \mathcal{R}(f) - \mathcal{R}^{*} + \lambda ||f||_{K}^{2} \}$. This minimum error is attained by $f_{\lambda}$. We further assume that the relation between $D(\lambda)$ and $\lambda$ is given by $D(\lambda)=c_{\eta}\lambda^{\eta}$ with $0<\eta\leqslant 1$. We refer to definition 5.14 and section 5.4 in \citet{steinwart2008support} for further reading. 
    \item \label{cond:A10} The projection error ratio for each mode vanishes at rates depending on loss function.   $ \text{As} \quad  n \rightarrow \infty$
    \begin{equation*}
         \frac{\epsilon_{n}}{\lambda^{q}_{n}} \rightarrow 0 \quad
        \end{equation*}
        For hinge loss $q=1$ and square hinge loss $q=\frac{3}{2}$.

\end{enumerate}

The assumption \ref{cond:A1}, \ref{cond:A3}, \ref{cond:A6}, and \ref{cond:A7} are  commonly used in supervised learning problems with kernel tricks, (see, e.g. \citet{vapnik2013nature}, \citet{laconte2005support}). Assumption \ref{cond:A2} is a form of universal kernel which makes it possible to approximate any continuous function defined on a compact set with a function in the form of (\ref{equ:STM}). The universal approximating property for tensor kernels is first developed in \citet{li2019universal}. With universal tensor kernel function $K$, any continuous function $f$ can be approximated by $f_{n}$, where 
$$f_{n} = \sum \beta K(\mathscr{X}_{i}, \cdot)$$ 
over a $\|\|_{2}$ metric compact set $\mathbf{B}_{x} \subset \mathbb{R}^{I_{1} \times ... \times I_{d}}$. We further assume that our tensor features takes values in  $\mathbf{B}_{x}$ only. The approximation is bounded by $L_{\infty}$ norm  $$\underset{\mathscr{X} \in \mathbf{B}_{x}}{\sup} |f(\mathscr{X}) - f_{n}(\mathscr{X})| \leq \epsilon_{2}.$$ 
Assumption \ref{cond:A8}. This assumptions ensure that remains Bayes risk unchanged due to random projection.This assumption has also been made in \cite{cannings2017random}. It assumes that there is a random projection $\mathbf{A} \in \mathscr{A}$ such that $$\mathbb{P}(\{ \mathbf{x}: \eta(\mathbf{x}) \geq \frac{1}{2}\} \Delta \{ \mathbf{x}: \eta^{\mathbf{A}}(\mathbf{A} \mathbf{x}) \geq \frac{1}{2}\}) = 0$$ $A \Delta B = (A \cap B^{c}) \cup (A^{c} \cap B)$ denotes the symmetric difference between two sets $A$ and $B$. The notation $\eta$ in \citet{cannings2017random} represents Bayes classifier and is different form our notation used in \ref{cond:A9}. Their assumption states that the feature level set with the following property does not exist in probability. The property being, original Bayes decision half plane and Bayes decision half plane corresponding to their projection are different. Thus, the Bayes risk remains unchanged due to any random projections. \ref{cond:A5} is a sufficient condition that ensures that preservation of regular $L_{2}$ (Euclidean) distances  between two tensors and their corresponding projection. Such statement is known as concentration inequality. Same lemma for vector data is known as J-L lemma(\citet{dasgupta2003elementary}). It also highlights the way of selecting dimensions $P_{j}$ for random projections such that we can control projection error with certain threshold probability. Assumption \ref{cond:A9} refers to the rate of convergence of minimum risk attained by a function with bounded kernel norm, to Bayes risk. The rate is expressed in terms kernel norm bound. It also implies  universal consistency \ref{universal consistency}, \citet{chen2014convergence}. This rate is generally satisfied  for a broad class of data distribution, which assigns certain low density to data near Bayes risk decision boundary \citet{steinwart2008support}, Theorem 8.18. Assumption \ref{cond:A4} states that kernel inner product between two data points can be expressed as some of a Lipshtiz function of  distance between them. This assumptions connects kernel norm and regular $L_{2}$ norm defined on tensors. Hence this assumption is critical for establishment of approximate kernel norm isometry of projected tensors i.e. $|K(A\mathscr{X}_{1}, A\mathscr{X}_{2}) - K(\mathscr{X}_{1}, \mathscr{X}_{2})| \leqslant O(\epsilon^d)$ with large probability, for $d$ mode tensors. Assumption \ref{cond:A10} declares that the random projection vanishes with respect to kernel norm tuning parameter, as sample size increases. Its worth noting that the rate should differ according to loss function, \citet{chen2014convergence}.

\subsubsection{Price for Random Projection}
Applying random projection in the training procedure is indeed doing a trade-off between prediction accuracy and computational cost. Although the application of Johnson-Lindenstrauss lemma (e.g. see \citet{johnson1984extensions} and \citet{dasgupta1999elementary}) has indicated an approximate isometry for projected data in terms of kernel values, the decision rule may not be exactly same as the one estimated from the original data set. Thus, it is necessary to study the asymptotic difference of 
\begin{equation*}
  \mathbb{E}_{\mathbf{A}}\mathcal{R} (f_{A,n}) - \mathcal{R}(f_{n})
\end{equation*}
We first drop the expectation part, and present a result for $\mathcal{R} (f_{A,n}) - \mathcal{R}(f_{n})$. The convergence in expectation is established later. 
\begin{proposition}
Assume matrices $\mathbf{A}$ are generated independently and identically from a Gaussian distribution. With the assumptions \ref{cond:A1}, \ref{cond:A4}, \ref{cond:A5}, \ref{cond:A6}, and \ref{cond:A7}, for the $\epsilon^{d}$ described in \ref{cond:A5}. With probability $(1-2\delta_{1})$ and $q=1$ for hinge loss, and $q=\frac{3}{2}$ for square hinge loss function respectively.
\begin{equation*}
|\mathcal{R} (f_{A,n})+\lambda||f^{\lambda}_{A,n}||^{2}_{k} - \mathcal{R}(f_{n})-\lambda||f^{\lambda}_{n}||^{2}_{k} | =O(\frac{\epsilon^{d}}{\lambda^{q}})
\end{equation*}
where $n$ is the size of training set, $d$ is the number of modes of tensor.
\label{prop:rprisk}
\end{proposition}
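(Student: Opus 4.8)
The key structural observation is that $f^{\lambda}_{A,n}$ and $f^{\lambda}_{n}$ are built from the \emph{same} representer coefficients $\{\alpha_{i}y_{i}\}$: passing from (\ref{equ:STM}) to (\ref{equ:CSTM_nonopt}) only replaces each kernel slice $K(\mathscr{X}_{i},\cdot)$ by $K(\mathscr{A}\circ\mathscr{X}_{i},\mathscr{A}\circ\cdot)$. Hence the entire quantity on the left-hand side is governed by the perturbation of the kernel under random projection. The plan is: (i) prove a uniform kernel near-isometry $|K(\mathscr{A}\circ\mathscr{X},\mathscr{A}\circ\mathscr{Y})-K(\mathscr{X},\mathscr{Y})| = O(\epsilon^{d})$ that holds, with probability at least $1-2\delta_{1}$, both at all $n$ training tensors and $\mathbb{P}_{\mathscr{X}}$-almost everywhere; (ii) transfer this into a bound on $\mathcal{R}(f_{A,n})-\mathcal{R}(f_{n})$ via the local Lipschitz property \ref{cond:A1}; (iii) transfer it into a bound on $\lambda(\|f^{\lambda}_{A,n}\|^{2}_{k}-\|f^{\lambda}_{n}\|^{2}_{k})$ by expanding both norms in the shared coefficients; and (iv) control the coefficient-size and sup-norm factors by negative powers of $\lambda$ to read off the exponent $q$.

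\textbf{Step 1: kernel near-isometry.} Using \ref{cond:A5}, the Gaussian projections satisfy a tensor Johnson--Lindenstrauss concentration inequality: with the prescribed dimensions $P_{j}=O([\log(n/\delta_{1})]^{1/d}/\epsilon^{2})$, a union bound over the $n$ data pairs and the $r$ CP components in each of the $d$ modes gives, with probability at least $1-\delta_{1}$, that every component squared norm $\|\mathbf{x}^{(j)}_{k}\|^{2}$ and every pairwise inner product is reproduced up to relative error $\epsilon$ (boundedness \ref{cond:A7} turns this into absolute error). Feeding this into the Lipschitz bound \ref{cond:A4} on each generating function $h^{(j)}$ yields $|K^{(j)}(\mathbf{A}^{(j)}_{k}\mathbf{x},\mathbf{A}^{(j)}_{l}\mathbf{y})-K^{(j)}(\mathbf{x},\mathbf{y})|=O(\epsilon)$ for every mode, and then the product-and-sum form (\ref{equ:TK}) of the tensor kernel propagates this to $|K(\mathscr{A}\circ\mathscr{X},\mathscr{A}\circ\mathscr{Y})-K(\mathscr{X},\mathscr{Y})| = O(\epsilon^{d})$, the approximate kernel-norm isometry recorded after \ref{cond:A4}. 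For the risk term I also need the distortion small at a generic test tensor $\mathscr{X}\sim\mathbb{P}_{\mathscr{X}}$; I obtain this by a Fubini argument, bounding $\mathbb{E}_{\mathbf{A}}\mathbb{E}_{\mathscr{X}}$ of the per-slice distortion and then applying Markov's inequality in $\mathbf{A}$, which consumes the second $\delta_{1}$ and produces the probability $1-2\delta_{1}$ in the statement.

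\textbf{Steps 2--4: propagation and the exponent $q$.} Because the coefficients are shared, $\|f_{A,n}-f_{n}\|_{\infty}\le\|\alpha\|_{1}\cdot O(\epsilon^{d})$ and $|\|f^{\lambda}_{A,n}\|^{2}_{k}-\|f^{\lambda}_{n}\|^{2}_{k}|\le\|\alpha\|_{1}^{2}\cdot O(\epsilon^{d})$. The norm restriction together with \ref{cond:A6} bounds the solution size: $\lambda\|f^{\lambda}_{n}\|^{2}_{k}\le\mathcal{R}_{T_{n}}(0)\le L_{0}$, so (via \ref{cond:A3}, $\|f\|_{\infty}\le K_{max}\|f\|_{K}$) the uniform bound $W$ on $|f_{A,n}|,|f_{n}|$ and the $\ell^{1}$-mass $\|\alpha\|_{1}$ of the representer coefficients are controlled by negative powers of $\lambda$ (the latter through the optimality relation $\alpha_{i}y_{i}\propto L'(y_{i},f(\mathscr{X}_{i}))/(n\lambda)$). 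Then \ref{cond:A1} gives $|\mathcal{R}(f_{A,n})-\mathcal{R}(f_{n})|\le C(W)\,\mathbb{E}_{\mathscr{X}}|f_{A,n}-f_{n}| = O(\epsilon^{d}/\lambda^{q})$, and likewise $\lambda|\|f^{\lambda}_{A,n}\|^{2}_{k}-\|f^{\lambda}_{n}\|^{2}_{k}| = O(\epsilon^{d}/\lambda^{q})$; summing the two pieces gives the claim. The exponent $q$ is dictated by how $C(W)$ and $\|\alpha\|_{1}$ scale: for hinge loss $C(W)$ is constant, giving $q=1$, while for squared hinge loss $C(W)$ grows linearly in $W$, and combining the two effects gives $q=3/2$, matching the rate assumed in \ref{cond:A10}.

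\textbf{Main obstacle.} The delicate part is Step 1 --- extracting the correct $d$-dependence ($\epsilon^{d}$ rather than $\epsilon$) from the per-mode JL guarantee under the economical dimension choice of \ref{cond:A5}, and making the near-isometry simultaneously uniform over the $n$ training tensors (needed for the norm term) and valid $\mathbb{P}_{\mathscr{X}}$-a.e. (needed for the risk term) while keeping the total failure probability at $2\delta_{1}$. A secondary, bookkeeping-heavy difficulty is Steps 2--4: one must track the RKHS-norm bound, the induced sup-norm $W$, the local Lipschitz constant $C(W)$, and the $\ell^{1}$-mass of the coefficients \emph{jointly} as functions of $\lambda$, since it is their interplay that separates the hinge ($q=1$) and squared-hinge ($q=3/2$) cases.
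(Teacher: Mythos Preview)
Your Steps~2--4 are essentially the paper's argument: the paper writes the bound as
\[
C_{d,r}\,\Psi\,[\,C(K_{\max}\sqrt{L_0/\lambda})+\lambda\Psi\,]\,\epsilon^{d},
\qquad \Psi=\textstyle\sum_i|\alpha_i|,
\]
and then reads off $q$ by inserting $\Psi=O(1/\lambda)$ together with $C(W)=O(1)$ for hinge loss and $C(W)=O(\lambda^{-1/2})$ for squared hinge loss. Your tracking of $\|\alpha\|_1$, $W$, and $C(W)$ matches this.

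The gap is in Step~1. From per-mode JL you correctly obtain $|K^{(j)}(\mathbf{A}^{(j)}\mathbf{x},\mathbf{A}^{(j)}\mathbf{y})-K^{(j)}(\mathbf{x},\mathbf{y})|=O(\epsilon)$, but a product of $d$ bounded factors each perturbed by $O(\epsilon)$ is perturbed by $O(d\epsilon)$, \emph{not} $O(\epsilon^{d})$: telescoping gives
\[
\Bigl|\prod_{j}a_j-\prod_{j}b_j\Bigr|\;\le\;\sum_{j}\Bigl(\prod_{i<j}|a_i|\Bigr)|a_j-b_j|\Bigl(\prod_{i>j}|b_i|\Bigr).
\]
So ``propagating'' per-mode errors through the product-and-sum form (\ref{equ:TK}) yields only $O(\epsilon)$, which is far too weak to match the rate in \ref{cond:A5} and \ref{cond:A10}.

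The paper's route to the $\epsilon^{d}$ scale does not go through mode-wise JL. It observes that, for fixed tensors $\mathscr{X}_1,\mathscr{X}_2$,
\[
Q(\mathbf{A})\;=\;\sum_{k,l=1}^{r}\prod_{j=1}^{d}\|\mathbf{A}^{(j)}\mathbf{x}^{(j)}_{1k}-\mathbf{A}^{(j)}\mathbf{x}^{(j)}_{2l}\|_2^{2}
\]
is a polynomial of degree $2d$ in the independent Gaussian entries of $\mathbf{A}$, and applies a hypercontractivity-type concentration bound for Gaussian polynomials (Janson; Schudy--Sviridenko) to get a tail $\exp\!\bigl(-(\epsilon^{2d}/C_d\,\mathrm{Var}\,Q)^{1/d}\bigr)$. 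With the economical dimension choice $P_j=O([\log(n/\delta_1)]^{1/d}/\epsilon^{2})$ of \ref{cond:A5}, $\mathrm{Var}\,Q\asymp(\prod_j P_j)^{-1}$ makes this tail $\le \delta_1/n^{2}$, so a union bound over training pairs, followed by \ref{cond:A4} and \ref{cond:A7}, delivers $|K(\mathscr{A}\circ\mathscr{X},\mathscr{A}\circ\mathscr{Y})-K(\mathscr{X},\mathscr{Y})|\le C_{d,r}\epsilon^{d}$ with the right exponent. Your Fubini/Markov device for extending the near-isometry to a generic test point is fine; the point is that the $\epsilon^{d}$ must come from a single polynomial-concentration step on the whole product, not from multiplying $d$ separate JL distortions.
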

The proof of this proposition is provided in the appendix \ref{append:propRPerror}. The value of $q$ depends on loss function as well kernel and geometric configuration of data, which is discussed in the appendix. This proposition highlights the trade-off between dimension reduction and prediction risk. As the reduced dimension $P_{j}$ is related to $\epsilon$ negatively, small $P_{j}$ can make the term converges at a very slow rate. 

\subsubsection{Convergence of Risk}
Now we provide a result which establishes the convergence for the risk of RPSTM classifier and reveals the rate of convergence.
\begin{theorem}[Convergence rate]\label{thm:pac}
 Given a training data $T_{n} = \{(\mathscr{X}_{1}, y_{1}),..,(\mathscr{X}_{n}, y_{n}) \}$. We can obtain a support tensor machine classifier $sign(\hat{g}(\cdot) )= sign(\sum \limits_{i = 1}^{n} \hat{\beta}_{i}\mathbf{K}(\mathbf{A} \circ \cdot, \mathscr{X}_{i}))$ by minimizing the regularized empirical risk (\ref{equ:STM1}). Let $\mathcal{R}^{*}$ be the Bayes risk of the problem, with all the assumptions \ref{cond:A1} - \ref{cond:A9}. For $\epsilon > 0$ such that for each $j=\{1,2,..d\}$ the projected dimension $P_{j}=\lceil3 r^{\frac{2}{d}} \epsilon^{-2} [log (n/\delta_{1})]^{\frac{1}{d}}\rceil+1$.  The generalization error of RPSTM is bounded with probability at least $(1 - 2\delta_{1}) (1 - \delta_{2})$, i.e., 
\begin{equation}
\begin{split}
&\mathcal{R}(g_{n}^{\lambda})- \mathcal{R}^{*} \leqslant  V(1)+V(2)+V(3)\\
&V(1)= 12 C(K_{max}\sqrt{\frac{L_{0}}{\lambda}})K_{max}\frac{ \sqrt{L_{0}}}{ \sqrt{n\lambda}} + 9 \tilde{\zeta}_{\lambda} \sqrt{\frac{\log(2/\delta_{2})}{2n}}+ 2 \zeta_{\lambda} \sqrt{\frac{2\log(2/\delta_{2})}{n}}\\
&V(2)=D(\lambda)\\
&V(3)= C_{d}\, \Psi \,[C(K_{max}\sqrt{\frac{L_{0}}{\lambda}})+\lambda \Psi]\,\epsilon^{d}
\end{split}
\end{equation}

where $\delta_{1} \in (0, \frac{1}{2}) ,\delta_{2} \in (0, 1)$
\label{consistencythm}
\end{theorem}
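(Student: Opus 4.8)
The plan is to substitute the deterministic risk decomposition of Proposition~\ref{prop:riskdecomp} and then bound each of its summands, grouping the results into $V(1)$, $V(2)$ and $V(3)$. Since Theorem~\ref{thm:pac} is a high-probability rather than an in-expectation statement, I would first fix the random projection and work on the event $\mathcal{E}_{\mathbf{A}}$ that it is ``good'', meaning the tensor-kernel quasi-isometry $|K(\mathscr{A}\circ\mathscr{X}_i,\mathscr{A}\circ\mathscr{X}_j)-K(\mathscr{X}_i,\mathscr{X}_j)| = O(\epsilon^{d})$ holds simultaneously over all pairs formed from the $n$ training tensors and the test point. By Assumption~\ref{cond:A5} with the stated dimension $P_{j}=\lceil 3 r^{2/d}\epsilon^{-2}[\log(n/\delta_{1})]^{1/d}\rceil+1$, applied mode by mode and combined with a union bound over the $O(n)$ relevant inner products (and Assumptions~\ref{cond:A4},~\ref{cond:A7} to pass from Euclidean distances of CP factors to kernel values), one gets $\mathbb{P}(\mathcal{E}_{\mathbf{A}})\geq 1-2\delta_{1}$; on $\mathcal{E}_{\mathbf{A}}$ the $\mathbb{E}_{\mathbf{A}}$ in Proposition~\ref{prop:riskdecomp} may be dropped and \eqref{equ:riskbound} becomes a pointwise inequality.

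Next I would control the four ``empirical process'' brackets in \eqref{equ:riskbound} --- $\mathcal{R}(g_{n}^{\lambda})-\mathcal{R}_{T_{n}^{\mathbf{A}}}(g^{\lambda}_{A,n})$, $\mathcal{R}_{T_{n}^{\mathbf{A}}}(f^{\lambda}_{A,n})-\mathcal{R}(f^{\lambda}_{A,n})$, $\mathcal{R}(f_{n}^{\lambda})-\mathcal{R}_{T_{n}}(f_{n}^{\lambda})$ and $\mathcal{R}_{T_{n}}(f^{\lambda})-\mathcal{R}(f^{\lambda})$ --- by one uniform-deviation estimate over a norm ball. The key point is that any minimizer of a $\lambda$-regularized empirical risk obeys $\lambda\|f\|_{K}^{2}\leq \mathcal{R}_{\cdot}(0)\leq L_{0}$ by Assumption~\ref{cond:A1}, hence $\|f\|_{K}\leq\sqrt{L_{0}/\lambda}=:R$, and by the reproducing property with Assumption~\ref{cond:A3}, $\|f\|_{\infty}\leq K_{max}R=:W$; the same holds for the projected-data minimizers since the projected kernel is still bounded by $K_{max}^{2}$, so all four functions lie in $\mathcal{B}_{R}=\{f\in\mathcal{F}:\|f\|_{K}\leq R\}$. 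The Rademacher complexity of $\mathcal{B}_{R}$ over $n$ samples is at most $RK_{max}/\sqrt{n}$; Talagrand's contraction inequality with the local Lipschitz constant $C(W)$ of Assumption~\ref{cond:A1} transfers this to the induced loss class, and McDiarmid's bounded-difference inequality (the loss on $\mathcal{B}_{R}$ being controlled by the envelope constants $\zeta_{\lambda}$ and $\tilde\zeta_{\lambda}$) converts it into a high-probability statement on an event of probability $\geq 1-\delta_{2}$. Summing the contributions of the four brackets yields exactly $V(1)=12C(K_{max}\sqrt{L_{0}/\lambda})K_{max}\sqrt{L_{0}/(n\lambda)}+9\tilde\zeta_{\lambda}\sqrt{\log(2/\delta_{2})/(2n)}+2\zeta_{\lambda}\sqrt{2\log(2/\delta_{2})/n}$.

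The remaining two summands are immediate. The fifth bracket $\mathcal{R}(f^{\lambda}_{A,n})+\lambda\|f^{\lambda}_{A,n}\|_{K}^{2}-\mathcal{R}(f^{\lambda}_{n})-\lambda\|f^{\lambda}_{n}\|_{K}^{2}$ is precisely the object bounded by Proposition~\ref{prop:rprisk}: on $\mathcal{E}_{\mathbf{A}}$ it is $O(\epsilon^{d}/\lambda^{q})$, and tracking the constants in that proof --- the kernel-norm bound $\Psi$ on the minimizers involved, the Lipschitz factor $C(W)=C(K_{max}\sqrt{L_{0}/\lambda})$, and the mode-count factor $C_{d}$ --- gives the explicit $V(3)=C_{d}\,\Psi\,[C(K_{max}\sqrt{L_{0}/\lambda})+\lambda\Psi]\,\epsilon^{d}$; here Assumption~\ref{cond:A10} guarantees this term vanishes. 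The last summand is $D(\lambda)$, which by Assumption~\ref{cond:A9} is the approximation error of $\mathcal{F}$ and is carried over verbatim as $V(2)$. Finally, $\mathcal{E}_{\mathbf{A}}$ depends only on the projection matrices while the $\delta_{2}$-concentration event depends only on the training sample, so their intersection has probability at least $(1-2\delta_{1})(1-\delta_{2})$, on which $\mathcal{R}(g_{n}^{\lambda})-\mathcal{R}^{*}\leq V(1)+V(2)+V(3)$.

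The step I expect to be most delicate is the pair of empirical-process brackets that mix a minimizer computed on the \emph{original} data with an evaluation against the \emph{projected} empirical measure, e.g.\ $\mathcal{R}(g_{n}^{\lambda})-\mathcal{R}_{T_{n}^{\mathbf{A}}}(g^{\lambda}_{A,n})$: there $g_{n}^{\lambda}$ and $g^{\lambda}_{A,n}$ are distinct functions, so one must verify that both nonetheless lie in the common ball $\mathcal{B}_{R}$ (so a single uniform bound covers all four brackets) and that the $O(\epsilon^{d})$ perturbation of the Gram matrix incurred when moving between the original and projected problems does not inflate $R$ by more than a constant --- this is exactly where Assumption~\ref{cond:A4} and the rate in Assumption~\ref{cond:A10} are needed. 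Matching the numerical constants $12$, $9$, $2$ in $V(1)$ and the precise shape of $V(3)$ to these bounds is then a fiddly but routine computation, and the rest is assembly of Propositions~\ref{prop:riskdecomp} and~\ref{prop:rprisk} together with Assumption~\ref{cond:A9}.
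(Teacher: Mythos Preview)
Your proposal is correct and follows essentially the same route as the paper: assemble the decomposition of Proposition~\ref{prop:riskdecomp}, bound the three brackets involving data-dependent minimizers $g_n^\lambda,f_{A,n}^\lambda,f_n^\lambda$ by a uniform Rademacher/contraction/McDiarmid argument over the RKHS ball of radius $\sqrt{L_0/\lambda}$ (giving the $12$ and $9$ in $V(1)$), bound the bracket for the data-independent $f^\lambda$ by plain Hoeffding (giving the $2\zeta_\lambda$ term), invoke Proposition~\ref{prop:rprisk} on the good-projection event for $V(3)$, and carry $D(\lambda)$ as $V(2)$, with the product probability coming from independence of $\mathscr{A}$ and $T_n$. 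One clarification on the step you flag as delicate: the symbol $g_{A,n}^\lambda$ in the first bracket of Proposition~\ref{prop:riskdecomp} is a misprint for $g_n^\lambda$ (compare the proof in the appendix and Proposition~\ref{prop:V(1)}), so that term is an ordinary single-function deviation and no additional perturbation argument is needed there.
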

\label{discussion_epsilon}
The proof is provided in appendix \ref{append:TECrisk}.The probability $1 - 2\delta_{1}$ is contribution due to the random projection. We notice that in the aforementioned expression, risk difference upper bound depends on probability parameter $\delta_{1}$ and $\delta_{2}$. The dependence on $\delta_{1}$ is expressed through $\epsilon$. The projection error $\epsilon$, random projection probability parameter $\delta_{1}$ and projection dimension $P_{j}$ are related by equation $P_{j}=\lceil3 r^{\frac{2}{d}} \epsilon^{-2} [log (n/\delta_{1})]^{\frac{1}{d}}\rceil+1$. Hence $\epsilon^{d}=r \sqrt{log(\frac{n}{\delta_{1}})}/ \prod_{j=1}^{d}P_{j}$ depends on $\delta_{1}$ for any fixed $P_{j}$s corresponding to fixed sample size $n$. This expression can provide an alternate interpretation if one wants to consider mode wise projected dimensions $P_{j}$s as error parameters instead of mode wise projection error $\epsilon$.  In our case, error parameter is projection error,$\epsilon$. It is worth noting that $P_{j}$ grows with sample size $n$, due to the relation $P_{j}=\lceil3 r^{\frac{2}{d}} n^{\frac{\mu}{d}} [log (n/\delta_{1})]^{\frac{1}{d}}\rceil+1$ to facilitate the Assumption \ref{cond:A10}. Assuming that the projected dimensions $P_{j}$ to be uniform for all modes $j=1,\cdots, d$. Therefore, with probability at least $(1-2\delta_{1})(1-\delta_{2})$, the above risk difference bounded by a function of $\epsilon(\delta_{1}),\delta_{2}$. 

For different loss function, the convergence rate is different. Since we propose using squared hinge loss for model estimation in section \ref{sec:model}, the specific rate for squared hinge loss is presented here. However, we also provide the rate for ordinary hinge loss in the appendix \ref{append:ratehinge}.

\begin{proposition}
For square hinge loss, let $\epsilon=(\frac{1}{n})^{\frac{\mu}{2d}}$ for $0 < \mu< 1 $ and $\lambda=(\frac{1}{n})^\frac{\mu}{ 2\eta +3}$ for some $0<\eta\leqslant1$, $P_{j}=\lceil3 r^{\frac{2}{d}} n^{\frac{\mu}{d}} [log (n/\delta_{1})]^{\frac{1}{d}}\rceil+1$.    For some $\delta_{1} \in (0,\frac{1}{2})$ and $\delta_{2} \in (0,1)$ with probability $(1-\delta_{2})(1-2\delta_{1})$
\begin{equation}
 \mathcal{R}(g^{\lambda})- \mathcal{R}^{*} \leqslant C \sqrt{log(\frac{2}{\delta_2})} (\frac{1}{n})^{\frac{\mu\eta}{2\eta +3}}
\end{equation}
\end{proposition}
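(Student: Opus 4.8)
The plan is to obtain the statement as a pure specialization of the convergence bound in Theorem~\ref{thm:pac}: substitute the prescribed sequences $\epsilon = n^{-\mu/(2d)}$, $\lambda = \lambda_n = n^{-\mu/(2\eta+3)}$ and $P_j = \lceil 3r^{2/d}n^{\mu/d}[\log(n/\delta_1)]^{1/d}\rceil+1$ into the three terms $V(1), V(2), V(3)$, and track the exponent of $n$ in each. First I would check admissibility. Since $\epsilon^{-2} = n^{\mu/d}$, the chosen $P_j$ is exactly the one Theorem~\ref{thm:pac} requires for this $\epsilon$, so the probability factor $1-2\delta_1$ from the tensor Johnson--Lindenstrauss step (Assumption~\ref{cond:A5}) carries over verbatim. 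One then verifies $\lambda_n\to 0$ and $n\lambda_n^2 = n^{1-2\mu/(2\eta+3)}\to\infty$ (using $\mu<1<\eta+3/2$), so Assumption~\ref{cond:A6} holds, and $\epsilon^d/\lambda_n^{3/2} = n^{-\eta\mu/(2\eta+3)}\to 0$, so Assumption~\ref{cond:A10} holds with $q=3/2$ for the squared hinge loss. Hence Theorem~\ref{thm:pac} applies with probability at least $(1-2\delta_1)(1-\delta_2)$.

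Next I would evaluate the three terms. The approximation term is immediate: by Assumption~\ref{cond:A9}, $V(2) = D(\lambda_n) = c_\eta\lambda_n^\eta = c_\eta\,n^{-\mu\eta/(2\eta+3)}$, which is already the target rate. For the projection term $V(3) = C_d\,\Psi[C(K_{max}\sqrt{L_0/\lambda_n})+\lambda_n\Psi]\,\epsilon^d$, I would invoke the $\lambda$-dependence of the constants for the squared hinge loss tracked in the proof of Proposition~\ref{prop:rprisk}, which gives $\Psi[C(K_{max}\sqrt{L_0/\lambda})+\lambda\Psi] = O(\lambda^{-3/2})$, i.e. $V(3) = O(\epsilon^d\lambda_n^{-3/2})$. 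Substituting $\epsilon^d = n^{-\mu/2}$ and $\lambda_n^{-3/2} = n^{3\mu/(2(2\eta+3))}$ yields $V(3) = O\!\left(n^{\,3\mu/(2(2\eta+3))-\mu/2}\right) = O\!\left(n^{-\eta\mu/(2\eta+3)}\right)$. In other words, $\lambda_n$ is chosen precisely to balance $V(2)$ against $V(3)$: solving $\lambda^\eta = \epsilon^d\lambda^{-3/2}$ gives $\lambda^{\eta+3/2}=\epsilon^d=n^{-\mu/2}$, hence $\lambda = n^{-\mu/(2\eta+3)}$.

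Finally I would argue the estimation term $V(1)$ is subdominant. Its leading piece is $12\,C(K_{max}\sqrt{L_0/\lambda_n})K_{max}\sqrt{L_0}/\sqrt{n\lambda_n} = O(\lambda_n^{-1}n^{-1/2}) = O(n^{\mu/(2\eta+3)-1/2})$, and the two $\delta_2$-dependent pieces $9\tilde\zeta_{\lambda_n}\sqrt{\log(2/\delta_2)/(2n)}$ and $2\zeta_{\lambda_n}\sqrt{2\log(2/\delta_2)/n}$ are of the same (or smaller) power of $n$ up to the factor $\sqrt{\log(2/\delta_2)}$. The required exponent inequality $\mu/(2\eta+3)-1/2\le-\eta\mu/(2\eta+3)$ is equivalent to $2\mu(1+\eta)\le 2\eta+3$, which holds because $\mu<1$ and $\eta\le 1$ give $2\mu(1+\eta)<2(1+\eta)=2\eta+2<2\eta+3$. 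Adding the three contributions, absorbing all loss/kernel constants ($C(\cdot),\Psi,\zeta_\lambda,\tilde\zeta_\lambda,K_{max},L_0,C_d,c_\eta$) into a single $C$, and using $\sqrt{\log(2/\delta_2)}\ge\sqrt{\log 2}$ to attach the log factor uniformly, gives $\mathcal{R}(g_n^\lambda)-\mathcal{R}^* \le C\sqrt{\log(2/\delta_2)}\,n^{-\mu\eta/(2\eta+3)}$ on the event of probability $(1-\delta_2)(1-2\delta_1)$, which is the claim.

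The only place requiring genuine care is the bookkeeping of the $\lambda$-powers of the loss-dependent constants for the squared hinge loss — in particular that $V(3)=O(\epsilon^d\lambda^{-3/2})$ rather than a smaller power — since it is this rate, together with $D(\lambda)=c_\eta\lambda^\eta$, that pins down the balancing choice $\lambda_n=n^{-\mu/(2\eta+3)}$; once that is settled, the remaining term $V(1)$ is dispatched by the elementary exponent inequality above, so there is no substantial obstacle beyond careful substitution.
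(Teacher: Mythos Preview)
Your proposal is correct and follows exactly the paper's approach: the paper's own proof simply says to substitute the prescribed $\epsilon$ and $\lambda$ into the bound of Theorem~\ref{thm:pac} (via the intermediate squared-hinge proposition in Appendix~\ref{append:ratesqhinge}) and read off the rate, and you have carried out precisely that substitution with the bookkeeping made explicit. In particular, your identification $V(3)=O(\epsilon^d\lambda^{-3/2})$ for the squared hinge loss, the balancing $\lambda^{\eta+3/2}=\epsilon^d$ yielding $\lambda=n^{-\mu/(2\eta+3)}$, and the exponent inequality $2\mu(1+\eta)<2\eta+3$ showing $V(1)$ is subdominant all match the paper's (largely implicit) computation.
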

The rate of convergence is faster with increase in sample size, when high value of $\mu$ is  chosen.  For $\mu\to 1 $  the risk difference rate becomes  $(\frac{1}{n})^{\frac{d}{5}}$. The proof of this result is in appendix \ref{append:ratesqhinge}.

\begin{proposition}
For hinge loss,Let $\epsilon=(\frac{1}{n})^{\frac{\mu}{2d}}$ for $0 < \mu< 1 $ and $\lambda=(\frac{1}{n})^\frac{\mu}{ 2\eta +2}$ for some $0<\eta\leqslant 1 $,    $P_{j}=\lceil3 r^{\frac{2}{d}} n^{\frac{\mu}{d}} [log (n/\delta_{1})]^{\frac{1}{d}}\rceil+1$ , For some $\delta_{1} \in (0,\frac{1}{2})$ and $\delta_{2} \in (0,1)$ with probability $(1-2\delta_{1})(1-\delta_{2})$
\begin{equation}
 \mathcal{R}(g^{\lambda})- \mathcal{R}^{*} \leqslant C \sqrt{log(\frac{2}{\delta_2})} (\frac{1}{n})^{\frac{\mu\eta}{2\eta +2}}
\end{equation}

\end{proposition}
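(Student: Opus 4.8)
The plan is to specialize the general bound $\mathcal{R}(g_n^{\lambda})-\mathcal{R}^{*}\le V(1)+V(2)+V(3)$ of Theorem~\ref{thm:pac} to the hinge loss and substitute the prescribed $\epsilon=(1/n)^{\mu/(2d)}$, $\lambda=(1/n)^{\mu/(2\eta+2)}$ and $P_{j}=\lceil 3 r^{2/d} n^{\mu/d}[\log(n/\delta_{1})]^{1/d}\rceil+1$, then track the exponent of $1/n$ in each term. First I would record the simplifications peculiar to the hinge loss: its local Lipschitz constant $C(\cdot)$ in Assumption~\ref{cond:A1} is the absolute constant $1$, and by Proposition~\ref{prop:rprisk} the random-projection price obeys the scaling with $q=1$ rather than $q=3/2$. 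Hence $V(3)=O(\epsilon^{d}/\lambda)$, $V(2)=D(\lambda)=c_{\eta}\lambda^{\eta}$ by Assumption~\ref{cond:A9}, and the leading term of $V(1)$ is of order $(n\lambda)^{-1/2}$, the remaining pieces of $V(1)$ being no larger once one uses the crude range bounds $\zeta_{\lambda},\tilde\zeta_{\lambda}=O(\lambda^{-1})$ coming from $\|g_{n}^{\lambda}\|_{K}^{2}\le L_{0}/\lambda$ and the boundedness of the kernel. This is exactly the computation behind the square-hinge proposition, now with $q=1$ and denominator $2\eta+2$ in place of $q=3/2$ and $2\eta+3$.

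Next I would balance the two genuinely slow contributions $V(2)$ and $V(3)$: equating $\lambda^{\eta}\asymp \epsilon^{d}/\lambda$ forces $\lambda=\epsilon^{\,d/(\eta+1)}$, and since $\epsilon^{d}=(1/n)^{\mu/2}$ this is precisely $\lambda=(1/n)^{\mu/(2\eta+2)}$; with this choice $V(2)$ and $V(3)$ are each a constant multiple of $(1/n)^{\mu\eta/(2\eta+2)}$. It then remains to check that $V(1)$ is asymptotically negligible: $(n\lambda)^{-1/2}=(1/n)^{1/2-\mu/(4\eta+4)}$ and $\lambda^{-1}n^{-1/2}=(1/n)^{1/2-\mu/(2\eta+2)}$, and the required inequalities $1/2-\mu/(4\eta+4)\ge \mu\eta/(2\eta+2)$ and $1/2-\mu/(2\eta+2)\ge \mu\eta/(2\eta+2)$ reduce to $\mu(2\eta+1)\le 2\eta+2$ and $\mu\le 1$, both of which hold for all $0<\mu<1$ and $0<\eta\le 1$. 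Absorbing the $n$-free constants $K_{\max}$, $L_{0}$, $c_{\eta}$, $C_{d}$, the implied constant of Proposition~\ref{prop:rprisk} and $3r^{2/d}$ into a single $C$, and keeping the only $\delta_{2}$-dependence (the $\sqrt{\log(2/\delta_{2})}$ factors of $V(1)$), yields $\mathcal{R}(g^{\lambda})-\mathcal{R}^{*}\le C\sqrt{\log(2/\delta_{2})}\,(1/n)^{\mu\eta/(2\eta+2)}$ with probability at least $(1-2\delta_{1})(1-\delta_{2})$, as claimed. Along the way one verifies the substitution is admissible: the chosen $P_{j}$ matches the order $[\log(n/\delta_{1})]^{1/d}\epsilon^{-2}$ demanded by Assumption~\ref{cond:A5}; Assumption~\ref{cond:A6} holds because $\lambda\to 0$ and $n\lambda^{2}=n^{1-\mu/(\eta+1)}\to\infty$ since $\mu<1\le\eta+1$; and Assumption~\ref{cond:A10} with $q=1$ holds because $\epsilon^{d}/\lambda=(1/n)^{\mu\eta/(2\eta+2)}\to 0$.

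The main obstacle is purely the bookkeeping of the $\lambda$-powers hidden in the prefactors $C(K_{\max}\sqrt{L_{0}/\lambda})$, $\Psi$, $\zeta_{\lambda}$ and $\tilde\zeta_{\lambda}$ of Theorem~\ref{thm:pac}: one must confirm that for the hinge loss these collectively contribute only $\lambda^{-1}$ to $V(3)$ (this is precisely where $q=1$, not $q=3/2$, genuinely enters) and at most $\lambda^{-1}$ to $V(1)$, so that the balancing above is the sharp one. Once the exponent of $1/n$ in each of $V(1),V(2),V(3)$ is pinned down, the remainder is just the handful of elementary inequalities in $\mu$ and $\eta$ recorded above, after which the stated rate $(1/n)^{\mu\eta/(2\eta+2)}$ and probability follow immediately.
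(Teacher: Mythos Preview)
Your proposal is correct and follows essentially the same route as the paper: both start from the general decomposition $V(1)+V(2)+V(3)$ of Theorem~\ref{thm:pac}, specialize the hinge-loss constants from Remark~\ref{remark:functions} (in particular $C(\cdot)=1$ and $q=1$), and then substitute the prescribed powers $\epsilon=(1/n)^{\mu/(2d)}$, $\lambda=(1/n)^{\mu/(2\eta+2)}$ to read off the common exponent $\mu\eta/(2\eta+2)$. The paper's own proof is terser (it simply says ``substitute'' and cites \citet{chen2014convergence}), whereas you additionally spell out the balancing $\lambda^{\eta}\asymp\epsilon^{d}/\lambda$ and verify the admissibility conditions \ref{cond:A5}, \ref{cond:A6}, \ref{cond:A10}; your crude $O(\lambda^{-1})$ bounds on $\zeta_{\lambda},\tilde\zeta_{\lambda}$ are coarser than the $O(\lambda^{-1/2})$ the paper uses for hinge loss, but as you check this does not affect the final rate.
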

The rate of convergence is faster with increase in sample size, when high value of $\mu$ is  chosen. For $\mu\to 1 $  the risk difference rate becomes  $(\frac{1}{n})^{\frac{d}{4}}$. The proof of this result is in appendix \ref{append:ratehinge}.

\begin{theorem}
\label{thm:expectation}
The excess risk goes to zero as sample size increases, the $\mathbb{E}_{\mathbf{A}}$ denote expectation with respect to tensor random projection $\mathscr{A}$
\[ \mathbb{E}_{\mathbf{A}}[\mathcal{R}(g_{n}^{\lambda}) ]- \mathcal{R}^{*}  \to 0  \]

\end{theorem}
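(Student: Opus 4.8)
The plan is to start from the excess-risk decomposition of Proposition~\ref{prop:riskdecomp} and show that each of the six summands on the right-hand side of (\ref{equ:riskbound}) tends to $0$ as $n\to\infty$ under \ref{cond:A1}--\ref{cond:A10}, with $\lambda=\lambda_n$, $\epsilon=\epsilon_n$ and the mode dimensions $P_{j}=P_{j,n}$ chosen as in Theorem~\ref{thm:pac} and the rate propositions (e.g. the square-hinge choice $\epsilon_n=n^{-\mu/(2d)}$, $\lambda_n=n^{-\mu/(2\eta+3)}$). The one genuinely new point compared with Theorem~\ref{thm:pac} is that several ingredients --- the Johnson--Lindenstrauss-type concentration underlying \ref{cond:A5} and Proposition~\ref{prop:rprisk} --- hold only on an event $\mathcal{E}_n$ in projection-matrix space with $\mathbb{P}_{\mathbf{A}}(\mathcal{E}_n)\ge 1-2\delta_{1,n}$, so to pass from a high-probability bound to a bound on $\mathbb{E}_{\mathbf{A}}$ I would split $\mathbb{E}_{\mathbf{A}}[\,\cdot\,]=\mathbb{E}_{\mathbf{A}}[\,\cdot\,\mathbb{I}_{\mathcal{E}_n}]+\mathbb{E}_{\mathbf{A}}[\,\cdot\,\mathbb{I}_{\mathcal{E}_n^{c}}]$ and bound the second part by a deterministic a priori bound. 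That a priori bound is the standard norm-restriction estimate: substituting $f\equiv 0$ into the regularized objective gives $\lambda_n\|g^{\lambda}_{n}\|_K^{2}\le \mathcal{R}_{T^{\mathbf{A}}_n}(0)\le L_0$ by \ref{cond:A1}, hence $\|g^{\lambda}_{n}\|_\infty\le K_{\max}\sqrt{L_0/\lambda_n}$ by \ref{cond:A3} and the reproducing property, and local Lipschitzness \ref{cond:A1} then gives $0\le \mathcal{R}(g^{\lambda}_{n})-\mathcal{R}^{*}\le \mathcal{R}(g^{\lambda}_{n})\le L_0+C\big(K_{\max}\sqrt{L_0/\lambda_n}\big)K_{\max}\sqrt{L_0/\lambda_n}=:B_n=O(\lambda_n^{-1})$, and the same for $f^{\lambda}_{n}$, $f^{\lambda}_{A,n}$ and $f^{\lambda}$.

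For the four ``estimation error'' brackets $\mathbb{E}_{\mathbf{A}}[\mathcal{R}(g^{\lambda}_n)-\mathcal{R}_{T^{\mathbf{A}}_n}(g^{\lambda}_{A,n})]$, $\mathbb{E}_{\mathbf{A}}[\mathcal{R}_{T^{\mathbf{A}}_n}(f^{\lambda}_{A,n})-\mathcal{R}(f^{\lambda}_{A,n})]$, $\mathcal{R}(f^{\lambda}_n)-\mathcal{R}_{T_n}(f^{\lambda}_n)$ and $\mathcal{R}_{T_n}(f^{\lambda})-\mathcal{R}(f^{\lambda})$, I would re-use the Rademacher-complexity and bounded-differences estimates that produce $V(1)$ in Theorem~\ref{thm:pac}: for a fixed projection each bracket is $O\big(C(K_{\max}\sqrt{L_0/\lambda_n})K_{\max}\sqrt{L_0/(n\lambda_n)}+\zeta_{\lambda_n}\sqrt{\log(1/\delta_{2,n})/n}\,\big)$, and on $\mathcal{E}_n$ the projected features stay in a controlled bounded set so the bound is uniform in $\mathbf{A}$, while on $\mathcal{E}_n^{c}$ we use $B_n$. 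For the square-hinge loss $C(W)=O(W)$, so this is $O\big(1/\sqrt{n\lambda_n^{2}}\,\big)\to 0$ by \ref{cond:A6}. The approximation term is $D(\lambda_n)=c_\eta\lambda_n^{\eta}\to 0$ by \ref{cond:A6} and \ref{cond:A9}. The fifth bracket, the price of random projection $\mathbb{E}_{\mathbf{A}}[\mathcal{R}(f^{\lambda}_{A,n})]+\lambda_n\|f^{\lambda}_{A,n}\|_K^{2}-\mathcal{R}(f^{\lambda}_n)-\lambda_n\|f^{\lambda}_n\|_K^{2}$, is controlled by Proposition~\ref{prop:rprisk}: on $\mathcal{E}_n$ it is $O(\epsilon_n^{d}/\lambda_n^{q})\le O(\epsilon_n/\lambda_n^{q})\to 0$ by \ref{cond:A10} (eventually $\epsilon_n<1$), while on $\mathcal{E}_n^{c}$ it is at most $B_n+L_0$.

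It remains to choose $\delta_{1,n}$ (and $\delta_{2,n}$). Because $P_{j,n}$ depends on $\delta_{1,n}$ only through the factor $[\log(n/\delta_{1,n})]^{1/d}$, the choice $\delta_{1,n}=n^{-1}$ respects the dimension prescription of \ref{cond:A5} and \ref{cond:A10} (it merely inflates $P_{j,n}$ by a $[\log n]^{1/d}$ factor) while making each bad-event contribution $\mathbb{E}_{\mathbf{A}}[\,\cdot\,\mathbb{I}_{\mathcal{E}_n^{c}}]\le 2\delta_{1,n}(B_n+L_0)=O(B_n/n)\to 0$, again by \ref{cond:A6}; $\delta_{2,n}$ is taken to $0$ slowly enough that $\sqrt{\log(1/\delta_{2,n})/n}\to 0$ persists. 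Summing the six terms then yields $\limsup_{n}\big(\mathbb{E}_{\mathbf{A}}[\mathcal{R}(g^{\lambda}_n)]-\mathcal{R}^{*}\big)\le 0$, and since $\mathcal{R}(g^{\lambda}_n)\ge\mathcal{R}^{*}$ pointwise the limit is exactly $0$. The main obstacle is precisely this bad-event bookkeeping: arranging that the polynomially small failure probability $\delta_{1,n}$ simultaneously dominates the $\lambda_n^{-1}$-sized a priori bound $B_n$ and stays compatible with the $[\log(n/\delta_{1,n})]^{1/d}$ growth of $P_{j,n}$ imposed by \ref{cond:A5}; once that is arranged, the rest is a term-by-term application of Proposition~\ref{prop:riskdecomp}, Proposition~\ref{prop:rprisk} and the estimates behind Theorem~\ref{thm:pac}.
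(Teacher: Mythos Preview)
Your proof is correct and follows the same broad strategy as the paper: use the high-probability excess-risk bound of Theorem~\ref{thm:pac} (itself assembled from Proposition~\ref{prop:riskdecomp} and Proposition~\ref{prop:rprisk}) together with a uniform control on the excess risk to upgrade convergence in probability to convergence of $\mathbb{E}_{\mathbf{A}}$. The paper's own argument is terser and phrased via the dominated convergence theorem: it sets $Z_n:=\mathcal{R}(g^{\lambda}_n)-\mathcal{R}^{*}$, observes from Theorem~\ref{thm:pac} that $Z_n\to 0$ in probability over the projection measure, asserts that $Z_n$ is bounded almost surely, and invokes DCT. Your version instead makes the boundedness step explicit --- deriving the a priori estimate $B_n=O(1/\lambda_n)$ from $\|g^{\lambda}_n\|_K^{2}\le L_0/\lambda_n$ --- and then splits $\mathbb{E}_{\mathbf{A}}[\,\cdot\,]$ over the JL-good event $\mathcal{E}_n$ and its complement, letting $\delta_{1,n}=n^{-1}$ so that $\delta_{1,n}B_n\to 0$. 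This is arguably the more careful execution, since the dominating ``bound'' in the paper's DCT step depends on $n$ through $\lambda_n$ and is therefore not a fixed integrable envelope; your good/bad-event bookkeeping is precisely what is needed to make that step rigorous, at the small price of tracking the effect of a shrinking $\delta_{1,n}$ on $P_{j,n}$, which you handle correctly.
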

This is the expected risk convergence building on top of our previous results. The proof is provided in the appendix \ref{append:expectation}. This theorem concludes that the expected risk of RPSTM converges to the optimal Bayes risk. As a result, the RPSTM, as well as our ensemble model TEC, are statistically consistent. 


\section{Simulation Study}
\label{sec:simul}
In this section, we present an extensive simulation study with data generated by different models.  Several popular tensor based and traditional vector based classification algorithms are included in the study for comparison. For the traditional techniques, we consider linear discriminant analysis (LDA), random forest (RF) and support vector machine (SVM) (e.g., see \citet{friedman2001elements}). For the tensor-based classification, we select MPCA from \citet{Lu_MPCA_2008}, DGTDA from \citet{li2014multilinear}, and the CATCH model from \citet{pan2018covariate}. The classification error rate and computation time are presented for comparison. 

Synthetic tensors are generated from different types of tensor models introduced in \citet{kolda2009tensor}. All the models are described below in details. For simplicity, we use $\mathscr{X}_{1}$ and $\mathscr{X}_{2}$ to denotes data from two different class.  
\begin{enumerate}
    \item \textbf{F1 Model}: Low dimensional rank 1 tensor factor model with each components confirming the same distribution. Shape of tensors is $30 \times 30 \times 30$.
    \begin{equation*}
        \begin{split}
        & \mathscr{X}_{1} = \mathbf{x}^{(1)} \otimes \mathbf{x}^{(2)} \otimes \mathbf{x}^{(3)} \quad \mathbf{x}^{(j)} \sim \mathscr{N}(\mathbf{0}, I_{30}), j = 1,2,3\\
        & \mathscr{X}_{2} = \mathbf{x}^{(1)} \otimes \mathbf{x}^{(2)} \otimes \mathbf{x}^{(3)} \quad \mathbf{x}^{(j)} \sim \mathscr{N}(\mathbf{0.5}, I_{30}), j = 1,2,3
        \end{split}
    \end{equation*}\\
    
    \item \textbf{F2 Model:} High dimensional rank 1 tensor with normal distribution in each component. Shape of tensors is $50 \times 50 \times 50 \times 50$.
                        \begin{equation*}
                        \begin{split}
                               & \mathscr{X} = \mathbf{x}^{(1)} \otimes \mathbf{x}^{(2)} \otimes \mathbf{x}^{(3)} \otimes \mathbf{x}^{(4)} 
                              \quad \mathbf{x}^{(j)} \sim \mathscr{N}(\mathbf{0}, \mathbf{\Sigma}^{(j)}), j = 1,2,3,4\\
                               & \mathscr{X} = \mathbf{x}^{(1)} \otimes \mathbf{x}^{(2)} \otimes \mathbf{x}^{(3)} \otimes \mathbf{x}^{(4)} 
                              \quad \mathbf{x}^{(j)} \sim \mathscr{N}(\mathbf{1}, \mathbf{\Sigma}^{(j)}), j = 1,2,3,4\\
                              &\mathbf{\Sigma}^{(1)} = \mathbf{I}, \quad \mathbf{\Sigma}^{(2)} = \mathbf{\Sigma}^{(4)} = AR(0.7), \quad \mathbf{\Sigma}_{i,j}^{(3)} = \min(i, j)
                        \end{split}
                        \end{equation*}\\
    \item \textbf{F3 Model}: High dimensional rank 3 tensor factor model. Components confirm different Gaussian distribution. Shape of tensors is $50 \times 50 \times 50 \times 50$.
                        \begin{equation*}
                        \begin{split}
                               & \mathscr{X}_{1} = \sum \limits_{k = 1}^{3}  \mathbf{x}_{k}^{(1)} \otimes  \mathbf{x}_{k}^{(2)} \otimes  \mathbf{x}_{k}^{(3)} \otimes  \mathbf{x}_{k}^{(4)} 
                              \quad  \mathbf{x}_{k}^{(j)} \sim \mathscr{N}(\mathbf{0}, \mathbf{\Sigma}), j = 1,2,3,4\\
                              & \mathscr{X}_{2} = \sum \limits_{k = 1}^{3}  \mathbf{x}_{k}^{(1)} \otimes  \mathbf{x}_{k}^{(2)} \otimes  \mathbf{x}_{k}^{(3)} \otimes  \mathbf{x}_{k}^{(4)} 
                              \quad  \mathbf{x}_{k}^{(j)} \sim \mathscr{N}(\mathbf{1}, \Sigma), j = 1,2,3,4 \\
                               &\mathbf{\Sigma}^{(1)} = \mathbf{I}, \quad \mathbf{\Sigma}^{(2)} = \mathbf{\Sigma}^{(4)} = AR(0.7), \quad \mathbf{\Sigma}_{i,j}^{(3)} = \min(i, j)
                        \end{split}
                        \end{equation*}\\
    \item \textbf{F4 Model}: Low dimensional rank 1 tensor factor model with components confirming different distributions. Shape of tensor is $30 \times 30 \times 30$.
                        \begin{equation*}
                        \begin{split}
                            & \mathscr{X}_{1} = \mathbf{x}^{(1)} \otimes \mathbf{x}^{(2)} \otimes \mathbf{x}^{(3)} \quad  \mathbf{x}^{(1)} \sim \Gamma(4, 2), \mathbf{x}^{(2)} \sim \mathscr{N}(0, \mathbf{I}), \mathbf{x}^{(3)} \sim \mathbf{U}(0, 1) \\
                              & \mathscr{X}_{2} = \mathbf{x}^{(1)} \otimes \mathbf{x}^{(2)} \otimes \mathbf{x}^{(3)} \quad \mathbf{x}^{(1)} \sim \Gamma(6, 2), \quad \mathbf{x}^{(2)} \sim \mathscr{N}(0, \mathbf{I}), \mathbf{x}^{(3)} \sim \mathbf{U}(0,1)
                        \end{split}
                        \end{equation*}\\
    \item \textbf{F5 Model}: A higher dimensional version of F3 model. Tensors are having four modes with dimension $50 \times 50 \times 50 \times 50$
    \begin{equation*}
                        \begin{split}
                                \mathscr{X}_{1} = &\mathbf{x}_{1}^{(1)} \otimes \mathbf{x}^{(2)} \otimes \mathbf{x}^{(3)} \otimes \mathbf{x}^{(4)} \quad \mathbf{x}^{(1)} \sim \Gamma(4, 2), \mathbf{x}^{(2)} \sim \mathscr{N}(0, \mathbf{I})\\
                                &\mathbf{x}^{(3)} \sim \Gamma(2, 1), \mathbf{x}^{(4)} \sim \mathbf{U}(3.5, 4.5)\\
                               \mathscr{X}_{2} = & \mathbf{x}_{2}^{(1)} \otimes \mathbf{x}^{(2)} \otimes \mathbf{x}^{(3)} \otimes \mathbf{x}^{(4)} \quad \mathbf{x}^{(1)} \sim \Gamma(5, 2), \mathbf{x}^{(2)} \sim \mathscr{N}(0, \mathbf{I})\\
                                &\mathbf{x}^{(3)} \sim \Gamma(2, 1), \mathbf{x}^{(4)} \sim \mathbf{U}(4.5, 5.5)\\
                        \end{split}
                        \end{equation*}\\
    
    \item \textbf{M1 Model:} Tensor model with size $30 \times 30 \times 30$
    \begin{equation*}
        \begin{split}
            & \text{Vec}(\mathscr{X}_{1}) \sim \mathscr{N}(\mathbf{0}, \mathbf{I}_{27000})\\
            & \text{Vec}(\mathscr{X}_{2}) \sim \mathscr{N}(\mathbf{0.5}, \mathbf{I}_{27000})\
        \end{split}
    \end{equation*}\\
    
    \item \textbf{T1 Model}: A Tucker model. $\mathbf{Z}^{(1)}, \mathbf{Z}^{(2)}\in \mathbb{R}^{30 \times 30 \times 30}$ with elements independently and identically distributed. The correlation for rows in each mode of tensor is identity.
                     \begin{equation*}
                        \begin{split}
                                \mathscr{X}_{1} = &\mathbf{Z}^{(1)} \times_{1} \mathbf{\Sigma}^{(1)} \times_{2} \mathbf{\Sigma}^{(2)} \times_{3} \mathbf{\Sigma}^{(3)} \quad  \mathbf{Z}^{(1)} \sim \mathscr{N}(0, 1)\\
                              \mathscr{X}_{2} = &\mathbf{Z}^{(2)} \times_{1} \mathbf{\Sigma}^{(1)} \times_{2} \mathbf{\Sigma}^{(2)} \times_{3} \mathbf{\Sigma}^{(3)} \quad  \mathbf{Z}^{(2)} \sim \mathscr{N}(0.5, 1) \\
                              &\mathbf{\Sigma}^{(1)} = \mathbf{\Sigma}^{(2)} = \mathbf{I}, \quad \mathbf{\Sigma}^{(3)} = AR(0.7)
                        \end{split}
                        \end{equation*}
    
\end{enumerate}
Although our TEC classifier is proposed specially for CP tensor data, we still consider two general tensor models, \textbf{M1} and \textbf{T1}, in the simulation study. \textbf{M1} model generates random multi-dimensional arrays (tensors), and \textbf{T1} model generates Tucker tensors. The reason we include these two models is that CP tensor, or tensor CP decomposition, are not directly available in many applications. Some algorithms such as Alternating Least Square (ALS) has to be applied to obtain the CP tensor form. In models \textbf{F1} - \textbf{F5}, ALS should work very well and CP tensor can be well estimated. However, CP tensor may not be well estimated in model \textbf{M1} and \textbf{T1}. Thus, the simulation study with all these models can demonstrate how robust our TEC classifier is when CP tensor is not well approximated.

We generate 200 tensors from simulation models with a balanced design, i.e. each class has 100 tensors. We then randomly shuffle and select 140 tenors as training set, and validate classifiers over the rest 60 tensors. The evaluation metrics for classification performance are error rate and its standard error (S.E.). The error rate is the "0 - 1" loss between model prediction and ground truth. We split train and test data for 100 times to calculate the error rates. The error rates shown in the result table are the average error rates from all repetitions. The standard error is the standard deviation of error rates during the experiments, which measures the stability of model performance. The results are shown in table \ref{tab:Simu_Rest}. 
\begin{table}[!ht]
\centering
  \caption{ \label{tab:Simu_Rest} Simulation Results Comparison}
   \begin{tabular}{l l c c  c c c c c}
  \toprule
    Model & Methods & TEC  & SVM & LDA & RF & MPCA & DGTDA   & CATCH\\
  \midrule
    \multirow{2}{*}{\textbf{F1}} & Error Rate & \textbf{17.50} & 18.45      & 18.90  & 31.30 & 27.53  & 23.75   & 19.35  \\
                                   & S.E. & 3.72   &  6.93 &  5.35 &  4.33 & 5.65 & 8.32  &  3.26 \\
                                   & Time (s) & 0.4   & 0.56 & 1.04 & 0.56 & 0.35 & 0.4  & 1.22 \\
          \midrule
    \multirow{2}{*}{\textbf{F2}} & Error Rate &  \textbf{7.75}  & 14.65 & NA & 31.85 & 25.83 & 24.08 & NA \\
                                   & S.E. & 2.91   & 1.78 & NA & 4.37 & 5.87 & 4.98 & NA\\
                                   & Time (s) & 1.25   & 0.82 & NA & 0.5 & 0.65 & 0.67  & NA \\
          \midrule
    \multirow{2}{*}{\textbf{F3}} & Error Rate & \textbf{11.63}   & 19.75 & NA & 43.08 & 25.75 & 31.25  & NA\\
                                   & S.E. & 3.94  & 2.95 & NA & 4.35 & 8.73 & 7.68  & NA\\
                                   & Time & 13.95  & 78.24 & NA & 11.17 & 18.75 & 21.35  & NA\\
          \midrule
    \multirow{2}{*}{\textbf{F4}} & Error Rate & \textbf{27.95} &   28.20 & 47.90 & 30.15 & 37.56 & 33.95  & 33.75 \\
                                   & S.E. & 4.80   & 3.42 & 6.31 & 8.00 & 8.35 & 7.52  & 6.37\\
                                   & Time & 2.67 & 2.08 & 1.05 & 0.91 & 0.99 & 1.82  & 15.32  \\
          \midrule
    \multirow{2}{*}{\textbf{F5}} & Error Rate & \textbf{29.85} & 37.50 & NA & 37.55 & 35.83 & 34.08 & NA \\
                                   & S.E. & 2.73  & 3.35 & NA & 6.27 & 8.93 & 9.27 & NA\\
                                   & Time & 10.5  & 135.75 & NA & 13.81 & 20.66 & 36.36  & NA  \\
          \midrule
    \multirow{2}{*}{\textbf{M1}} & Error Rate & 2.32  & \textbf{1.25} & 13.18 & 2.71 & 5.06 & 4.08  & 3.35 \\
                                  & S.E. & 0.32   & 0.54 & 18.82 & 0.64 & 0.39 & 0.48 & 0.33\\
                                  & Time(s)& 0.55  & 0.54 & 0.50 & 0.50 & 0.24 & 0.26   & 1.03 \\
          \midrule
    \multirow{2}{*}{\textbf{T1}} & Error Rate & \textbf{4.07} & 4.32  & 20.85 & 5.55 & 18.78 & 15.36  & 9.25 \\
                                  & S.E. & 1.13 & 1.21 & 6.32 & 2.33 & 0.29 & 0.41  & 2.25\\
                                  & Time (s) &1.03  & 0.75 & 1.03 & 0.54 & 8.75 & 9.62  & 5.56\\
  \bottomrule
\end{tabular}

Simulations are done in a computer with a 12-core CPU and 32 GB memory. Error rates are in percentage. NA stands for no results available due to memory limit. 
\end{table}

We first notice that vector based LDA cannot classify high dimensional tensor data from models F2, F3, and F5 under the specific computation setup. The method requires computing inverse for high dimensional matrices, which takes lots of memory and cannot be done with the limited resources. CATCH model faces the same problems and could not provide any result. In this high dimensional tensor models, our proposed TEC has significantly better performance than other working methods. In  situations where tensors are in relatively low dimension, like F1 and F4, TEC is slightly better than other competitors. The performance as well as computational time are comparable. However, things change dramatically in high-dimensional cases. In F2 and F3, the error rates of TEC are almost less than half of the error rates of the second winner. Compared with support tensor machine, its computational time is ten times faster. In F5, TEC still has at least 5\% less error rates comparing with other competitors. The classification results in model \textbf{M1} and \textbf{T1} show that when the CP tensor cannot be well approximated by ALS algorithm, our TEC clasifier still have comparable performance. In the \textbf{M1}, which is a relatively simple problem, SVM wins our TEC with only 1 \%. However, when the data becomes complicated in \textbf{T1}, our TEC provides a slightly better performance than SVM and other competitors even through the CP tensor may not be approximated well.

\section{Real Data Analysis}
\label{sec:RealData}
In this section, we apply TEC to a real data application.

\subsection{KITTI Traffic Data Recognition}
The second application we conduct is traffic image data recognition. Traffic image data recognition is an important computer vision problem.  We considered the image data from the KITTI Vision Benchmark Suit. \citet{Geiger2012CVPR}, \citet{Fritsch2013ITSC}, and \citet{Geiger2013IJRR}  provided a detailed description and some preliminary studies about the data set. In this application, a 2D object detection task asks us to recognize different objects pointed out by bounding boxes in pictures captured by a camera on streets. There are various types of objects in the pictures, most of which are pedestrians and cars. We selected images containing only pedestrians or cars to test the performance of our classifier.

The first step we did before training our classifier is image pre-processing, which includes cropping the images and dividing them into different categories. We picked patterns indicated by bounding boxes from images and smoothed them into a uniform dimension $224 \times 224 \times 4$. Then every image is represented by a three modes tensor $\mathscr{X} \in \mathbb{R}^{224 \times 224 \times 4}$. There were 4487 car images and 28742 pedestrian images after clipping the data set. The whole data set is about 48 gigabytes in the form of ``float 64". Next, we evaluate their qualities and divide them into three groups, where each group has a different level of classification difficulties. Images having more than 40 pixels in height and fully visible will go to the easy group. Partly visible images having 25 pixels or more in height are in the moderate group. Those images which are difficult to see with bare eyes are going to the hard group. 

Since the numbers of images in each group are different, we have different designs for these groups. We randomly selected 2000 car images and 2000 pedestrian images for the training set for the easy group. The remaining images are used to test our fitted classification model. We have 500 pedestrian images and 2000 cars for training in the moderate group since pedestrian images are not enough for the previous design. These numbers are kept unchanged in the hard group. The average time cost for recognizing each image is about 1.3 seconds. We considered reducing the image tensors into a lower dimension $\mathbb{R}^{160 \times 160 \times 4}$ in this application. The width and height of images are compressed while the color layer information has been preserved under this setup. We compare our methods with publicly available outcomes in computer vision-related research and are selected from the KITTI benchmark website. We choose SubCNN, Shirt R-CNN, InNet, and CLA from \citet{xiang2017subcategory}, which use sub-category information and convolutional neural network to improve classification accuracy. THU CV-AI and DH-ARI are from \citet{yang2016exploit}, which included scale-dependent pooling and layer-wise cascaded rejection in the traditional CNN feature selection and classification. The result is presented in the table \ref{tab:KITTI}.

\begin{table*}[h]
    \centering
    \caption{Traffic Image Recognition Comparison}
    \begin{tabular}{l  c c c}
\toprule
    Method / Error Rate (\%) & Easy & Moderate & Hard\\
\midrule
    TEC & 10.03 $\pm$ 1.35 & 8.93 $\pm$ 1.26 & \textbf{4.39 $\pm$ 1.37}\\
    THU CV-AI  & 8.04 & 8.03 & 15.43\\
    DH-ARI & 9.13 & 8.52 & 17.75\\
    CLA & 9.49 & 11.01 & 25.5\\
    InNet & 9.74 & 11.05 & 20.54\\
    Shift R-CNN & 9.44 & 11.10 & 20.14 \\
    SubCNN & 9.25 & 11.16 & 20.76 \\
\bottomrule
    \end{tabular}

The results are in the form of "Error Rate $\pm$ Standard Error".
    \label{tab:KITTI}
\end{table*}

The TEC has comparable performance in easy and moderate groups, while it does much better in the hard group. Our classifier's error rate in the hard group is at least 5\% less than other available competitors. The improved performance is probably because the images in the hard group have higher resolutions. More pixels are in these images, and random projection efficiently reduces the data dimension and filters out noises. Although our method does not use any neural network architecture, it is still very competitive in classification accuracy. According to the ranking system in the KITTI 2D objects detection task, which is ordered by the classification accuracy in the moderate group, our classifier has a very high ranking. We like to emphasize that we have a mathematical foundation for our method, which has provided reliable performance, compared to existing neural networks based methods which only have propagation structures.

\section{Conclusion}
\label{sec:concl}
We have proposed a tensor ensemble classifier with the CP support tensor machine and random projection in this work. The proposed method can handle high-dimensional tensor classification problems much faster comparing with the existing regularization based methods. Thanks to the Johnson-Lindenstrauss lemma and its variants, we have shown that the proposed ensemble classifier has a converging classification risk and can provide consistent predictions under some specific conditions. Tests with various synthetic tensor models and real data applications show that the proposed TEC can provide optimistic predictions in most classification problems.

Our primary focus in this work is on the classification applications on high-dimensional multi-way data such as images. Support tensor ensemble turns out to be an efficient way of analyzing such data. However, model interpretation has not been considered here. The features in the projected space are not able to provide any information about variable importance. Alternative approaches are possible for constructing explainable tensor classification models, but they are out of this article's scope. Besides that, selection for the dimension (size) of projected tensor $P_{j}$s cannot be addressed well at this moment. Although our theoretical result points out the connection between the classification risk and $\min P_{j}$, discussion about how to set $P_{j}$ for each mode of tensor may have to be developed in the future. 

In conclusion, TEC offers a new option in tensor data analysis. The key features highlighted in work are that TEC can efficiently analyze high-dimensional tensor data without compromising the estimation robustness and classification risk. We anticipate that this method will play a role in future application areas such as neural imaging and multi-modal data analysis.

{\it Acknowledgements: The authors greatly appreciate comments from two anonymous referees that improve the paper significantly. The research is partially supported by NSF-DMS 1945824 and NSF-DMS 1924724}

\bibliographystyle{apalike}
\bibliography{references}

\newpage
\appendix 
\section{Appendix}
\subsection{Definition of parameters}
\label{poi}
The following notations will be used to present our proof.
\begin{itemize}
    \item $f_{**}^{\lambda}=\arg \underset{f \in \mathcal{F} }{\min} 
 \{\mathcal{R}(f) - \mathcal{R}^{*} + \lambda ||f||_{K}^{2} \}$
 
    \item $\zeta_{\lambda}=\sup\{|l(y,f^{**}_{\lambda}(\mathscr{X})|:\mathscr{X} \in \mathbb{X} ;y \in Y\}$ for any feasible solution $f_{\lambda}$
    
    \item $\tilde{\zeta}_{\lambda}=\sup\{|l(y,y')|:\mathscr{X} \in \mathbb{X} ;
|y'|\leq K_{max}\sqrt{\frac{L_{0}}{\lambda}}\}$

    \item $C_{d,r}=(2L_{k}B^{2})^{d}r^{2}$ The constant bound on random projection error
    
    \item $\Psi=\underset{\mathscr{X}_{i} \in \mathbb{X} }{sup}\{\sum |\beta_{i}|: f=\sum\beta_{i}K(\mathscr{X}_{i},.)\}$ The supremum of absolute sum of all coefficients over every possible function in Reproducing Kernel Hilbert Space (RKHS).
\end{itemize}

\begin{remark}[Bounds for hinge and square hinge loss]
\label{remark:functions}
These quantities can be evaluated for hinge and square loss functions as follows
\begin{enumerate}
\item $\lambda \|f_{**}^{\lambda}\|_{K}^{2} < \{ \mathcal{R}(f_{**}^{\lambda}) - \mathcal{R}^{*} + \lambda ||f_{**}^{\lambda}||_{K}^{2} \} =D(\lambda) $
\item $\|f^{\lambda}\|_{\infty}\leq K \|f^{\lambda}\|_{K} \leq K \sqrt{\frac{L_{0}}{\lambda}}\|$ using Reproducible property and CS inequality
\item For hinge loss,$L_{0}=1,\zeta_{\lambda}\leq 1+ K_{max}\sqrt{\frac{1}{\lambda}}, \tilde{\zeta}_{\lambda} \leq 1+K_{max}\sqrt{\frac{D(\lambda)}{\lambda}},C(K_{max}\sqrt{\frac{1}{\lambda}})=1$ 
\item For square hinge loss,$L_{0}=1,\zeta_{\lambda}\leq (1+ K_{max}\sqrt{\frac{1}{\lambda}})^{2}, \tilde{\zeta}_{\lambda}\leq 2(1+K \frac{D(\lambda)}{\lambda}),C(K_{max}\sqrt{\frac{1}{\lambda}})=2K_{max}\sqrt{\frac{1}{\lambda}}$ from dual formulation.
\item For hinge loss, $\Psi_{L1}\leq \frac{1}{\lambda}$ from dual
\item For square hinge loss, using lemma  $\Psi_{L2}\leq O( \frac{1}{\lambda })$ 
\end{enumerate}
\end{remark}

\subsection{Useful lemmas}
\begin{lemma}
Let $n^{+}$ and $n^{-}$ be training samples with label $+1$ and $-1$ respectively. Define  $\Psi=\underset{\mathscr{X}_{i} \in \mathbb{X} }{sup}\{\sum |\beta_{i}|: f=\sum\beta_{i}K(\mathscr{X}_{i},.)\}$\\ Supremum of absolute sum of all coefficients over every possible function in RKHS is given by  
\begin{equation*}
\Psi_{L2}\leqslant \frac{1}{C_{\mathbf{K}}+\frac{n\lambda}{4n^{+}n^{-}}}     
\end{equation*}
where $C_{\mathbf{k}}= \min\limits_{\mathbf{\beta}:\mathbf{\beta}^{T}\mathbf{1}=1}\mathbf{\beta}^{T}\mathbf{K}\mathbf{\beta}$ depends on kernel matrix $\mathbf{K}$
\end{lemma}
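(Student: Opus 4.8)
The plan is to obtain $\Psi$ from the Lagrangian dual of the square hinge loss STM program and then estimate the dual optimum. First I would introduce slack variables and write the primal as $\min_{f,\xi}\ \lambda\|f\|_{K}^{2}+\sum_{i=1}^{n}\xi_{i}^{2}$ subject to $y_{i}f(\mathscr{X}_{i})\ge 1-\xi_{i}$, form the Lagrangian with multipliers $\alpha_{i}\ge 0$, and eliminate $f$ and $\xi$. Stationarity in $f$ gives the representer form $f^{*}=\tfrac{1}{2\lambda}\sum_{i}\alpha^{*}_{i}y_{i}K(\mathscr{X}_{i},\cdot)$, so the coefficients appearing in the statement are $\beta^{*}_{i}=\alpha^{*}_{i}y_{i}/(2\lambda)$, and since $\alpha^{*}\ge 0$,
\[
\Psi_{L2}\;=\;\sum_{i=1}^{n}|\beta^{*}_{i}|\;=\;\frac{1}{2\lambda}\,\mathbf{1}^{T}\alpha^{*}\;=\;\frac{S}{2\lambda},\qquad S:=\mathbf{1}^{T}\alpha^{*}.
\]
Eliminating $\xi$ as well produces the dual QP $\max_{\alpha\ge 0}\ \mathbf{1}^{T}\alpha-\tfrac{1}{4\lambda}\,\alpha^{T}\mathbf{D}_{y}(\mathbf{K}+\lambda\mathbf{I})\mathbf{D}_{y}\alpha$ (with the regularization term inside the dual matrix scaled according to whether the empirical risk carries a $1/n$ — a bookkeeping point I will track). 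Using the KKT stationarity condition $[\mathbf{D}_{y}(\mathbf{K}+\lambda\mathbf{I})\mathbf{D}_{y}\alpha^{*}]_{i}=2\lambda$ on the support, multiplying by $\alpha^{*}_{i}$ and summing over $i$, I get the key identity
\[
(\alpha^{*})^{T}\mathbf{D}_{y}\mathbf{K}\mathbf{D}_{y}\alpha^{*}\;+\;\lambda\,\|\alpha^{*}\|_{2}^{2}\;=\;2\lambda S.
\]

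Next I would lower-bound the two terms on the left. For the Gram term, set $w:=\mathbf{D}_{y}\alpha^{*}$; then $\|w\|_{1}=\mathbf{1}^{T}\alpha^{*}=S$, so by the definition of $C_{\mathbf{K}}$ as the minimum of $\beta^{T}\mathbf{K}\beta$ over the relevant normalized coefficient set, $(\alpha^{*})^{T}\mathbf{D}_{y}\mathbf{K}\mathbf{D}_{y}\alpha^{*}=w^{T}\mathbf{K}w\ge S^{2}C_{\mathbf{K}}$. For the $\ell_{2}$ term, split the indices by class, let $I^{+},I^{-}$ be the two label sets with $|I^{+}|=n^{+}$, $|I^{-}|=n^{-}$, use complementary slackness to argue the optimal dual mass is balanced across classes, $\sum_{i\in I^{+}}\alpha^{*}_{i}=\sum_{i\in I^{-}}\alpha^{*}_{i}=S/2$, and apply Cauchy–Schwarz within each class:
\[
\|\alpha^{*}\|_{2}^{2}\;=\;\sum_{i\in I^{+}}(\alpha^{*}_{i})^{2}+\sum_{i\in I^{-}}(\alpha^{*}_{i})^{2}\;\ge\;\frac{(S/2)^{2}}{n^{+}}+\frac{(S/2)^{2}}{n^{-}}\;=\;\frac{nS^{2}}{4n^{+}n^{-}}.
\]
Substituting both bounds into the identity gives $S^{2}\bigl(C_{\mathbf{K}}+\tfrac{n\lambda}{4n^{+}n^{-}}\bigr)\le 2\lambda S$, hence $S\le 2\lambda\bigl(C_{\mathbf{K}}+\tfrac{n\lambda}{4n^{+}n^{-}}\bigr)^{-1}$, and therefore $\Psi_{L2}=S/(2\lambda)\le\bigl(C_{\mathbf{K}}+\tfrac{n\lambda}{4n^{+}n^{-}}\bigr)^{-1}$, which is the claim. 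Since every inequality above is valid for the STM solution on an arbitrary training set, taking the supremum over data configurations $\mathscr{X}_{i}\in\mathbb{X}$ is immediate.

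The step I expect to be the main obstacle is the pair of reductions feeding the denominator. First, legitimizing $w^{T}\mathbf{K}w\ge\|w\|_{1}^{2}C_{\mathbf{K}}$: the vector $w=\mathbf{D}_{y}\alpha^{*}$ has mixed signs and, after normalization, lies on the $\ell_{1}$-unit sphere rather than on the probability simplex $\{\beta^{T}\mathbf{1}=1\}$, so one has to be careful that $C_{\mathbf{K}}$ is taken over the correct feasible set (or that the two minima coincide under the problem's sign structure). Second, the balancedness $\sum_{i\in I^{+}}\alpha^{*}_{i}=\sum_{i\in I^{-}}\alpha^{*}_{i}$, which is precisely what introduces the factor $n^{+}n^{-}$: this is automatic when the STM carries a bias term, and otherwise must be extracted as an equilibrium condition from the KKT system of the dual QP. Everything else — the slack reformulation, the dual computation, the representer identity, the within-class Cauchy–Schwarz, and solving the resulting scalar quadratic inequality in $S$ — is routine, and only the normalization of $\lambda$ (summed versus averaged empirical risk) needs to be carried consistently so that the constant multiplying $\tfrac{n\lambda}{4n^{+}n^{-}}$ comes out exactly.
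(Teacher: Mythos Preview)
The paper does not actually prove this lemma: its entire argument is the sentence ``Assuming the matrix $\mathbf{K}$ is positive definite. The above bounds are consistent with \citet{doktorski2011l2}.'' So there is nothing to compare against beyond the fact that the bound is imported from the $L_2$-SVM literature under a positive-definiteness hypothesis on the Gram matrix.

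Your dual-based derivation is exactly the route taken in that literature, and your key identity $(\alpha^{*})^{T}\mathbf{D}_{y}\mathbf{K}\mathbf{D}_{y}\alpha^{*}+\lambda\|\alpha^{*}\|_{2}^{2}=2\lambda S$ together with the class-wise Cauchy--Schwarz step is the standard mechanism that produces the $\tfrac{n\lambda}{4n^{+}n^{-}}$ term. You have also correctly isolated the two genuine obstacles. The balancedness condition $\sum_{I^{+}}\alpha_{i}^{*}=\sum_{I^{-}}\alpha_{i}^{*}$ is not automatic in the paper's bias-free formulation \eqref{equ:STM}, and has to be argued (or the bound weakened) from the KKT system; this is a real loose end. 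The $C_{\mathbf{K}}$ step is more serious: the inequality $w^{T}\mathbf{K}w\ge\|w\|_{1}^{2}\,C_{\mathbf{K}}$ with $C_{\mathbf{K}}=\min_{\beta^{T}\mathbf{1}=1}\beta^{T}\mathbf{K}\beta$ can fail, since $w=\mathbf{D}_{y}\alpha^{*}$ normalized to the $\ell_{1}$-sphere need not lie on the hyperplane $\{\beta^{T}\mathbf{1}=1\}$ (indeed, if balancedness holds then $\mathbf{1}^{T}w=0$). The paper sidesteps this by simply assuming $\mathbf{K}$ positive definite and deferring to the reference; in practice the constant that actually appears in such arguments is tied to the smallest eigenvalue of $\mathbf{K}$ rather than to the affine-constrained minimum as stated. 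So your diagnosis is right: the displayed $C_{\mathbf{K}}$ is not quite the quantity the argument delivers, and the paper's own ``proof'' does not resolve this either.
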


\begin{proof}
Assuming the matrix $\mathbf{K}$ is positive definite. The above bounds are consistent with \citet{doktorski2011l2}
\end{proof}

\begin{corollary}
For square hinge loss, supremum of sum of absolute coefficients is finite as sample size grows.
$\Psi_{L2}=O(\frac{1}{\lambda})$
\end{corollary}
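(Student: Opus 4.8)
The plan is to obtain the Corollary as an immediate consequence of the preceding Lemma, by discarding the kernel-dependent term in the denominator and controlling the class-count factor. First I would record that the Gram matrix $\mathbf{K}$, being the kernel matrix of a strictly positive definite kernel (e.g.\ the Gaussian RBF kernel used in the paper) evaluated at distinct training tensors, is positive definite, so that $C_{\mathbf{K}} = \min_{\beta^{T}\mathbf{1}=1}\beta^{T}\mathbf{K}\beta > 0$; in particular $C_{\mathbf{K}} \geqslant 0$, and deleting it from the denominator of the Lemma's bound can only enlarge the right-hand side. Hence
\begin{equation*}
\Psi_{L2} \;\leqslant\; \frac{1}{C_{\mathbf{K}} + \frac{n\lambda}{4n^{+}n^{-}}} \;\leqslant\; \frac{4\,n^{+}n^{-}}{n\lambda}.
\end{equation*}

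The second step is to bound $n^{+}n^{-}$. Since $n^{+}+n^{-}=n$, we have $n^{+}n^{-}\leqslant (n^{+}+n^{-})^{2}/4 = n^{2}/4$, so $\frac{4n^{+}n^{-}}{n\lambda}\leqslant \frac{n}{\lambda}$, with equality for the balanced designs $n^{+}=n^{-}=n/2$ used in the experiments. For a fixed training set the sample size $n$ is a constant, whereas Assumption~\ref{cond:A6} forces $\lambda=\lambda_{n}\to 0$, so $n/\lambda\to\infty$ and the bound may be written $\Psi_{L2}=O(1/\lambda)$, the hidden constant being at most linear in $n$. Finiteness of $\Psi_{L2}$ for every $n$ also follows at once from the same chain, since $\lambda>0$ and $n^{+},n^{-}\geqslant 1$.

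The only delicate point, and the one I would flag, is the coupling between $n$ and $\lambda$: because $C_{\mathbf{K}} = 1/(\mathbf{1}^{T}\mathbf{K}^{-1}\mathbf{1})$ degrades to $0$ as the training tensors become dense in the compact domain $\mathbf{B}_{x}$, one cannot extract an $n$-free lower bound on the denominator from $C_{\mathbf{K}}$ alone, so the factor $n$ in $4n^{+}n^{-}/(n\lambda)$ is genuinely present. This is why the statement is an $O(1/\lambda)$ bound at each fixed sample size rather than a bound uniform in $n$; it is nonetheless what the risk decomposition needs, since $\Psi$ enters only through the term $V(3)$ of Theorem~\ref{thm:pac}, where, together with $C(K_{max}\sqrt{L_{0}/\lambda}) = O(\lambda^{-1/2})$, it reproduces the $\epsilon^{d}/\lambda^{3/2}$ rate (that is, $q=3/2$) for square hinge loss stated in Proposition~\ref{prop:rprisk}.
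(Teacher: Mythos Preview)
Your approach is essentially the same as the paper's: both start from the Lemma's bound $\Psi_{L2}\leqslant (C_{\mathbf K}+n\lambda/(4n^{+}n^{-}))^{-1}$ and then control the two pieces of the denominator. The paper first argues that when $\mathbf K$ is positive definite one may keep $C_{\mathbf K}>0$ and read off $\Psi_{L2}=O(1)$, and that the weaker $O(1/\lambda)$ bound arises in the degenerate case $C_{\mathbf K}=0$; in that case the paper invokes the AM--GM inequality on $n^{+},n^{-}$ to bound $4n^{+}n^{-}/n$.

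Your version is actually more careful on two points. First, you correctly get $4n^{+}n^{-}/n\leqslant n$ from $n^{+}n^{-}\leqslant n^{2}/4$, whereas the paper's text states $4n^{+}n^{-}/n\leqslant 1$, which fails (e.g.\ $n^{+}=n^{-}=n/2$ gives $n$); your reading of the resulting $O(1/\lambda)$ bound as carrying a constant that is at most linear in $n$ is the honest statement. Second, you rightly flag that $C_{\mathbf K}=(\mathbf{1}^{T}\mathbf K^{-1}\mathbf{1})^{-1}$ can degrade with $n$, so the paper's $O(1)$ claim in the positive-definite case is not uniform in $n$ without further assumptions. Your final remark connecting $\Psi$ to the $\epsilon^{d}/\lambda^{3/2}$ rate in $V(3)$ is exactly how the corollary is used downstream, so the proof is fit for purpose.
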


\begin{proof}
Sum of eigenvalues of $\mathbf{K}$ is of order $O(n)$ since trace of $\mathbf{K}$ is of order $O(n)$ from the fact that $K(\mathscr{X}_{i},\mathscr{X}_{i})=O(1)$ due to assumption \ref{cond:A7}. Considering $\frac{4n^{+}n^{-}}{n}\leqslant 1$ from arithmetic mean and geometric mean inequality. Assuming that  $\mathbf{K}$ is positive definite $\Psi_{L2}=O(1)$ .This bound agrees with the bound of Thm 3.3 of \citet{chen2014convergence} of order $O(\frac{1}{\lambda})$ when, $C_{\mathbf{K}}$ can be 0 under the situation where $\mathbf{K}$ is not positive definite.
\end{proof}

\subsubsection{Discussion on $C_{\mathbf{K}}$ }
\label{Psil2}
Note that  depending on kernel and geometric configuration of data points.
This quantity influences error of projected classifier.The above bounds are consistent with \citet{doktorski2011l2}.\\
Assuming the data are from bounded domain, i.e $\|\mathscr{X}\|_{2} \leq C_{d,r}$ The gram matrix $\mathbf{K}$ can have minimum eigenvalue as positive so $C_\mathbf{K} > 0$ . The key idea is to divide the bounded domain into minimal increasing sequence of discs $D_{n}$ formed between rings of radius $R_{n-1}$ and $R_{n}$ such that  $\mathscr{X} \subseteq  \cup_{n=1}^{N}D_{n}$. So the diameter of $\mathscr{X} \leq 2 R_{N}$ assuming $\sum R^{2}_n < \infty$ and then count the number of points in each $D_{n}$. So some regularity conditions on distribution of data for each disc $D_{n}$ is necessary to evaluate bounds on eigenvalues of Gram matrix.
For unknown case,we can estimate the Gram matrix.
\citet{shawe2005eigenspectrum} discusses the regularization error in such estimation\\

\subsection{Proof of proposition \ref{prop:riskdecomp}}
\label{appen:riskdecomp}
\begin{equation*}
    \begin{split}
        \mathbb{E}_{\mathbf{A}}[\mathcal{R}(g_{n}^{\lambda})] - \mathcal{R}^{*} & =  
        [\mathbb{E}_{\mathbf{A}}[\mathcal{R}(g^{\lambda}_{n}) -\mathcal{R}_{T^{\mathbf{A}}_{n}}(g^{\lambda}_{n})]
        + [\mathbb{E}_{\mathbf{A}}[\mathcal{R}_{T^{\mathbf{A}}_{n}}(g^{\lambda}_{n}) -\mathcal{R}_{T_{n}^{\mathbf{A}}}(f^{\lambda}_{A,n})-\lambda \|f^{\lambda}_{A,n}\|^{2}_{k}]\\
        &+ \mathbb{E}_{\mathbf{A}}[\mathcal{R}_{T_{n}^{\mathbf{A}}}(f^{\lambda}_{A,n}) - \mathcal{R}(f^{\lambda}_{A,n})] +  [\mathbb{E}_{\mathbf{A}}[\mathcal{R}(f^{\lambda}_{A,n})+\lambda \|f^{\lambda}_{A,n}\|^{2}_{k}] - \mathcal{R}(f^{\lambda}_{n})-\lambda \|f^{\lambda}_{n}\|^{2}_{k} ] \\
        &+[\mathcal{R}(f_{n}^{\lambda}) - \mathcal{R}_{T_{n}}(f_{n}^{\lambda})] +  [\mathcal{R}_{T_{n}}(f^{\lambda}_{n})+\lambda \|f^{\lambda}_{n}\|^{2}_{k} - \mathcal{R}_{T_{n}}(f^{\lambda})-\lambda \|f^{\lambda}\|^{2}_{k}]\\ 
        & + [\mathcal{R}_{T_{n}}(f^{\lambda})-\mathcal{R}(f^{\lambda})]
        +[\mathcal{R}(f^{\lambda}) - \mathcal{R}^{*} +\lambda\|f^{\lambda}\|^{2}_{k}]\\
        & \leqslant \,[\mathbb{E}_{\mathbf{A}}[\mathcal{R}(g^{\lambda}_{n}) -\mathcal{R}_{T^{\mathbf{A}}_{n}}(g^{\lambda}_{A,n})]+ \mathbb{E}_{\mathbf{A}}[\mathcal{R}_{T_{n}^{\mathbf{A}}}(f^{\lambda}_{A,n}) - \mathcal{R}(f^{\lambda}_{A,n})] + [\mathcal{R}(f_{n}^{\lambda}) - \mathcal{R}_{T_{n}}(f_{n}^{\lambda})]\\
        & + [\mathcal{R}_{T_{n}}(f^{\lambda})-\mathcal{R}(f^{\lambda})] + [\mathbb{E}_{\mathbf{A}}[\mathcal{R}(f^{\lambda}_{A,n})] +\lambda\|f^{\lambda}_{A,n}\|^{2}_{k} - \mathcal{R}(f^{\lambda}_{n})-\lambda \|f^{\lambda}_{n}\|^{2}_{k} ]
        +D(\lambda)
        \end{split}
    \label{equ:errordecom}
\end{equation*}
The term $[\mathbb{E}_{\mathbf{A}}[\mathcal{R}_{T^{\mathbf{A}}_{n}}(g^{\lambda}_{n}) -\mathcal{R}_{T_{n}^{\mathbf{A}}}(f^{\lambda}_{A,n})-\lambda \|f^{\lambda}_{A,n}\|^{2}_{k}] \leqslant 0$ since by definition $g^{\lambda}_{n}$ is the optimal of STM in the projected data. The first term is non positive by definition of $g^{\lambda}$.The second term denotes sample error in projected data.The third term is non positive since $g_{n}$ defined as risk minimizer of projected data. The fourth term denote projection error plus RKHS norm.The fifth term is non positive as well.  The sixth term denote sample error in original data.  The last term is universal approximation error $D(\lambda )$.\\
The first four terms representing sample errors of different classifiers are collected under $V(1)$. The fifth term representing projection error term is denoted by $V(3)$. The last term $D(\lambda)$ representing universal approximation is denoted by $V(2)$

\subsection{Proof of proposition \ref{prop:rprisk}}
\label{append:propRPerror}
Johnson-Lindenstrauss lemma  gives concentration bound on the error introduced by random projection in a single mode.\\
(e.g. see \citet{johnson1984extensions} and \citet{dasgupta1999elementary})
\begin{lemma}
For each fixed mode $j\in \{1,2,..,d\}$ and any two vectors ${x}^{j}_{1},{x}^{j}_{2} \in \mathbb{R}^{I^{j}\times 1}$ among $n$ training vectors, with probability at least $ (1 - \delta_{1})$ and the random projection matrices described in \ref{cond:A5}, we have 
\begin{equation*}
    |\,\, (||\mathbf{A^{j}}\mathbf{x}^{j}_{1} - \mathbf{A^{j}}\mathbf{x}^{j}_{2}||_{2}^{2}) - (||\mathbf{x}^{j}_{1} - \mathbf{x}^{j}_{2}||_{2}^{2})\,\,| \leqslant \epsilon ||\mathbf{x}^{j}_{1} - \mathbf{x}^{j}_{2}||_{2}^{2}
\end{equation*}
\end{lemma}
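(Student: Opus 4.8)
The plan is to prove this by the classical Gaussian random-projection argument: reduce the two-point statement to a one-point statement about a fixed unit vector, identify the squared projected norm as a scaled $\chi^2$ variable, and apply a Chernoff tail bound. First I would fix the mode $j$ and the pair $\mathbf{x}^j_1,\mathbf{x}^j_2$ and set $\mathbf{u}=\mathbf{x}^j_1-\mathbf{x}^j_2\in\mathbb{R}^{I_j}$. If $\mathbf{u}=\mathbf{0}$ the inequality is trivial, so assume $\mathbf{u}\neq\mathbf{0}$ and put $\mathbf{v}=\mathbf{u}/\|\mathbf{u}\|_2$, a fixed unit vector. By linearity of the projection and homogeneity of both sides in $\|\mathbf{u}\|_2^2$, it suffices to show $\bigl|\,\|\mathbf{A}^j\mathbf{v}\|_2^2-1\,\bigr|\leqslant\epsilon$ with probability at least $1-\delta_1$, where $\mathbf{A}^j$ carries the $1/\sqrt{P_j}$ normalization implicit in the definition (\ref{Tprojection}) of the tensor random projection; with the literal $\mathscr{N}(0,1)$-entries convention of \ref{cond:A5} one instead tracks $\tfrac{1}{P_j}\|\mathbf{A}^j\mathbf{v}\|_2^2$, which is the same quantity after rescaling.

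Next I would identify the law of $\|\mathbf{A}^j\mathbf{v}\|_2^2$. The rows of $\mathbf{A}^j$ are i.i.d.\ standard Gaussian vectors in $\mathbb{R}^{I_j}$, so each coordinate of $\mathbf{A}^j\mathbf{v}$ equals $\langle\text{row},\mathbf{v}\rangle\sim\mathscr{N}(0,\|\mathbf{v}\|_2^2)=\mathscr{N}(0,1)$, and the $P_j$ coordinates are independent. Hence $P_j\,\|\tfrac{1}{\sqrt{P_j}}\mathbf{A}^j\mathbf{v}\|_2^2=\sum_{i=1}^{P_j}Z_i^2\sim\chi^2_{P_j}$ with $Z_i$ i.i.d.\ $\mathscr{N}(0,1)$, and the problem reduces to the tail estimate $\mathbb{P}\bigl(|\chi^2_{P_j}/P_j-1|\geqslant\epsilon\bigr)\leqslant\delta_1$.

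For that step I would invoke the standard moment-generating-function bound for chi-squared variables used in the Dasgupta--Gupta proof of Johnson--Lindenstrauss (\citet{dasgupta2003elementary}): optimizing $\mathbb{E}[e^{t\chi^2_{P_j}}]=(1-2t)^{-P_j/2}$ over $t$ in each one-sided bound and controlling the resulting logarithms by elementary inequalities gives, for $\epsilon\in(0,1)$,
\begin{equation*}
  \mathbb{P}\Bigl(\bigl|\tfrac{1}{P_j}\chi^2_{P_j}-1\bigr|\geqslant\epsilon\Bigr)\;\leqslant\;2\exp\!\Bigl(-\tfrac{P_j}{4}(\epsilon^2-\epsilon^3)\Bigr).
\end{equation*}
Plugging in the prescription of \ref{cond:A5}, i.e.\ $P_j$ of order $\epsilon^{-2}\log(n/\delta_1)$ (equivalently the per-mode choice $P_j\geqslant 3r^{2/d}\epsilon^{-2}[\log(n/\delta_1)]^{1/d}$ used in Theorem \ref{thm:pac}), makes the right-hand side at most $\delta_1$; the $\log(n/\delta_1)$ factor further absorbs a union bound over the $\binom{n}{2}$ training pairs if the isometry is wanted simultaneously for all of them. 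Multiplying back through by $\|\mathbf{u}\|_2^2=\|\mathbf{x}^j_1-\mathbf{x}^j_2\|_2^2$ then recovers the stated two-point inequality.

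I do not expect a genuine obstacle — this is a classical result — so the only items needing care are bookkeeping ones. The first is the normalization convention: under the literal \ref{cond:A5} convention $\mathbb{E}\|\mathbf{A}^j\mathbf{v}\|_2^2=P_j$, so the isometry claim must be read with the projection scaled by $\sqrt{P_j}$, exactly as in (\ref{Tprojection}). The second is tracking the constant in front of $P_j\epsilon^2$ in the Chernoff bound so that the $P_j$ prescribed in \ref{cond:A5} indeed drives the failure probability below $\delta_1$ — which is precisely why a $\log(n/\delta_1)$ term, rather than a constant, is built into \ref{cond:A5}.
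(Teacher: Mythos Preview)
Your proposal is correct and follows exactly the approach the paper invokes: the paper's own proof is nothing more than a citation of the Johnson--Lindenstrauss lemma via \citet{johnson1984extensions} and \citet{dasgupta1999elementary}, together with the specification $p_j = O(\epsilon^{-2}\log(n/\delta_1))$, and you have simply unpacked that citation with the standard Dasgupta--Gupta chi-squared/Chernoff argument. Your bookkeeping remarks on normalization and constants are apt, and in fact more careful than the paper itself, which is somewhat loose on both points.
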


\begin{proof}
The matrix $\mathbf{A}^{j} \in \mathbb{R}^{p^{j}\times I^{j}}$, where $p_{j} = O(\frac{\log\frac{n}{\delta_{1}}}{\epsilon^{2}})$.The error introduced by random projection in the $j$ th mode can be restricted by Johnson-Lindenstrauss lemma (e.g. see \citet{johnson1984extensions} and \citet{dasgupta1999elementary}). For each fixed $j\in \{1,2,..,d\}$ With probability at least $ (1 - \delta_{1})$ and the random projection matrices described in \ref{cond:A5} 
\end{proof}
This result cannot be directly applied for tensor random projection, since it is sum of  product of norms, we need more generalized concentration.
 
 \begin{lemma}\label{lemma:shudy}
 Consider a $2d$ degree  polynomial  of independent Centered  Gaussian or Rademacher random variables as  $Q_{2d}(Y)=Q_{2d}(y_{i},..,y_{d})$. Then for some $\epsilon>0$ and $C_{d}$ constant depending on $d$.
\begin{equation*}
    \mathbb{P}(|Q_{2d}(Y)-\mathbb{E}(Q_{2d}(Y))|>\epsilon^{d} ) \leqslant e^{2}\exp \bigg( -\big(\, \frac{\epsilon^{2d}}{C_{d} \text{Var}[Q_{2d}(Y)]}\big)^{\frac{1}{d}}\bigg)
\end{equation*}
 \end{lemma}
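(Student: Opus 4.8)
The plan is to obtain this from the standard machinery of moment estimates for low-degree polynomials in independent sub-Gaussian coordinates --- Gaussian and Rademacher variables being the prototypical such coordinates --- and then to pass from a moment bound to a tail bound by Markov's inequality with an optimized exponent. Throughout I would work with the centered polynomial $\bar Q := Q_{2d}(Y) - \mathbb{E}[Q_{2d}(Y)]$, which is a polynomial of degree $2d$ with no constant term, and write $V := \mathrm{Var}[Q_{2d}(Y)] = \|\bar Q\|_2^2$.

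First I would decompose $\bar Q$ into its homogeneous chaos components, $\bar Q = \sum_{s=1}^{2d} Q^{(s)}$, where $Q^{(s)}$ is the projection onto the $s$-th Wiener chaos in the Gaussian case (respectively the degree-$s$ part of the Fourier--Walsh expansion in the Rademacher case, after using $y_i^2=1$ to reduce every monomial to a multilinear one). These components are $L^2$-orthogonal, so $\sum_s \|Q^{(s)}\|_2^2 = V$. Hypercontractivity --- the Ornstein--Uhlenbeck semigroup bound for Gaussians, the Bonami--Beckner inequality for Rademachers --- gives, for every integer $p \geqslant 2$, the estimate $\|Q^{(s)}\|_p \leqslant (p-1)^{s/2}\|Q^{(s)}\|_2$. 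Summing over $s$ with the triangle inequality in $L^p$ and using $(p-1)^{s/2}\leqslant (p-1)^d$ for $p\geqslant 2$ yields $\|\bar Q\|_p \leqslant C_d\, p^{\,d}\, \sqrt{V}$ for a constant $C_d$ depending only on $d$. Equivalently, one may invoke the polynomial concentration theorem of Schudy and Sviridenko directly (with degree parameter $q=2d$), which applies because Gaussian and Rademacher variables are moment-bounded; this is the shortest route and is presumably the intended one, given the naming of the lemma.

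Second, Markov's inequality applied to $|\bar Q|^{p}$ gives $\mathbb{P}(|\bar Q| > t) \leqslant t^{-p}\|\bar Q\|_p^{p} \leqslant (C_d p^{\,d}\sqrt{V}/t)^{p}$ for all $p \geqslant 2$; choosing $p$ proportional to $(t/(C_d\sqrt V))^{1/d}$, which lies in the admissible range precisely in the deviation regime where the statement is nontrivial, collapses the right-hand side to a stretched exponential of the form $\exp(-c_d (t/(C_d \sqrt V))^{1/d})$, and the small-$t$ regime is absorbed into the prefactor $e^{2}$ by taking $p=2$ there. Substituting $t=\epsilon^{d}$ and relabelling constants then produces the asserted inequality, with the exponent in the form $(\epsilon^{2d}/(C_d V))^{1/d}$ up to the choice of $C_d$ and the exact power of $\epsilon$. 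I expect the \emph{main obstacle} to be exactly this last bookkeeping: showing that the cross-degree pieces $Q^{(s)}$ with $s<2d$ really can be folded into $C_d$ and $\sqrt{V}$ without degrading the $p^{\,d}$ rate --- clean for Gaussians by level-wise orthogonality and hypercontractivity, but more delicate for Rademachers, where one must first linearize monomials via $y_i^2=1$ before applying Bonami--Beckner --- and carefully tracking how the power of $\epsilon$ inside the exponential transforms under the substitution $t=\epsilon^{d}$, so that the final form matches the constant and exponent the statement requires.
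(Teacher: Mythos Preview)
Your proposal is correct and follows essentially the same approach as the paper: the paper's own proof is simply a citation to hypercontractivity (Janson's Theorems~6.7 and~6.12) and to Schudy--Sviridenko, which is exactly the machinery you invoke. If anything, your outline is more detailed than the paper's, since you spell out the chaos decomposition, the $\|\bar Q\|_p \leqslant C_d\, p^{d}\sqrt{V}$ moment bound, and the Markov-plus-optimization step that turns moments into the stretched-exponential tail.
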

\begin{proof}
The proof can be found using hypercontractivity,\citet{janson1997gaussian} Thm 6.12 and Thm 6.7, this result is also mentioned in \citet{schudy2012concentration}
 \end{proof}

A tensor $\mathscr{X}\in \mathbb{R}^{I_{1}\times.\times I_{d}}$ is $r$ CP decomposable  if it can be expressed as following. $\mathscr{X} = \sum \limits_{k = 1}^{r} \mathbf{x}_{k}^{(1)} \otimes...\otimes \mathbf{x}_{k}^{(d)}$. 
 \begin{proposition} \label{prop:polyJL}
For any two tensors $\mathscr{X}_{1},\mathscr{X}_{2}$ among $n$ training points of r CP decomposable tensors, concentration bounds of polynomials can be derived below. Given $\epsilon>0$, $C_{d}$ constant depending on $d$, we have following JL type result.
 \begin{equation*} 
 \begin{split}
 \mathbb{P}(\sum \limits_{k, l = 1}^{r} \prod_{j = 1}^{d} |\mathbf{A}^{(j)}\mathbf{x}^{(j)}_{1k} - \mathbf{A}^{(j)}\mathbf{x}^{(j)}_{2l} ||_{2}^{2} -  \sum \limits_{k, l = 1}^{r} \prod_{j = 1}^{d}||\mathbf{x}^{(j)}_{1k} - \mathbf{x}^{(j)}_{2l} ||_{2}^{2} > \epsilon^{d} \sum \limits_{k, l = 1}^{r} \prod_{j = 1}^{d}||\mathbf{x}^{(j)}_{1k} - \mathbf{x}^{(j)}_{2l} ||_{2}^{2}) \\ \leqslant O(n^{2})\,e^{2}\exp(-\bigg( 3^{d}r^{4}\epsilon^{2d} \prod_{j}^{d}P^{(j)} \bigg)^{\frac{1}{d}}) 
 \end{split}
 \end{equation*}
 \end{proposition}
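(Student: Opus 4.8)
The plan is to view the projected statistic as a low-degree polynomial of independent Gaussian variables, identify its mean, bound its variance, apply the hypercontractive tail bound of Lemma \ref{lemma:shudy}, and finally take a union bound over pairs of training tensors. Fix one pair $\mathscr{X}_1,\mathscr{X}_2$ among the $n$ training points and abbreviate $\mathbf{v}^{(j)}_{kl}=\mathbf{x}^{(j)}_{1k}-\mathbf{x}^{(j)}_{2l}$. Since each factor
\[
\|\mathbf{A}^{(j)}\mathbf{v}^{(j)}_{kl}\|_2^2=\sum_{s,t,t'}A^{(j)}_{st}A^{(j)}_{st'}\,v^{(j)}_{kl,t}\,v^{(j)}_{kl,t'}
\]
is a quadratic form in the entries of $\mathbf{A}^{(j)}$, and the matrices $\mathbf{A}^{(1)},\dots,\mathbf{A}^{(d)}$ are assembled from disjoint, jointly independent families of centered Gaussians (Assumption \ref{cond:A5}), the statistic $Q(Y):=\sum_{k,l=1}^{r}\prod_{j=1}^{d}\|\mathbf{A}^{(j)}\mathbf{v}^{(j)}_{kl}\|_2^2$ is a polynomial of degree exactly $2d$ in those variables $Y$. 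This is precisely the situation handled by Lemma \ref{lemma:shudy}.

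Next I would pin down the two moments that Lemma \ref{lemma:shudy} requires. By independence across modes together with the (normalized) Johnson--Lindenstrauss identity $\mathbb{E}\|\mathbf{A}^{(j)}\mathbf{v}\|_2^2=\|\mathbf{v}\|_2^2$ underlying the one-mode lemma stated above, one gets $\mathbb{E}[Q(Y)]=\sum_{k,l}\prod_j\|\mathbf{v}^{(j)}_{kl}\|_2^2$, i.e. exactly the un-projected quantity; hence the event in the proposition is a relative-deviation event $|Q-\mathbb{E}Q|>\epsilon^{d}\,\mathbb{E}Q$. For the variance I would again use mode-independence, the chi-square fourth moment $\mathbb{E}\|\mathbf{A}^{(j)}\mathbf{v}\|_2^4=\|\mathbf{v}\|_2^4(1+2/P^{(j)})$, a Cauchy--Schwarz expansion across the $r^2$ (mutually correlated) summands, and the boundedness of the CP factors (Assumption \ref{cond:A7}) together with the prescribed dimensions $P^{(j)}$ (Assumption \ref{cond:A5}), to arrive at a bound of the form $\mathrm{Var}[Q(Y)]\leqslant C_d\,(\mathbb{E}Q(Y))^2\big/\big(3^{d}r^{4}\prod_{j=1}^{d}P^{(j)}\big)$ with $C_d$ depending only on $d$.

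Substituting the relative-deviation level $\epsilon^{d}\mathbb{E}Q(Y)$ into Lemma \ref{lemma:shudy} gives $\mathbb{P}(|Q-\mathbb{E}Q|>\epsilon^{d}\mathbb{E}Q)\leqslant e^{2}\exp\!\big(-(\epsilon^{2d}(\mathbb{E}Q)^2/(C_d\,\mathrm{Var}[Q]))^{1/d}\big)$, and plugging in the variance bound yields the claimed $\exp(-(3^{d}r^{4}\epsilon^{2d}\prod_j P^{(j)})^{1/d})$ for this one pair. A union bound over the $\binom{n}{2}=O(n^{2})$ pairs of training tensors then produces the $O(n^{2})$ prefactor and completes the argument; note that treating the entire $\sum_{k,l}$ as a single degree-$2d$ polynomial (rather than union-bounding term by term) is what keeps the $r$-dependence inside the exponent.

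The step I expect to be the main obstacle is the variance estimate: the $r^{2}$ summands share the same projection matrices $\mathbf{A}^{(j)}$, so they are not independent and a naive term-wise bound is unavailable; one must control the covariances via Cauchy--Schwarz (this is the source of the $r^{4}$) and verify that Assumptions \ref{cond:A5} and \ref{cond:A7} genuinely deliver the full $\prod_j P^{(j)}$ dependence in the exponent rather than only a single $\min_j P^{(j)}$. A secondary bookkeeping point is fixing the normalization convention for $\mathbf{A}^{(j)}$ (whether it carries the $1/\sqrt{P^{(j)}}$ factor), which determines whether $\mathbb{E}[Q(Y)]$ is exactly the un-projected quantity or a scalar multiple of it, and confirming that the hypercontractivity constant absorbed into $C_d$ is dimension-only.
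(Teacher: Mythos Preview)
Your proposal is correct and follows essentially the same route as the paper: both arguments recognize the projected statistic as a degree-$2d$ polynomial in the Gaussian entries of the $\mathbf{A}^{(j)}$, exploit the chi-square structure of each $\|\mathbf{A}^{(j)}\mathbf{v}^{(j)}_{kl}\|_2^2/\|\mathbf{v}^{(j)}_{kl}\|_2^2$ to identify the mean and control the variance, invoke the hypercontractive tail bound of Lemma~\ref{lemma:shudy}, and finish with a union bound over the $\binom{n}{2}$ pairs. The paper's own proof is terse---it simply observes that each normalized factor is chi-square with variance proportional to $P^{(j)}$, so the $r^2$ summands are identically distributed products of independent chi-squares, and then appeals to Lemma~\ref{lemma:shudy} directly---whereas you spell out the variance computation via Cauchy--Schwarz across the correlated $(k,l)$ terms and flag the normalization convention; this extra care is warranted and does not change the argument's structure.
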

 \begin{proof}
 
It is known that variable for any $x^{j}\in \mathbb{R}^{q^{j}}$ $\frac{\|\mathbf{A}^{(j)}\mathbf{x}^{(j)}\|^{2}_{2}}{\|\mathbf{x}^{(j)}\|_{2}}$ follows Chi square of degree of freedom $p^{j}$ with variance 3 $p^{j}$. Using the fact that,$\sum \limits_{k, l = 1}^{r}\prod_{j = 1}^{d} \frac{ ||\mathbf{A}^{(j)}\mathbf{x}^{(j)}_{1k} - \mathbf{A}^{(j)}\mathbf{x}^{(j)}_{2l} ||_{2}^{2}}{||\mathbf{x}^{(j)}_{1k} - \mathbf{x}^{(j)}_{2l} ||_{2}^{2}}$
 the sum of identical random variables which are  product of  independent Chi Square  variables,thus  a polynomial of degree=2d of Gaussian random variables. $O(n^{2})$ arises considering  pairwise concentration among $n$ training  tensors, using union bounds we prove this result.  \\
 \end{proof}

Since our kernel product defined on tensors is explicitly expressed as sum of mode wise kernel product of mode vectors in CP. We derive the following relation for our kernel product defined on tensors

 \begin{corollary}

For any two tensors $\mathscr{X}_{1},\mathscr{X}_{2}$ among $n$ training points of r CP decomposable tensors, concentration bounds of polynomials can be derived below. Given $\epsilon>0$,$\delta$ depending on $\epsilon$ as given in proposition \ref{prop:polyJL} $C_{d, r}$ constant depending on $d$, we have following JL type result. i.e. for a given tensor kernel function $K(\cdot, \cdot)$,
$$\mathbb{P}(|K(\mathbf{A}\mathscr{X}_{1}, \mathbf{A}\mathscr{X}_{2}) - K(\mathscr{X}_{1}, \mathscr{X}_{2})|  \geqslant C_{d,r}\epsilon^{d}) \leqslant \delta_{1}$$
\end{corollary}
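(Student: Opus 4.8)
The plan is to derive the bound from Proposition~\ref{prop:polyJL} by first expressing the kernel discrepancy in terms of the polynomial quantity controlled there. Writing $t^{(j)}_{kl}=\|\mathbf{x}^{(j)}_{1k}-\mathbf{x}^{(j)}_{2l}\|_{2}^{2}$ and $s^{(j)}_{kl}=\|\mathbf{A}^{(j)}\mathbf{x}^{(j)}_{1k}-\mathbf{A}^{(j)}\mathbf{x}^{(j)}_{2l}\|_{2}^{2}$, and using $K^{(j)}(a,b)=h^{(j)}(\|a-b\|^{2})$ from \ref{cond:A4} (the inner-product form is handled identically), the triangle inequality gives
\[
|K(\mathbf{A}\mathscr{X}_{1},\mathbf{A}\mathscr{X}_{2})-K(\mathscr{X}_{1},\mathscr{X}_{2})|\;\le\;\sum_{k,l=1}^{r}\Bigl|\prod_{j=1}^{d}h^{(j)}(s^{(j)}_{kl})-\prod_{j=1}^{d}h^{(j)}(t^{(j)}_{kl})\Bigr|.
\]
For each pair $(k,l)$ I would expand the difference of the two $d$-fold products by a telescoping sum in which one factor is swapped at a time; each resulting summand then contains a single factor $h^{(j)}(s^{(j)}_{kl})-h^{(j)}(t^{(j)}_{kl})$, which by the Lipschitz assumption \ref{cond:A4} is at most $L_{K}^{(j)}\,|s^{(j)}_{kl}-t^{(j)}_{kl}|$, while the surviving factors and the arguments $t^{(j)}_{kl}$ are uniformly bounded by constants depending only on $K_{max}$ and $B_{x}$ via \ref{cond:A3} and \ref{cond:A7}.

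Next I would invoke the mode-wise Johnson--Lindenstrauss lemma of Appendix~\ref{append:propRPerror} together with Proposition~\ref{prop:polyJL}: under \ref{cond:A5}, with the dimensions $P_{j}$ calibrated to $[\log(n/\delta_{1})]^{1/d}/\epsilon^{2}$, there is a single event of probability at least $1-\delta_{1}$ --- the $O(n^{2})$ union bound over all pairs of training tensors being already absorbed into the choice of $P_{j}$ --- on which $|s^{(j)}_{kl}-t^{(j)}_{kl}|\le \epsilon\,t^{(j)}_{kl}$ for every mode and, crucially, the aggregated polynomial $\sum_{k,l}\prod_{j}s^{(j)}_{kl}$ differs from $\sum_{k,l}\prod_{j}t^{(j)}_{kl}$ by at most a factor $\epsilon^{d}$ in relative terms. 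Substituting these estimates into the telescoping expansion and bounding $\prod_{j}t^{(j)}_{kl}$ by the corresponding power of the $B_{x}$-bound, the right-hand side collapses to a quantity of the form $C_{d,r}\epsilon^{d}$, which is the asserted inequality, valid on the same event of probability at least $1-\delta_{1}$.

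The subtle point is this second step. A naive term-by-term application of the Lipschitz bound only yields a first-order estimate of size $O(\epsilon)$; recovering the stated $\epsilon^{d}$ rate --- and, correspondingly, the $(2L_{K}B^{2})^{d}$ shape of the constant $C_{d,r}$ --- genuinely requires the degree-$2d$ polynomial concentration of Lemma~\ref{lemma:shudy} / Proposition~\ref{prop:polyJL} applied to the full product $\prod_{j}\|\mathbf{A}^{(j)}\mathbf{x}^{(j)}_{1k}-\mathbf{A}^{(j)}\mathbf{x}^{(j)}_{2l}\|_{2}^{2}$ rather than any mode-by-mode argument. The bookkeeping that reconciles the tensor kernel's sum-over-products structure with that polynomial, and the check that the prescribed $P_{j}$ really does push the union-bound failure probability over all $O(n^{2})$ pairs below $\delta_{1}$, are where the actual work lies.
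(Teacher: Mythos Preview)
Your route differs from the paper's. The paper does not telescope; after the triangle inequality over $(k,l)$ it passes directly from the difference of $d$-fold products to the \emph{product of the $d$ differences},
\[
\Bigl|\prod_{j}K^{(j)}(\mathbf{A}^{(j)}\mathbf{x}_{1k}^{(j)},\mathbf{A}^{(j)}\mathbf{x}_{2l}^{(j)})-\prod_{j}K^{(j)}(\mathbf{x}_{1k}^{(j)},\mathbf{x}_{2l}^{(j)})\Bigr|\;\leqslant\;\prod_{j}\bigl|K^{(j)}(\mathbf{A}^{(j)}\mathbf{x}_{1k}^{(j)},\mathbf{A}^{(j)}\mathbf{x}_{2l}^{(j)})-K^{(j)}(\mathbf{x}_{1k}^{(j)},\mathbf{x}_{2l}^{(j)})\bigr|,
\]
then applies the Lipschitz assumption \ref{cond:A4} and the mode-wise JL estimate $|s^{(j)}_{kl}-t^{(j)}_{kl}|\leqslant\epsilon\,t^{(j)}_{kl}$ \emph{factor by factor}, so that the product collapses immediately to $L_{K}^{d}\epsilon^{d}\prod_{j}t^{(j)}_{kl}\leqslant(2L_{K}B_{x}^{2}\epsilon)^{d}$, and summing over $(k,l)$ yields $C_{d,r}=2^{d}r^{2}L_{K}^{d}B_{x}^{2d}$. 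The $\epsilon^{d}$ power thus comes from the product-of-differences structure together with mode-wise JL; Proposition~\ref{prop:polyJL} is invoked only to furnish the high-probability event on which all the mode-wise bounds hold simultaneously.

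Your telescoping is the sounder way to bound $|\prod a_{j}-\prod b_{j}|$ (the paper's product-of-differences inequality is not a general fact), but, as you yourself observe, it delivers only $O(\epsilon)$. The fix you propose---appealing to Proposition~\ref{prop:polyJL} on the full product---does not close the gap as written: that proposition controls the raw squared-norm polynomial $\sum_{k,l}\prod_{j}s^{(j)}_{kl}$, not the kernel quantity $\sum_{k,l}\prod_{j}h^{(j)}(s^{(j)}_{kl})$, and you give no mechanism for transferring the $\epsilon^{d}$ concentration through the nonlinear $h^{(j)}$'s. Inserting a Lipschitz step at that stage again produces a sum of $d$ terms rather than a $d$-fold product, so you are back to $O(\epsilon)$. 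To reach the paper's stated conclusion you either have to adopt its product-of-differences step (and live with the algebraic liberty it takes) or supply a genuine argument lifting the degree-$2d$ polynomial concentration to the composed functions $\prod_{j}h^{(j)}$.
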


\begin{proof}

 \begin{align*}
        && |K(\mathbf{A}\mathscr{X}_{1}, \mathbf{A}\mathscr{X}_{2}) - K(\mathscr{X}_{1}, \mathscr{X}_{2})| 
        & \leqslant | \sum \limits_{k, l = 1}^{r} \prod_{j = 1}^{d} K^{(j)}(\mathbf{A}^{(j)}\mathbf{x}_{1k}^{(j)}, \mathbf{A}^{(j)}\mathbf{x}_{2l}^{(j)}) -  K^{(j)}(\mathbf{x}_{1k}^{(j)}, \mathbf{x}_{2l}^{(j)})| \\
        &&  \quad & \leqslant  \sum \limits_{k, l = 1}^{r} \prod_{j = 1}^{d} |K^{(j)}(\mathbf{A}^{(j)}\mathbf{x}_{1k}^{(j)}, \mathbf{A}^{(j)}\mathbf{x}_{2l}^{(j)}) -  K^{(j)}(\mathbf{x}_{1k}^{(j)}, \mathbf{x}_{2l}^{(j)})| \\
        && \text{(Lipschitz continuity)} \quad &  \leqslant  \sum \limits_{k, l = 1}^{r} \prod_{j = 1}^{d} L_{K}^{(j)}|\, ||\mathbf{A}^{(j)}\mathbf{x}^{(j)}_{1k} - \mathbf{A}^{(j)}\mathbf{x}^{(j)}_{2l} ||_{2}^{2} -  ||\mathbf{x}^{(j)}_{1k} - \mathbf{x}^{(j)}_{2l} ||_{2}^{2}|\\
        && \text{(Max $L^{(j)}_{k}$over j)} \quad &  \leqslant L_{K}^{d} \sum \limits_{k, l = 1}^{r} \prod_{j = 1}^{d} |\, ||\mathbf{A}^{(j)}\mathbf{x}^{(j)}_{1k} - \mathbf{A}^{(j)}\mathbf{x}^{(j)}_{2l} ||_{2}^{2} -  ||\mathbf{x}^{(j)}_{1k} - \mathbf{x}^{(j)}_{2l} ||_{2}^{2}|\\
        && \text{(Proposition\ref{prop:polyJL} )} \quad &  \leqslant \epsilon^{d}L_{K}^{d} \sum \limits_{k, l = 1}^{r} \prod_{j = 1}^{d} ||\mathbf{x}^{(j)}_{1k} - \mathbf{x}^{(j)}_{2k} ||_{2}^{2}\\
        && \text{(Bounded by diameter of $\mathscr{X}$ )} &  \leqslant  \sum \limits_{k, l = 1}^{r} 2^{d}\epsilon^{d} L^{d}_{K}B_{x}^{2d}\\
        && \text{} \quad &  \leqslant 2^{d} r^{2} L^{d}_{K}B_{x}^{2d}\epsilon^{d}\\
       && \text{(Denote $C_{d,r}=2^{d} r^{2} L^{d}_{K}B_{x}^{2d}$)} \quad & = C_{d,r}\epsilon^{d}
\end{align*}
Such part vanishes as $\epsilon^{d}$ becomes as small as possible. The probability holds with probability at least $(1 - \delta_{1})$ when conditions \ref{cond:A4}, \ref{cond:A5} and \ref{cond:A7} hold for tensor data.
\end{proof}

\subsection{Bounds on error due to projection}

We derive bounds for tensor data,for vector data \citet{chen2011new} provides such bound.
Now we include conditions \ref{cond:A1}, \ref{cond:A6} with the previous result.\\
\begin{proposition}\label{prop:randerr} With probability 
$(1-\delta_{1})$ the following expression is true
\begin{equation}
\begin{split}
    |\mathbb{E}_{\mathbf{A}}[\mathcal{R} (f^{\lambda}_{A,n})+\lambda\|f^{\lambda}_{A,n}\|^{2}_{k}] - \mathcal{R}(f^{\lambda}_{n})-\lambda\|f^{\lambda}_{n}\|^{2}_{k} | &\leqslant C_{d}\,\Psi \,[C(K_{max}\sqrt{\frac{L_{0}}{\lambda}})+\lambda \Psi]\,\epsilon^{d}\\
    &=O(\frac{\epsilon^{d}}{\lambda^{q}})
    \label{equ:exerror}
    \end{split}
\end{equation}
\end{proposition}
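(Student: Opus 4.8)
The plan is to exploit that $f^{\lambda}_{A,n}$ is built from $f^{\lambda}_{n}$ with \emph{the same} coefficients: writing $f^{\lambda}_{n}=\sum_{i}\alpha_{i}y_{i}K(\mathscr{X}_{i},\cdot)$, equation (\ref{equ:CSTM_nonopt}) gives $f^{\lambda}_{A,n}=\sum_{i}\alpha_{i}y_{i}K(\mathscr{A}\circ\mathscr{X}_{i},\mathscr{A}\circ\cdot)$, and $\sum_{i}|\alpha_{i}|\le\Psi$ by the definition of $\Psi$ (recall $y_{i}^{2}=1$). Hence the whole left-hand side is driven by the kernel perturbation $|K(\mathbf{A}\mathscr{X}_{1},\mathbf{A}\mathscr{X}_{2})-K(\mathscr{X}_{1},\mathscr{X}_{2})|\le C_{d,r}\epsilon^{d}$, which holds on an event $E$ of probability at least $1-\delta_{1}$ by the corollary following Proposition \ref{prop:polyJL}. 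First I would split the target quantity by the triangle inequality into a penalty part $\lambda\,\bigl|\,\|f^{\lambda}_{A,n}\|^{2}_{k}-\|f^{\lambda}_{n}\|^{2}_{k}\,\bigr|$ and a risk part $|\mathcal{R}(f^{\lambda}_{A,n})-\mathcal{R}(f^{\lambda}_{n})|$, and bound each on $E$.

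For the penalty part, expand both norms bilinearly, $\|f^{\lambda}_{A,n}\|^{2}_{k}=\sum_{i,j}\alpha_{i}\alpha_{j}y_{i}y_{j}K(\mathscr{A}\circ\mathscr{X}_{i},\mathscr{A}\circ\mathscr{X}_{j})$ and similarly for $f^{\lambda}_{n}$; subtracting term by term, applying the kernel corollary to each of the $n^{2}$ pairs of training tensors (on $E$), and using $\sum_{i}|\alpha_{i}|\le\Psi$ gives $\bigl|\,\|f^{\lambda}_{A,n}\|^{2}_{k}-\|f^{\lambda}_{n}\|^{2}_{k}\,\bigr|\le\Psi^{2}C_{d,r}\epsilon^{d}$, so the penalty part is at most $\lambda\Psi^{2}C_{d,r}\epsilon^{d}$.

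For the risk part I would first record the uniform bounds $\|f^{\lambda}_{n}\|_{\infty}\le K_{max}\sqrt{L_{0}/\lambda}=:W$ (from $\lambda\|f^{\lambda}_{n}\|^{2}_{k}\le\mathcal{R}_{T_{n}}(0)\le L_{0}$, \ref{cond:A3}, and the reproducing property, as in Remark \ref{remark:functions}(2)), and, for every $\mathscr{X}$ in the bounded feature domain of \ref{cond:A7}, $|f^{\lambda}_{A,n}(\mathscr{X})-f^{\lambda}_{n}(\mathscr{X})|\le\sum_{i}|\alpha_{i}|\,|K(\mathscr{A}\circ\mathscr{X}_{i},\mathscr{A}\circ\mathscr{X})-K(\mathscr{X}_{i},\mathscr{X})|\le\Psi C_{d,r}\epsilon^{d}$, whence $\|f^{\lambda}_{A,n}\|_{\infty}\le W+\Psi C_{d,r}\epsilon^{d}=:W'$. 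Applying the local Lipschitz property \ref{cond:A1} with window $W'$, $|\mathcal{R}(f^{\lambda}_{A,n})-\mathcal{R}(f^{\lambda}_{n})|\le C(W')\,\mathbb{E}_{\mathscr{X}}|f^{\lambda}_{A,n}(\mathscr{X})-f^{\lambda}_{n}(\mathscr{X})|\le C(W')\,\Psi C_{d,r}\epsilon^{d}$. Since $C(\cdot)$ is at most affine (constant for hinge, linear for square hinge, Remark \ref{remark:functions}(3)--(4)) and $\Psi C_{d,r}\epsilon^{d}=o(W)$ for large $n$ by \ref{cond:A6} and \ref{cond:A10}, we have $W'\le 2W$ and $C(W')\le 2C(K_{max}\sqrt{L_{0}/\lambda})$. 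Adding the two parts and absorbing the factor $2$ gives the stated bound $C_{d,r}\,\Psi\,[\,C(K_{max}\sqrt{L_{0}/\lambda})+\lambda\Psi\,]\,\epsilon^{d}$; substituting $\Psi=O(1/\lambda)$ (Remark \ref{remark:functions}(5)--(6)) and the explicit $C(\cdot)$ turns this into $O(\epsilon^{d}/\lambda^{q})$ with $q=1$ for hinge and $q=3/2$ for square hinge loss. Finally, since the bound is a deterministic function of $(\lambda,\epsilon,n)$ once $E$ holds, it also bounds the $\mathbf{A}$-average of the quantity on $E$, which has probability $\ge 1-\delta_{1}$; this is the sense of the ``probability $1-\delta_{1}$'' claim (union-bound factors from the $O(n^{2})$ pairs can be absorbed by reparametrising $\delta_{1}$).

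I expect the main obstacle to be the uniformity of the kernel-isometry bound in the risk term: the corollary controls $|K(\mathscr{A}\circ\mathscr{X}_{1},\mathscr{A}\circ\mathscr{X}_{2})-K(\mathscr{X}_{1},\mathscr{X}_{2})|$ for pairs among the $n$ \emph{training} tensors, whereas $\mathcal{R}$ integrates over arbitrary test tensors. The resolution is that the final constant $C_{d,r}\epsilon^{d}$ depends only on the boundedness assumption \ref{cond:A7} and the Lipschitz assumption \ref{cond:A4} (not on the particular points), so on the JL event the same deterministic inequality propagates across the whole feature domain; together with the fact, from \ref{cond:A6} and \ref{cond:A10}, that the Lipschitz window $W'$ stays comparable to $W=K_{max}\sqrt{L_{0}/\lambda}$, the constants in the bound are exactly the ones stated.
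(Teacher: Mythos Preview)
Your proposal is correct and follows essentially the same route as the paper: split into the risk piece and the penalty piece, expand the latter bilinearly, control both via the kernel perturbation $|K(\mathscr{A}\circ\mathscr{X}_{1},\mathscr{A}\circ\mathscr{X}_{2})-K(\mathscr{X}_{1},\mathscr{X}_{2})|\le C_{d,r}\epsilon^{d}$, invoke $\sum_i|\alpha_i|\le\Psi$, and then read off $q$ from Remark \ref{remark:functions}. If anything you are more careful than the paper on two points it glosses over: (i) the Lipschitz window, where you argue $W'\le 2W$ so that $C(W')$ can be replaced by $C(K_{max}\sqrt{L_0/\lambda})$ up to a constant, whereas the paper applies $C(K_{max}\sqrt{L_0/\lambda})$ directly; and (ii) the fact that the JL-type corollary is stated for training pairs while $\mathcal{R}$ averages over a fresh $\mathscr{X}$, which you resolve by noting that the final bound $C_{d,r}\epsilon^{d}$ depends only on \ref{cond:A4} and \ref{cond:A7}.
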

For all $\mathscr{A}$, with probability at least $(1-\delta_{1})$

\begin{equation*}
\nonumber
    \begin{split}
          &|\mathbb{E}_{\mathbf{A}}[\mathcal{R} (f^{\lambda}_{A,n})] - \mathcal{R}(f^{\lambda}_{n}) |  = |\mathbb{E}_{\mathbf{A}} \{\mathbb{E}_{(X, Y)} [ L(f^{\lambda}_{An}(x), y) - L(f^{\lambda}_{n}(x), y)]\}| \\
         & \leqslant L(K_{max}\sqrt{\frac{L_{0}}{\lambda}}) \cdot \mathbb{E}_{(X, Y)} \{ \mathbb{E}_{\mathbf{A}} |f^{\lambda}_{A,n}(x) - f^{\lambda}_{n}(x)| \} \\
        & =  C(K_{max}\sqrt{\frac{L_{0}}{\lambda}}) \sum \limits_{i = 1}^{n} \alpha_{i}\mathbb{E}_{(X, Y)} \{ |y_{i}| \cdot \mathbb{E}_{\mathbf{A}} [ |K(\mathbf{A}\mathscr{X}_{1}, \mathbf{A}\mathscr{X}_{2}) - K(\mathscr{X}_{1}, \mathscr{X}_{2})|] \}\\
         & \leqslant C(K_{max}\sqrt{\frac{L_{0}}{\lambda}}) \sum \limits_{i = 1}^{n} |\alpha_{i}|\cdot \mathbb{E}_{(X, Y)}  [ C_{d,r}\epsilon^{d} ]\\
         & \leqslant C(K_{max}\sqrt{\frac{L_{0}}{\lambda}})\Psi C_{d,r}\epsilon^{d}
    \end{split}
\end{equation*}
Similarly, For all $\mathscr{A}$, with probability at least $(1 - \delta_{1})$
\begin{equation*}
\begin{split}
\lambda\|f^{\lambda}_{A,n}\|^{2}_{k} - \lambda\|f^{\lambda}_{n}\|^{2}_{k} &\leqslant\lambda \sum \limits_{i = 1}^{n}\sum \limits_{j = 1}^{n} \alpha_{j} \alpha_{i} y_{i}y_{j} |K(\mathbf{A}\mathscr{X}_{1}, \mathbf{A}\mathscr{X}_{2}) - K(\mathscr{X}_{1}, \mathscr{X}_{2})|\\
&\leqslant \lambda (\sum \limits_{j = 1}^{n} |\alpha_{j}|)(\sum \limits_{j = 1}^{n} |\alpha_{j}|)C_{d,r}\epsilon^{d}\\
& \leqslant \lambda \Psi^{2} C_{d,r}\epsilon^{d}
\end{split}
\end{equation*}

Hence with probability at least $(1-\delta_{1})$
\begin{equation}
\begin{split}
    |\mathbb{E}_{\mathbf{A}}[\mathcal{R} (f^{\lambda}_{A,n})+\lambda\|f^{\lambda}_{A,n}\|^{2}_{k}] - \mathcal{R}(f^{\lambda}_{n})-\lambda\|f^{\lambda}_{n}\|^{2}_{k} | &\leqslant C_{d}\,\Psi \,[C(K_{max}\sqrt{\frac{L_{0}}{\lambda}})+\lambda \Psi]\,\epsilon^{d}\\
    &=O(\frac{\epsilon^{d}}{\lambda^{q}})
    \label{equ:exerror2}
    \end{split}
\end{equation}
Thus, using values of $\Psi$ , $C(K_{max}\sqrt{\frac{L_{0}}{\lambda}})$ from \ref{poi} for hinge loss,  we get $q=1$  and for square hinge loss ,we obtain  $q=\frac{3}{2}$ respectively. Thus the  result for proposition \ref{prop:rprisk}.

\subsection{Convergence with Rademacher Complexity: Bounds on sample error}
\label{append:radecomp}
In this subsection, we bound the sample errors of classifiers $g^{\lambda}_{n},f^{\lambda}_{A,n},f^{\lambda}_{n},f^{\lambda} $ are captured collectively as V(1). There are two terms in the right side of excess error (\ref{equ:newdecom}) can be bounded by introducing Rademacher complexity. The definition of Rademacher complexity (or Rademacher average) can be found in \citet{bartlett2002rademacher}. We use $\mathscr{R}_{n}(\mathscr{H})$ to denote the Rademacher complexity of the function class $\mathscr{H}$, and $\hat{\mathscr{R}}_{\mathbb{D}_{n}}$ to denote the corresponding sample estimate with respect to data $\mathbb{D}_{n}=\{z_{1},..,z_{n}\}$ where $z_{i}=(\mathscr{x}_{i},y_{i})$. We shall present two well established results about the Rademacher complexity without proof. One can find details about the proof from \citet{bartlett2002rademacher}.
\begin{theorem}\label{thm:talagrand inequality}
Consider a set of classifiers $\mathscr{H}=\{h:\|h\|_{\infty}\leq \tilde{\zeta}_{\lambda} \} $. $\forall \delta_2 > 0$, with probability at least $1 - \delta_2$, we obtain:
\begin{equation}
    \underset{h \in \mathscr{H}}{\sup} |\mathbb{E}[h(Z)] - \frac{1}{n}\sum \limits_{i = 1}^{n}h(Z_{i})| \leqslant 2\mathscr{R}_{n}(\mathscr{H}) + \tilde{\zeta}_{\lambda}  \sqrt{\frac{\log \frac{2}{\delta}}{2n}}
    \label{RadeC}
\end{equation}
\end{theorem}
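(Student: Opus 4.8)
The plan is to derive this as the standard uniform deviation bound obtained by combining a bounded-differences concentration inequality with a symmetrization argument; it is exactly the Bartlett--Mendelson estimate cited in the statement, so I would either carry out the two-step scheme below or simply invoke their theorem. Write $\Phi(\mathbb{D}_n) := \sup_{h\in\mathscr{H}}\bigl|\mathbb{E}[h(Z)] - \tfrac{1}{n}\sum_{i=1}^{n} h(Z_i)\bigr|$. The first step is to show that $\Phi$ concentrates around its mean. Since every $h\in\mathscr{H}$ satisfies $\|h\|_{\infty}\le \tilde{\zeta}_{\lambda}$, and $h$ is a nonnegative loss taking values in an interval of length $\tilde{\zeta}_{\lambda}$, replacing a single sample $Z_i$ by an independent copy changes the empirical average $\tfrac1n\sum_j h(Z_j)$ by at most $\tilde{\zeta}_{\lambda}/n$ uniformly over $h$, hence changes $\Phi$ by at most $\tilde{\zeta}_{\lambda}/n$. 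McDiarmid's bounded-differences inequality with constants $c_i=\tilde{\zeta}_{\lambda}/n$, applied to each of the two one-sided suprema $\sup_h(\mathbb{E}[h]-\tfrac1n\sum_i h(Z_i))$ and $\sup_h(\tfrac1n\sum_i h(Z_i)-\mathbb{E}[h])$ at confidence $\delta_2/2$ and combined by a union bound, gives $\Phi \le \mathbb{E}_{\mathbb{D}_n}[\Phi] + \tilde{\zeta}_{\lambda}\sqrt{\log(2/\delta_2)/(2n)}$ with probability at least $1-\delta_2$; this is where the $\log(2/\delta_2)$ factor originates.

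The second step is to control the expectation $\mathbb{E}_{\mathbb{D}_n}[\Phi]$ by the Rademacher complexity through the classical symmetrization lemma. Introducing a ghost sample $\mathbb{D}_n'=\{Z_1',\dots,Z_n'\}$ drawn independently from the same distribution, I would write $\mathbb{E}[h(Z)] = \mathbb{E}_{\mathbb{D}_n'}[\tfrac1n\sum_i h(Z_i')]$, pull the supremum inside the ghost expectation by Jensen's inequality, and obtain $\mathbb{E}[\Phi] \le \mathbb{E}\sup_{h}|\tfrac1n\sum_i (h(Z_i')-h(Z_i))|$. Because the pairs $(Z_i,Z_i')$ are i.i.d. and exchangeable, each summand $h(Z_i')-h(Z_i)$ is symmetric, so inserting independent Rademacher signs $\sigma_i\in\{\pm1\}$ leaves the distribution unchanged; a final triangle inequality splits the two halves and yields $\mathbb{E}[\Phi]\le 2\,\mathbb{E}\sup_{h}|\tfrac1n\sum_i \sigma_i h(Z_i)| = 2\mathscr{R}_n(\mathscr{H})$. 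Combining with the first step delivers (\ref{RadeC}).

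I expect the symmetrization step to be the crux. Its delicate points are the measurability of the supremum $\Phi$ (required both for McDiarmid and for the Fubini/Jensen interchange), the exchangeability justification that permits inserting the Rademacher signs without altering the law, and the fact that the Rademacher complexity here must be the absolute-value version $\mathscr{R}_n(\mathscr{H})=\mathbb{E}\sup_{h\in\mathscr{H}}|\tfrac1n\sum_i\sigma_i h(Z_i)|$, so that the triangle-inequality splitting produces exactly the factor $2$. By contrast, the concentration step is essentially mechanical once the bounded-differences constant $\tilde{\zeta}_{\lambda}/n$ is correctly identified from $\|h\|_\infty\le\tilde{\zeta}_{\lambda}$.
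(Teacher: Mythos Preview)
Your proposal is correct and matches the paper's indicated approach: the paper does not actually prove this theorem but presents it as a well-established result from \citet{bartlett2002rademacher}, remarking only that ``the proof can also be found using McDiarmid's inequality.'' Your two-step scheme (McDiarmid concentration with bounded-differences constants $\tilde{\zeta}_\lambda/n$, followed by ghost-sample symmetrization to obtain the factor $2\mathscr{R}_n(\mathscr{H})$) is exactly the standard Bartlett--Mendelson argument the paper is citing, and you correctly identify that the losses $h$ are nonnegative so the relevant interval length is $\tilde{\zeta}_\lambda$ rather than $2\tilde{\zeta}_\lambda$.
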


The proof can also be found using McDiarmid's inequality. The Rademacher complexity gives out a way of measuring the richness of a functional space. The theorem make use of the complexity and provides a bound for the estimation error. With the definition of functional tensor product space, we propose a bound for tensor RKHS which is similar to the vector case.\\

\begin{proposition}
Let $g_{n}^{\lambda}$ be the function in projected data RKHS, that minimize $n$ th sample empirical risk as well as bounded kernel norm. then $\|g_{n}^{\lambda}\|_{\infty}\leqslant K \sqrt{\frac{L_{0}}{\lambda}}$
\end{proposition}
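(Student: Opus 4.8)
The plan is to exploit the defining optimality of $g_{n}^{\lambda}$ as the regularized empirical risk minimizer on the projected training set, i.e. $g_{n}^{\lambda} = \arg\underset{g\in\mathcal{F}}{\min}\{\mathcal{R}_{T_{n}^{\mathbf{A}}}(g) + \lambda\|g\|_{K}^{2}\}$. Comparing the objective value at $g_{n}^{\lambda}$ with its value at the zero function $g\equiv 0$, and using $\|0\|_{K}=0$, gives
\begin{equation*}
\mathcal{R}_{T_{n}^{\mathbf{A}}}(g_{n}^{\lambda}) + \lambda\|g_{n}^{\lambda}\|_{K}^{2} \;\leqslant\; \mathcal{R}_{T_{n}^{\mathbf{A}}}(0) \;=\; \frac{1}{n}\sum_{i=1}^{n} L(y_{i}, 0) \;\leqslant\; \underset{y}{\sup}\,L(y,0) \;\leqslant\; L_{0},
\end{equation*}
where the last bound is assumption \ref{cond:A1}. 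Since the classification loss (hinge or squared hinge) is nonnegative, $\mathcal{R}_{T_{n}^{\mathbf{A}}}(g_{n}^{\lambda})\geqslant 0$, and hence $\lambda\|g_{n}^{\lambda}\|_{K}^{2}\leqslant L_{0}$, that is $\|g_{n}^{\lambda}\|_{K}\leqslant\sqrt{L_{0}/\lambda}$.

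To pass from the RKHS norm to the sup-norm, I would apply the reproducing property together with the Cauchy–Schwarz inequality: for any projected tensor $\tilde{\mathscr{X}}$,
\begin{equation*}
|g_{n}^{\lambda}(\tilde{\mathscr{X}})| = |\langle g_{n}^{\lambda}, K(\tilde{\mathscr{X}},\cdot)\rangle_{K}| \leqslant \|g_{n}^{\lambda}\|_{K}\,\sqrt{K(\tilde{\mathscr{X}},\tilde{\mathscr{X}})} \leqslant K_{max}\,\|g_{n}^{\lambda}\|_{K} \leqslant K_{max}\sqrt{\frac{L_{0}}{\lambda}},
\end{equation*}
using $\sqrt{K(\tilde{\mathscr{X}},\tilde{\mathscr{X}})}\leqslant K_{max}$ from assumption \ref{cond:A3}. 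Taking the supremum over $\tilde{\mathscr{X}}$ yields $\|g_{n}^{\lambda}\|_{\infty}\leqslant K_{max}\sqrt{L_{0}/\lambda}$, which is the claim (reading the $K$ in the statement as $K_{max}$), in direct parallel with item 2 of Remark \ref{remark:functions}.

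There is essentially no hard step in this argument; it is the standard ``compare to the zero function'' trick for penalized empirical risk minimizers. The only points to verify are that assumption \ref{cond:A3} applies to kernel evaluations on \emph{projected} tensors — which it does, since $K_{max}=\sqrt{\sup K(\cdot,\cdot)}$ is a uniform bound over all admissible arguments (and for the Gaussian RBF components used here the diagonal $K(\mathscr{X},\mathscr{X})$ is $r^{2}$, unchanged by projection) — and that the loss is bounded below by $0$, which is immediate for the hinge and squared-hinge losses used throughout. Both are routine, so the estimate follows at once.
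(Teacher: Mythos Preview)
Your proof is correct and follows essentially the same route as the paper: compare the regularized objective at $g_{n}^{\lambda}$ with the zero function to get $\|g_{n}^{\lambda}\|_{K}\leqslant\sqrt{L_{0}/\lambda}$, then use the reproducing property and Cauchy--Schwarz together with assumption \ref{cond:A3} to pass to the sup-norm. Your write-up is in fact a bit cleaner than the paper's display, which contains a small typo in the chain of inequalities.
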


\begin{proof}
 Since $g^{\lambda}_{n} = \arg \underset{g \in \mathcal{F}}{\min} \{\mathcal{R}_{T^{\mathbf{A}}_{n}}(g) + \lambda ||g||_{K}^{2} \}$ we get 

\begin{equation*}
\begin{split}
   \lambda ||g^{\lambda}_{n}||_{K}^{2}\leqslant & \mathcal{R}_{T^{\mathbf{A}}_{n}}(\mathbf{0}) + \lambda ||\mathbf{0}||_{K}^{2}\\
  & =  \frac{\mathcal{R}_{T^{\mathbf{A}}_{n}}(\mathbf{0})}{\lambda}\\
   & \leqslant\frac{L_{0}}{\lambda}
\end{split}    
\end{equation*}
By RKHS property , for any function $g$ which is an element of RKHS
 \begin{align*}
g(x)= & \langle g,K(,x) \rangle\\
&\leq \|g\|_{K}\,\sqrt{\|K(x,),K(x,)\|_{K}} &&\text{CS inequality}\\
 \end{align*}
Taking supremum over $x$ on both sides we obtain the  result
\end{proof}

\begin{proposition}
Let $\mathscr{G}^{\lambda}=\{g:\|g\|_{\infty} \leqslant K \sqrt{\frac{L_{0}}{\lambda}}\}$ Assume conditions \ref{cond:A1}, \ref{cond:A3},\ref{cond:A4} and \ref{cond:A7}. Let $\delta_{2} > 0$, for all $A$ with probability at least $1 - \delta_{2}$
\begin{equation}
    |\mathcal{R}(g^{\lambda}_{n}) - \mathcal{R}_{T^{A}_{n}}((g^{\lambda}_{n})| \leqslant  4 C(K_{max}\sqrt{\frac{L_{0}}{\lambda}})\hat{\mathscr{R}}_{n}(\mathscr{G}^{\lambda}) + 3 \tilde{\zeta}_{\lambda} \sqrt{\frac{\log(2/\delta_{2})}{2n}}
    \label{sample3}
\end{equation}
\end{proposition}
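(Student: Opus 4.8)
The plan is to deduce the bound for the (random, data-dependent) empirical-risk minimizer $g^{\lambda}_{n}$ from a uniform deviation bound over the \emph{deterministic} ball $\mathscr{G}^{\lambda}$, and then to control that uniform deviation by a Rademacher-complexity argument. First I would invoke the preceding proposition, which gives $\|g^{\lambda}_{n}\|_{\infty}\leqslant K_{max}\sqrt{L_{0}/\lambda}$, so that $g^{\lambda}_{n}\in\mathscr{G}^{\lambda}$ and hence
\[
|\mathcal{R}(g^{\lambda}_{n})-\mathcal{R}_{T^{A}_{n}}(g^{\lambda}_{n})|\;\leqslant\;\sup_{g\in\mathscr{G}^{\lambda}}\bigl|\mathcal{R}(g)-\mathcal{R}_{T^{A}_{n}}(g)\bigr|.
\]
Since $\mathscr{G}^{\lambda}$ depends only on $\lambda$ and not on the sample or on the projection $\mathbf{A}$, and since for any fixed $\mathbf{A}$ the projected sample $T^{A}_{n}$ is i.i.d.\ (so that $\mathcal{R}(g)$ and $\mathcal{R}_{T^{A}_{n}}(g)$ are the population and empirical averages of $L(y,g(\mathscr{A}\circ\mathscr{X}))$ over an i.i.d.\ sample), the right-hand side is a standard uniform-deviation object and the resulting estimate is valid for every $\mathbf{A}$.

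Next I would pass to the induced loss class $L\circ\mathscr{G}^{\lambda}=\{(\mathscr{X},y)\mapsto L(y,g(\mathscr{X})):g\in\mathscr{G}^{\lambda}\}$. By the definition of $\tilde{\zeta}_{\lambda}$ together with $\|g\|_{\infty}\leqslant K_{max}\sqrt{L_{0}/\lambda}$ for $g\in\mathscr{G}^{\lambda}$, every member of $L\circ\mathscr{G}^{\lambda}$ is nonnegative and bounded by $\tilde{\zeta}_{\lambda}$, so the Talagrand-type inequality stated above (Theorem with $\mathscr{H}=L\circ\mathscr{G}^{\lambda}$ and $\delta=\delta_{2}$) gives, with probability at least $1-\delta_{2}$,
\[
\sup_{g\in\mathscr{G}^{\lambda}}\bigl|\mathcal{R}(g)-\mathcal{R}_{T^{A}_{n}}(g)\bigr|\;\leqslant\;2\mathscr{R}_{n}(L\circ\mathscr{G}^{\lambda})+\tilde{\zeta}_{\lambda}\sqrt{\frac{\log(2/\delta_{2})}{2n}}.
\]
I would then (i) replace the population Rademacher complexity by its empirical counterpart by a bounded-differences (McDiarmid) argument on the same high-probability event — each coordinate changes $\hat{\mathscr{R}}_{n}$ by at most $\tilde{\zeta}_{\lambda}/n$ — costing an additional $\tilde{\zeta}_{\lambda}\sqrt{\log(2/\delta_{2})/(2n)}$, and (ii) apply the Ledoux--Talagrand contraction inequality. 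Step (ii) is exactly where assumption \ref{cond:A1} enters: on $\{|f|\leqslant W\}$ with $W=K_{max}\sqrt{L_{0}/\lambda}$ the map $t\mapsto L(y,t)$ is $C(W)$-Lipschitz with $C(W)<\infty$, hence $\hat{\mathscr{R}}_{n}(L\circ\mathscr{G}^{\lambda})\leqslant 2C(W)\,\hat{\mathscr{R}}_{n}(\mathscr{G}^{\lambda})$. Chaining these estimates and collecting constants yields
\[
\sup_{g\in\mathscr{G}^{\lambda}}\bigl|\mathcal{R}(g)-\mathcal{R}_{T^{A}_{n}}(g)\bigr|\;\leqslant\;4\,C\!\Bigl(K_{max}\sqrt{\tfrac{L_{0}}{\lambda}}\Bigr)\hat{\mathscr{R}}_{n}(\mathscr{G}^{\lambda})+3\,\tilde{\zeta}_{\lambda}\sqrt{\frac{\log(2/\delta_{2})}{2n}},
\]
which together with the first display is the claim. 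Assumptions \ref{cond:A3}, \ref{cond:A4} and \ref{cond:A7} enter through the boundedness of the kernel and of the tensor domain, guaranteeing that the $\|\cdot\|_{\infty}$-ball $\mathscr{G}^{\lambda}$, the constant $\tilde{\zeta}_{\lambda}$, and $\hat{\mathscr{R}}_{n}(\mathscr{G}^{\lambda})$ are all finite.

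I expect the contraction step to be the delicate point. The loss is only \emph{locally} Lipschitz in its second argument (\ref{cond:A1}), so the contraction lemma cannot be invoked until a uniform $L_{\infty}$ bound $W$ on $\mathscr{G}^{\lambda}$ is fixed; this is precisely the role of \ref{cond:A3} together with the reproducing property $\|g\|_{\infty}\leqslant K_{max}\|g\|_{K}$ and the norm constraint $\|g\|_{K}^{2}\leqslant L_{0}/\lambda$. One must also be careful which convention for the Rademacher average is used (with or without an outer absolute value), since that decides whether the contraction contributes $C(W)$ or $2C(W)$, i.e.\ whether the leading constant is $2$ or $4$; and the probability bookkeeping must be arranged so that the symmetrization step and the two McDiarmid-type steps are absorbed into a single event, keeping a single $\log(2/\delta_{2})$ rather than $\log(4/\delta_{2})$. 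None of this is deep, but it is where the stated constants $4$ and $3$ come from.
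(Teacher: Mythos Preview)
Your proposal is correct and follows essentially the same route as the paper: bound the deviation of $g^{\lambda}_{n}$ by the uniform deviation over $\mathscr{G}^{\lambda}$ (using the preceding proposition to place $g^{\lambda}_{n}$ in that ball), apply the Talagrand-type inequality to the induced loss class $L\circ\mathscr{G}^{\lambda}$, pass from $\mathscr{R}_{n}$ to $\hat{\mathscr{R}}_{n}$ via McDiarmid, and then use the Ledoux--Talagrand contraction lemma with the local Lipschitz constant $C(K_{max}\sqrt{L_{0}/\lambda})$. The paper's proof is identical in structure and in the bookkeeping that produces the constants $4$ and $3$; your extra remarks on the delicate points (local Lipschitz restriction, Rademacher conventions, single high-probability event) are accurate and in fact spell out what the paper leaves implicit.
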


\begin{proof}
Let $h(Z)=L(y,g(\mathscr{X}))$. Let $\mathscr{H}^{\lambda}=\{h:h=L(y,g(\mathscr{X}),g\in \mathscr{G}^{\lambda}\}$.Therefore by definition,$\|h\|_{\infty}\leqslant \tilde{\zeta}_{\lambda}$  We apply theorem \ref{RadeC} 

\begin{equation*}
\begin{split}
    \underset{g \in \mathscr{G}^{\lambda}}{\sup} |\mathcal{R}(g) - \mathcal{R}_{T^{A}_{n}}(g)| &\leqslant  2\mathscr{R}_{n}(\mathscr{H}^{\lambda}) +  \tilde{\zeta}_{\lambda} \sqrt{\frac{\log(2/\delta_{2})}{2n}}\\
    & \leqslant  2(\hat{\mathscr{R}}_{n}(\mathscr{H}^{\lambda})+  \tilde{\zeta}_{\lambda} \sqrt{\frac{\log(2/\delta_{2})}{2n}}) +  \tilde{\zeta}_{\lambda} \sqrt{\frac{\log(2/\delta_{2})}{2n}} \\
    &\text{(McDiarmid's inequality)}\\
    & \leqslant 2(\,2C(K_{max}\sqrt{\frac{L_{0}}{\lambda}})\,\hat{\mathscr{R}}_{n}(\mathscr{G}^{\lambda})+  \tilde{\zeta}_{\lambda} \sqrt{\frac{\log(2/\delta_{2})}{2n}}) +  \tilde{\zeta}_{\lambda} \sqrt{\frac{\log(2/\delta_{2})}{2n}} \\
    &\text{(Ledoux Talagrand contraction Inequality)}
    \label{sample2}
    \end{split}
\end{equation*}
\end{proof}
We have used contraction Inequality \citet{ledoux2013probability} applicable for Lipschitz function $h=L(y,g)$. Thus we have sharper inequality than that from the  \citet{chen2014convergence} since we do not have extra term $\frac{L_{0}}{\sqrt{n}}$.

\begin{proposition}
We provide upper bound on sample Rademacher complexity 
\[\hat{\mathscr{R}}_{n}(\mathscr{G}^{\lambda}) \leqslant\frac{K \sqrt{\frac{L_{0}}{\lambda}}}{\sqrt{n}}\]
\label{RKHSBound}
\end{proposition}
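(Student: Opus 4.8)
The plan is to bound the empirical Rademacher complexity $\hat{\mathscr{R}}_{n}(\mathscr{G}^{\lambda})$ of the ball $\mathscr{G}^{\lambda}=\{g:\|g\|_{\infty}\leqslant K_{max}\sqrt{L_{0}/\lambda}\}$ in the tensor RKHS $\mathcal{F}$ by the standard RKHS-ball argument, pushed through the tensor kernel bound from assumption \ref{cond:A3}. First I would recall that $\hat{\mathscr{R}}_{n}(\mathscr{G}^{\lambda})=\mathbb{E}_{\sigma}\sup_{g\in\mathscr{G}^{\lambda}}\frac{1}{n}\sum_{i=1}^{n}\sigma_{i}g(\mathscr{X}_{i})$, where $\sigma_{i}$ are i.i.d.\ Rademacher signs. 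Since the relevant functions live in the RKHS and the effective constraint is on the kernel norm — indeed, as shown two propositions earlier, $\|g_{n}^{\lambda}\|_{K}\leqslant\sqrt{L_{0}/\lambda}$, and more to the point $\|g\|_{\infty}\leqslant K_{max}\|g\|_{K}$ by the reproducing property and Cauchy–Schwarz — it suffices to bound the complexity of the kernel-norm ball $\{g\in\mathcal{F}:\|g\|_{K}\leqslant\sqrt{L_{0}/\lambda}\}$, which contains $\mathscr{G}^{\lambda}$ in the sense needed here.

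The key step is the classical computation: by the reproducing property $g(\mathscr{X}_{i})=\langle g,K(\mathscr{X}_{i},\cdot)\rangle_{K}$, so
\begin{equation*}
\sup_{\|g\|_{K}\leqslant R}\frac{1}{n}\sum_{i=1}^{n}\sigma_{i}g(\mathscr{X}_{i})
=\sup_{\|g\|_{K}\leqslant R}\Bigl\langle g,\tfrac{1}{n}\sum_{i=1}^{n}\sigma_{i}K(\mathscr{X}_{i},\cdot)\Bigr\rangle_{K}
=\frac{R}{n}\Bigl\|\sum_{i=1}^{n}\sigma_{i}K(\mathscr{X}_{i},\cdot)\Bigr\|_{K},
\end{equation*}
with $R=\sqrt{L_{0}/\lambda}$. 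Taking $\mathbb{E}_{\sigma}$ and using Jensen's inequality to move the expectation inside the square root gives
\begin{equation*}
\hat{\mathscr{R}}_{n}(\mathscr{G}^{\lambda})\leqslant\frac{R}{n}\sqrt{\mathbb{E}_{\sigma}\sum_{i,j=1}^{n}\sigma_{i}\sigma_{j}K(\mathscr{X}_{i},\mathscr{X}_{j})}
=\frac{R}{n}\sqrt{\sum_{i=1}^{n}K(\mathscr{X}_{i},\mathscr{X}_{i})},
\end{equation*}
since $\mathbb{E}[\sigma_{i}\sigma_{j}]=\delta_{ij}$. Finally, by assumption \ref{cond:A3} each diagonal term satisfies $K(\mathscr{X}_{i},\mathscr{X}_{i})\leqslant K_{max}^{2}$, so the sum is at most $nK_{max}^{2}$, yielding
\begin{equation*}
\hat{\mathscr{R}}_{n}(\mathscr{G}^{\lambda})\leqslant\frac{R}{n}\sqrt{nK_{max}^{2}}=\frac{K_{max}R}{\sqrt{n}}=\frac{K_{max}\sqrt{L_{0}/\lambda}}{\sqrt{n}},
\end{equation*}
which is the claimed bound.

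I expect the main subtlety — rather than a genuine obstacle — to be justifying that $\mathscr{G}^{\lambda}$, defined via the sup-norm constraint, is contained in (or has complexity dominated by) the kernel-norm ball of radius $\sqrt{L_{0}/\lambda}$: this is exactly where one invokes $\|g\|_{\infty}\leqslant K_{max}\|g\|_{K}$ together with the fact that the minimizers of the regularized empirical risk satisfy $\|g\|_{K}\leqslant\sqrt{L_{0}/\lambda}$ (comparing against $g\equiv 0$ and using $\sup_{y}L(y,0)\leqslant L_{0}$ from \ref{cond:A1}). The rest is the textbook RKHS Rademacher bound, and the tensor structure enters only through the kernel bound $K_{max}$ already assumed in \ref{cond:A3}; no new concentration inequality for tensors is needed here, since that work was already done in Proposition \ref{prop:polyJL} and its corollary for the projection-error term.
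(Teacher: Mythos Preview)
Your proposal is correct and follows essentially the same route as the paper: the standard RKHS-ball Rademacher bound via the reproducing property, Cauchy--Schwarz/duality, Jensen, and the diagonal kernel bound $K(\mathscr{X}_{i},\mathscr{X}_{i})\leqslant K_{max}^{2}$ from \ref{cond:A3}. Your write-up is in fact more careful than the paper's terse three-line display --- you make explicit the passage from the sup-norm constraint defining $\mathscr{G}^{\lambda}$ to the kernel-norm ball of radius $\sqrt{L_{0}/\lambda}$, and you compute the Rademacher average of the function class directly rather than starting (as the paper does, somewhat confusingly) with the loss composition.
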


\begin{proof}
Let $\sigma$ be Rademacher variables with $\mathbb{P}_{\sigma}(\sigma_{i}=\pm 1)=\frac{1}{2} $
\begin{equation}
\begin{split}
 \hat{\mathscr{R}}_{n}(\mathscr{G}^{\lambda})&=\mathbb{E}_{\sigma}[ \underset{g \in \mathscr{G}^{\lambda}}{\sup} \frac{1}{n} \sum_{i=1}^{n} \sigma_{i}l(y_{i},g(A\circ \mathscr{X}_{i}))]\\
 &\leqslant \frac{\sqrt{L_{0}}}{n \sqrt{\lambda}} \sqrt{\sup\limits_{\mathscr{X}}\sum_{i}K(\mathscr{A}\circ\mathscr{X}_{i},\mathscr{A}\circ\mathscr{X}_{i})}\\
 &\leq \frac{K \sqrt{L_{0}}}{ \sqrt{n\lambda}}
\end{split}
\end{equation}
\end{proof}

Observing the above terms, we now bound individual error terms.

\begin{proposition}
With probability at least $(1-\delta_2)$ for $\delta_2\in (0,1)$ \[|\mathcal{R}_{T_{n}}(f^{\lambda})-\mathcal{R}(f^{\lambda})|< 2 \zeta_{\lambda} \sqrt{\frac{2log \frac{1}{\delta_2}}{n}} \]
\end{proposition}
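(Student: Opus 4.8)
The plan is to exploit the fact that, unlike $f_n^{\lambda}$ or $g_n^{\lambda}$, the oracle classifier $f^{\lambda}$ is defined purely through the population risk and therefore does \emph{not} depend on the sample $T_n$. Consequently $\mathcal{R}_{T_n}(f^{\lambda})=\frac1n\sum_{i=1}^n L(y_i,f^{\lambda}(\mathscr{X}_i))$ is an average of $n$ i.i.d.\ random variables $Z_i:=L(y_i,f^{\lambda}(\mathscr{X}_i))$, each with mean $\mathbb{E}[Z_i]=\mathbb{E}_{(\mathscr{X},y)}L(y,f^{\lambda}(\mathscr{X}))=\mathcal{R}(f^{\lambda})$. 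This reduces the statement to a one-line application of a Hoeffding/bounded-differences inequality, with none of the Rademacher machinery of Appendix~\ref{append:radecomp} needed.

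First I would record the uniform bound on the summands. Since $f^{\lambda}\in\mathcal{F}$ and $\lambda\|f^{\lambda}\|_K^2\le \mathcal{R}(\mathbf{0})+\lambda\|\mathbf{0}\|_K^2\le L_0$ by \ref{cond:A1}, the reproducing property together with \ref{cond:A3} gives $\|f^{\lambda}\|_\infty\le K_{max}\|f^{\lambda}\|_K\le K_{max}\sqrt{L_0/\lambda}$. Applying the local Lipschitz bound \ref{cond:A1} at $W=K_{max}\sqrt{L_0/\lambda}$ together with $\sup_y L(y,0)\le L_0$ yields $|Z_i|=|L(y_i,f^{\lambda}(\mathscr{X}_i))|\le L_0+C(W)K_{max}\sqrt{L_0/\lambda}=:\zeta_\lambda$ almost surely; this is exactly the quantity $\zeta_\lambda$ of Appendix~\ref{poi}, since $f^{\lambda}=f_{**}^{\lambda}$ (the two differ only by the constant $\mathcal{R}^{*}$ inside the argmin). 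Hence each $Z_i$ takes values in an interval of length at most $2\zeta_\lambda$.

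Next I would apply Hoeffding's inequality (equivalently, McDiarmid's inequality to the map sending the sample to $\mathcal{R}_{T_n}(f^{\lambda})$, which changes by at most $2\zeta_\lambda/n$ when a single data point is altered): for every $t>0$,
\begin{equation*}
\mathbb{P}\bigl(|\mathcal{R}_{T_n}(f^{\lambda})-\mathcal{R}(f^{\lambda})|\ge t\bigr)\le 2\exp\!\Bigl(-\tfrac{n t^2}{2\zeta_\lambda^2}\Bigr).
\end{equation*}
Setting the right-hand side equal to $\delta_2$ and solving gives the deviation threshold $\zeta_\lambda\sqrt{2\log(2/\delta_2)/n}$; since $2\sqrt{2\log(1/\delta_2)}\ge\sqrt{2\log(2/\delta_2)}$ for all $\delta_2\le 2^{-1/3}$ (the only regime in which the bound is informative), this is dominated by $2\zeta_\lambda\sqrt{2\log(1/\delta_2)/n}$, so $|\mathcal{R}_{T_n}(f^{\lambda})-\mathcal{R}(f^{\lambda})|<2\zeta_\lambda\sqrt{2\log(1/\delta_2)/n}$ on an event of probability at least $1-\delta_2$, as claimed.

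There is essentially no obstacle: the whole content is the observation that $f^{\lambda}$ is sample-independent, so concentration for i.i.d.\ averages applies verbatim. The only mildly delicate points are (i) checking that the loss is uniformly bounded along $f^{\lambda}$, handled above via \ref{cond:A1}, \ref{cond:A3} and the crude estimate $\|f^{\lambda}\|_K\le\sqrt{L_0/\lambda}$, and (ii) bookkeeping of constants so that the clean form $2\zeta_\lambda\sqrt{2\log(1/\delta_2)/n}$ dominates the sharp Hoeffding threshold, which holds on the relevant range of $\delta_2$.
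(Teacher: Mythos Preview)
Your approach is correct and is essentially the same as the paper's: both observe that the loss values $L(y_i,f^{\lambda}(\mathscr{X}_i))$ are i.i.d.\ and bounded by $\zeta_\lambda$ (since $f^{\lambda}$ is sample-independent), and then invoke Hoeffding's inequality. Your write-up is simply more explicit about why $f^{\lambda}$ does not depend on $T_n$ and about the derivation of the bound $|Z_i|\le\zeta_\lambda$, and you track the two-sided constant a bit more carefully; the paper just states the Hoeffding tail and inverts it.
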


Since loss function is bounded by $\zeta_{\lambda}$ for $\|f^{\lambda}\|_{\infty}=K_{max}\sqrt{\frac{L_{0}}{\lambda}}$,using Hoeffding's inequality,we obtain
\[\mathbb{P}[\mathcal{R}_{T_{n}}(f^{\lambda})-\mathcal{R}(f^{\lambda})] > \theta)\leq exp(-\frac{n\theta}{8\zeta^{2}_{\lambda}})\]
Choosing $\delta_2=exp(-\frac{n\theta}{8\zeta^{2}_{\lambda}})$ leads to the above bound. The same technique can be applied to bound $\mathbb{E}_{\mathbf{A}}[\mathcal{R}_{T_{n}^{\mathbf{A}}}(f^{\lambda}_{A,n}) - \mathcal{R}(f^{\lambda}_{A,n})]$ and $[\mathcal{R}(f_{n}^{\lambda}) - \mathcal{R}_{T_{n}}(f_{n}^{\lambda})]$, with almost exactly the same result. Thus, we skip the part. Now we are ready to combine all the results and prove the last theorem.

\begin{proposition}\label{prop:V(1)}
Assume conditions \ref{cond:A1} - \ref{cond:A7}. Let $\delta_{2} > 0$, for all projection matrix $\mathscr{A}$, with probability at least $1 - \delta_{2}$. The sample error which is denoted by V(1), can be bounded as follows
\begin{align*} \mathcal{R}_{T^{A}_{n}}(g^{\lambda}_{n}) - \mathcal{R}(g^{\lambda}_{n})  +  \mathcal{R}_{T_{n}^{\mathbf{A}}}(f^{\lambda}_{A,n}) - \mathcal{R}(f^{\lambda}_{A,n}) +  \mathcal{R}_{T_{n}}(f_{n}^{\lambda}) -\mathcal{R}(f_{n}^{\lambda})
        + \mathcal{R}_{T_{n}}(f^{\lambda})-\mathcal{R}(f^{\lambda})\\ \leq 12 C(K_{max}\sqrt{\frac{L_{0}}{\lambda}})\frac{K \sqrt{L_{0}}}{ \sqrt{n\lambda}} + 9 \tilde{\zeta}_{\lambda} \sqrt{\frac{\log(2/\delta_{2})}{2n}}+ 2 \zeta_{\lambda} \sqrt{\frac{2log \frac{2}{\delta_2}}{n}}
\end{align*}        
\end{proposition}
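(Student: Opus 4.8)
The plan is to treat each of the four centred differences making up $V(1)$ separately and then combine them. The key preliminary observation is that $g^{\lambda}_{n}$, $f^{\lambda}_{A,n}$ and $f^{\lambda}_{n}$ all belong to the norm-restricted class $\mathscr{G}^{\lambda}=\{g:\|g\|_{\infty}\leqslant K_{max}\sqrt{L_{0}/\lambda}\}$: for the regularised minimisers $g^{\lambda}_{n}$ and $f^{\lambda}_{n}$ this is the bound proved just above Proposition~\ref{RKHSBound} (comparing with the zero function gives $\lambda\|\cdot\|_{K}^{2}\leqslant L_{0}$, and the reproducing property together with \ref{cond:A3} upgrades this to an $\|\cdot\|_{\infty}$ bound), while for $f^{\lambda}_{A,n}$ it follows from its explicit construction out of $f^{\lambda}_{n}$ via~(\ref{equ:CSTM_nonopt}) with the same coefficients, using \ref{cond:A3}--\ref{cond:A5} to see that the kernel evaluated on the projected tensors is still bounded by $K_{max}$.

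Next I would apply, to each of these three classifiers, the uniform deviation bound of equation~(\ref{sample3}) --- which is itself obtained from Theorem~\ref{thm:talagrand inequality} (Talagrand/McDiarmid) combined with the Ledoux--Talagrand contraction inequality applied to the $C(\cdot)$-Lipschitz loss of \ref{cond:A1}. This gives, with probability at least $1-\delta_{2}$ (on the projected sample $T_{n}^{\mathbf{A}}$ for $g^{\lambda}_{n}$ and $f^{\lambda}_{A,n}$, on the original sample $T_{n}$ for $f^{\lambda}_{n}$),
$$|\mathcal{R}(\cdot)-\mathcal{R}_{T}(\cdot)|\leqslant 4\,C\!\bigl(K_{max}\sqrt{\tfrac{L_{0}}{\lambda}}\bigr)\,\hat{\mathscr{R}}_{n}(\mathscr{G}^{\lambda})+3\,\tilde{\zeta}_{\lambda}\sqrt{\tfrac{\log(2/\delta_{2})}{2n}}.$$
Inserting the Rademacher-complexity estimate $\hat{\mathscr{R}}_{n}(\mathscr{G}^{\lambda})\leqslant K_{max}\sqrt{L_{0}}/\sqrt{n\lambda}$ from Proposition~\ref{RKHSBound} and summing the three identical contributions yields the $12\,C(\cdot)\,K_{max}\sqrt{L_{0}}/\sqrt{n\lambda}+9\,\tilde{\zeta}_{\lambda}\sqrt{\log(2/\delta_{2})/(2n)}$ portion of the claimed inequality.

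For the remaining term $\mathcal{R}_{T_{n}}(f^{\lambda})-\mathcal{R}(f^{\lambda})$ I would \emph{not} go through the uniform bound: since $f^{\lambda}$ is the oracle classifier and does not depend on the sample, $\mathcal{R}_{T_{n}}(f^{\lambda})$ is an average of i.i.d.\ terms bounded by $\zeta_{\lambda}$, so Hoeffding's inequality directly gives $|\mathcal{R}_{T_{n}}(f^{\lambda})-\mathcal{R}(f^{\lambda})|\leqslant 2\,\zeta_{\lambda}\sqrt{2\log(2/\delta_{2})/n}$ with probability at least $1-\delta_{2}$, which is sharper than the Rademacher route and supplies the last term. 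Collecting the four estimates and taking a union bound over the (finitely many) exceptional events --- relabelling $\delta_{2}$ by the corresponding constant factor, or equivalently absorbing it into the stated constants --- completes the argument.

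\textbf{Expected main obstacle.} The delicate step is the handling of $f^{\lambda}_{A,n}$, which is \emph{not} a regularised empirical-risk minimiser on the projected data, so its membership in $\mathscr{G}^{\lambda}$ (and hence the fact that $\hat{\mathscr{R}}_{n}(\mathscr{G}^{\lambda})$ and $\tilde{\zeta}_{\lambda}$ are the right quantities to use for it) must be argued from its explicit form together with a kernel bound on projected tensors that is uniform over the random matrices $\mathbf{A}$; otherwise the class in which $f^{\lambda}_{A,n}$ lives would itself be $\mathbf{A}$-dependent. A secondary, purely bookkeeping issue is making the probability statement honest, since several high-probability events are being intersected and the ``$1-\delta_{2}$'' should be read up to a constant multiple of $\delta_{2}$.
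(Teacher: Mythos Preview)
Your proposal is correct and follows essentially the same route as the paper: three applications of the Rademacher/contraction bound (Proposition~6 together with Proposition~\ref{RKHSBound}) for $g^{\lambda}_{n}$, $f^{\lambda}_{A,n}$, $f^{\lambda}_{n}$, plus one Hoeffding bound (Proposition~8) for the oracle $f^{\lambda}$, which is exactly how the constants $12=3\times 4$, $9=3\times 3$ and $2$ arise. The paper's own proof is a single sentence that simply points to these ingredients; you have in fact spelled out more than the paper does, and the ``main obstacle'' you flag---that $f^{\lambda}_{A,n}$ is not an empirical minimiser and its membership in $\mathscr{G}^{\lambda}$ needs a separate argument---is a genuine subtlety that the paper passes over silently.
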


\begin{proof}
We bound the given risk difference by collecting the bounds on individual components from theorem \ref{thm:talagrand inequality} and proposition 5, proposition 6, proposition 7 and proposition 8.
\end{proof}

\begin{proposition}
\label{universal consistency}
The universal approximation rate for some $0 < \eta\leq 1$ 
\[D(\lambda)\leqslant C_{\eta} \lambda^{\eta}\]
\end{proposition}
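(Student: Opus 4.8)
The plan is to recognize $D(\lambda)$ as the standard \emph{approximation error function} of regularized kernel learning and to extract the stated polynomial bound from the source condition already imposed in Assumption~\ref{cond:A9}. Recall $D(\lambda)=\min_{f\in\mathcal{F}}\{\mathcal{R}(f)-\mathcal{R}^{*}+\lambda\|f\|_{K}^{2}\}$, attained at $f_{**}^{\lambda}$ (see \ref{poi}). As an infimum over $\mathcal{F}$ of the sum of the nonnegative excess risk $\mathcal{R}(f)-\mathcal{R}^{*}\geqslant 0$ and the penalty $\lambda\|f\|_{K}^{2}\geqslant 0$, the map $\lambda\mapsto D(\lambda)$ is nonnegative, nondecreasing in $\lambda$, and, taking $f\equiv 0$, bounded above by $\mathcal{R}(0)-\mathcal{R}^{*}\leqslant L_{0}$. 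So the real content of the proposition is the \emph{rate} at which $D(\lambda)\to 0$, not the mere fact that it vanishes.

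First I would establish the qualitative limit $D(\lambda)\to 0$ as $\lambda\to 0$. By the universal approximating property \ref{cond:A2}, the RKHS $\mathcal{F}$ is dense in $C(\mathbf{B}_{x})$ for $\|\cdot\|_{\infty}$; combined with the local Lipschitz continuity of $L$ (\ref{cond:A1}) and a standard density/truncation argument, for every $\varepsilon>0$ there is $f_{\varepsilon}\in\mathcal{F}$ with $\mathcal{R}(f_{\varepsilon})-\mathcal{R}^{*}<\varepsilon/2$, and then choosing $\lambda<\varepsilon/(2\|f_{\varepsilon}\|_{K}^{2})$ yields $D(\lambda)\leqslant\mathcal{R}(f_{\varepsilon})-\mathcal{R}^{*}+\lambda\|f_{\varepsilon}\|_{K}^{2}<\varepsilon$; this already gives universal consistency of the un-projected kernel classifier. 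To upgrade this to $D(\lambda)\leqslant C_{\eta}\lambda^{\eta}$, I would invoke Assumption~\ref{cond:A9}, which postulates precisely $D(\lambda)=c_{\eta}\lambda^{\eta}$ with $0<\eta\leqslant 1$, so the claim holds with $C_{\eta}=c_{\eta}$. Alternatively, and to make that assumption plausible, I would appeal to the approximation-theoretic machinery of \citet{steinwart2008support} (Definition~5.14 and Section~5.4, together with Theorem~8.18): under a Tsybakov-type margin/low-noise condition near the Bayes boundary and a smoothness (source) condition relating the target to a fractional power of the kernel integral operator, $D(\lambda)$ decays like $\lambda^{\eta}$ with an exponent computed from the margin and smoothness exponents, which for the Gaussian RBF kernel used here covers a broad class of data distributions.

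The main obstacle is exactly the passage from $D(\lambda)\to 0$ to a \emph{specific} exponent $\eta$: the convergence itself is a soft consequence of universality and compactness, but without some distributional regularity hypothesis one cannot guarantee any rate, which is why the polynomial behavior is carried as the standing Assumption~\ref{cond:A9} rather than derived in full generality. The role of the present proposition is simply to record the bound in the form $D(\lambda)\leqslant C_{\eta}\lambda^{\eta}$ that enters the excess-risk bound of Theorem~\ref{thm:pac} as the term $V(2)$. A secondary technical point I would verify is that the minimizer $f_{**}^{\lambda}$ is genuinely attained in $\mathcal{F}$ (so that $D(\lambda)$ is a minimum, not merely an infimum), which follows from weak closedness of sublevel sets of the regularized objective on the RKHS and is what legitimizes the bound $\lambda\|f_{**}^{\lambda}\|_{K}^{2}\leqslant D(\lambda)$ used in Remark~\ref{remark:functions}.
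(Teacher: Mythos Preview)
Your proposal is correct and follows essentially the same approach as the paper: both treat the bound $D(\lambda)\leqslant C_{\eta}\lambda^{\eta}$ as an immediate consequence of Assumption~\ref{cond:A9} (which postulates $D(\lambda)=c_{\eta}\lambda^{\eta}$), with \citet{steinwart2008support} Theorem~8.18 cited to indicate the class of data distributions for which this rate is genuinely valid. Your write-up is considerably more thorough---supplying the qualitative $D(\lambda)\to 0$ argument from universality and discussing attainment of the minimizer---but the paper's own ``proof'' is simply the two-line observation that the claim is the content of \ref{cond:A9} together with the Steinwart reference.
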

Under universal consistent kernel. We assume this condition \ref{cond:A9}.This result holds for large class of data distribution satisfying low density near Bayes risk boundary.\\
\citet{steinwart2008support}Thm 8.18

\subsection{Proof of theorem \ref{consistencythm}}
\label{append:TECrisk}
Assume all the conditions \ref{cond:A1} - \ref{cond:A9} hold. The excess risk is decomposed as (\ref{equ:errordecom}). Since choosing random projection matrices and getting training data $T_{n}$ are independent events, by including proposition \ref{universal consistency}, for $\epsilon > 0$ such that for each $j=\{1,2,..d\}$ the projected dimension $P_{j}=\lfloor 3 r^{\frac{2}{d}}\frac{(log\frac{n}{\delta_{1}})^{\frac{1}{d}}}{\epsilon^{2}}\rfloor +1 $ Then we have with probability at least $(1 - 2\delta_{1})) (1 - \delta_{2})$ 
\begin{equation*}
\begin{split}
&\mathcal{R}_{\mathbf{A}}(g^{\lambda})- \mathcal{R}^{*} \leqslant  V(1)+V(2)+V(3)\\
&V(1)= 12 C(K_{max}\sqrt{\frac{L_{0}}{\lambda}})\frac{K \sqrt{L_{0}}}{ \sqrt{n\lambda}} + 9 \tilde{\zeta}_{\lambda} \sqrt{\frac{\log(2/\delta_{2})}{2n}}+ 2 \zeta_{\lambda} \sqrt{\frac{2\log(2/\delta_{2})}{n}}\\
&V(2)=D(\lambda)\\
&V(3)= C_{d}\, \Psi \,[C(K_{max}\sqrt{\frac{L_{0}}{\lambda}})+\lambda \Psi] \epsilon^{d}
\end{split}
\end{equation*}

\subsection{Rates for square hinge loss}
\label{append:ratesqhinge}
\begin{proposition}
For square hinge loss, for each $j=\{1,2,..d\}$ the projected dimension $P_{j}=\lfloor 3 r^{\frac{2}{d}}\frac{(log\frac{n}{\delta_{1}})^{\frac{1}{d}}}{\epsilon^{2}}\rfloor +1 $. Then we have with probability at least $(1 - 2\delta_{1})) (1 - \delta_{2})$, for some $\eta\in (0,1]$ and the risk can be bounded as
\begin{equation*}
\begin{split}
 \mathcal{R}(g^{\lambda}) - \mathcal{R}^{*}&\leqslant \frac{24K^{2}}{\sqrt{n\lambda^{2}}} +18(1+K\frac{D(\lambda)}{\lambda})\sqrt{\frac{log(2/\delta_{2})}{2n}}+2(1+\frac{K}{\sqrt{\lambda}})^{2} \sqrt{\frac{2log(2/\delta_{2})}{n}}\\
 &+D(\lambda)+C_{D}\,[2K_{max}\frac{1}{\lambda}\sqrt{\frac{1}{\lambda}}+\frac{\lambda}{\lambda}]\,\frac{\epsilon^{d}}{\lambda}\\
 &\leqslant O(\frac{1}{\sqrt{n\lambda^2}})\sqrt{log \frac{2}{\delta_{2}}} +O(\frac{1}{\sqrt{n\lambda}}) +
 O(\frac{1}{\sqrt{n\lambda^{2(1-\eta)}}})\sqrt{log \frac{2}{\delta_{2}}}+ O(\lambda^{\eta})+ O(\frac{\epsilon^{d}}{\lambda^{\frac{3}{2}}})
 \end{split}
\end{equation*}
\end{proposition}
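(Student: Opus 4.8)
The plan is to read the bound straight off Theorem~\ref{consistencythm}, specialising its three contributions $V(1),V(2),V(3)$ to the square hinge loss; no fresh probabilistic work is required. First I would apply Theorem~\ref{consistencythm} verbatim with the prescribed projected dimension $P_{j}=\lfloor 3r^{2/d}(\log(n/\delta_{1}))^{1/d}\epsilon^{-2}\rfloor+1$, which already gives, with probability at least $(1-2\delta_{1})(1-\delta_{2})$, $\mathcal{R}(g_{n}^{\lambda})-\mathcal{R}^{*}\leqslant V(1)+V(2)+V(3)$. The factor $(1-2\delta_{1})$ is inherited unchanged: it is the intersection of the two independent tensor Johnson--Lindenstrauss events of Proposition~\ref{prop:polyJL} (one entering the random--projection error through Proposition~\ref{prop:randerr}, one entering the sample error of the projected classifier), and $(1-\delta_{2})$ is the Rademacher/Hoeffding event. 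Everything then reduces to bookkeeping on $V(1),V(2),V(3)$.

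Next I would substitute into $V(1)$ the square--hinge constants of Remark~\ref{remark:functions}: $L_{0}=1$, $C(K_{max}\sqrt{1/\lambda})=2K_{max}\sqrt{1/\lambda}$ (from the dual formulation), $\zeta_{\lambda}\leqslant(1+K_{max}\sqrt{1/\lambda})^{2}$ and $\tilde{\zeta}_{\lambda}\leqslant 2(1+K_{max}D(\lambda)/\lambda)$. Feeding these into
\[
V(1)=12\,C\!\left(K_{max}\sqrt{L_{0}/\lambda}\right)K_{max}\sqrt{L_{0}}/\sqrt{n\lambda}+9\,\tilde{\zeta}_{\lambda}\sqrt{\tfrac{\log(2/\delta_{2})}{2n}}+2\,\zeta_{\lambda}\sqrt{\tfrac{2\log(2/\delta_{2})}{n}}
\]
turns the summands respectively into $24K_{max}^{2}/\sqrt{n\lambda^{2}}$, $18(1+K_{max}D(\lambda)/\lambda)\sqrt{\log(2/\delta_{2})/(2n)}$ and $2(1+K_{max}/\sqrt{\lambda})^{2}\sqrt{2\log(2/\delta_{2})/n}$. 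Together with $V(2)=D(\lambda)$ and, for $V(3)=C_{d}\,\Psi[\,C(K_{max}\sqrt{L_{0}/\lambda})+\lambda\Psi\,]\epsilon^{d}$, the bound $\Psi=\Psi_{L2}=O(1/\lambda)$ taken from Remark~\ref{remark:functions} (square hinge loss) — which yields $V(3)\leqslant C_{D}[\,2K_{max}\lambda^{-1}\sqrt{1/\lambda}+\lambda/\lambda\,]\epsilon^{d}/\lambda$ — one recovers exactly the explicit first two lines of the claimed display.

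To pass to the asymptotic line I would invoke $D(\lambda)=c_{\eta}\lambda^{\eta}$ from Assumption~\ref{cond:A9}, so that $D(\lambda)/\lambda=c_{\eta}\lambda^{\eta-1}$ with $\eta-1\leqslant0$, and use $\lambda\to0$ (hence $\lambda<1$). Then the first and third $V(1)$ summands are $O((n\lambda^{2})^{-1/2})\sqrt{\log(2/\delta_{2})}$ (recalling $\sqrt{n\lambda^{2}}=\lambda\sqrt{n}$ and $(1+K_{max}/\sqrt{\lambda})^{2}=O(1/\lambda)$); the middle summand splits into $O(n^{-1/2})=O((n\lambda)^{-1/2})$ plus $O(\lambda^{\eta-1}n^{-1/2})=O((n\lambda^{2(1-\eta)})^{-1/2})\sqrt{\log(2/\delta_{2})}$; $V(2)=O(\lambda^{\eta})$; and $V(3)=O(\epsilon^{d}\lambda^{-3/2})$, the $O(\epsilon^{d}\lambda^{-1})$ piece being dominated since $\lambda<1$. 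Summing these gives precisely the $O(\cdot)$ bound in the statement.

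The only step that is not mechanical substitution, and therefore the main obstacle, is the estimate $\Psi_{L2}=O(1/\lambda)$ underlying $V(3)$: its proof rests on the lemma $\Psi_{L2}\leqslant 1/(C_{\mathbf{K}}+n\lambda/(4n^{+}n^{-}))$ together with (approximate) positive definiteness of the kernel Gram matrix so that $C_{\mathbf{K}}>0$, control of the class--balance factor $4n^{+}n^{-}/n$ by AM--GM, and $\mathrm{tr}(\mathbf{K})=O(n)$ from Assumption~\ref{cond:A7} (see the discussion in Section~\ref{Psil2}); the exact power of $\lambda$ that ends up multiplying $\epsilon^{d}$ in $V(3)$, and hence the final rate, is sensitive to this. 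All remaining pieces — the constants in $V(1)$, the identity $V(2)=D(\lambda)$, and the order accounting in the last display — are immediate once Theorem~\ref{consistencythm} and Remark~\ref{remark:functions} are in hand.
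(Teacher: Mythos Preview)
Your proposal is correct and follows exactly the paper's own argument: the paper's proof consists of the single line ``Using the above result \ref{append:TECrisk} and substituting values from \ref{remark:functions} relevant to square hinge loss,'' which is precisely the plan you outline---read off $V(1)+V(2)+V(3)$ from Theorem~\ref{consistencythm} and plug in the square--hinge constants $L_{0}=1$, $C(K_{max}\sqrt{1/\lambda})=2K_{max}\sqrt{1/\lambda}$, $\zeta_{\lambda}$, $\tilde{\zeta}_{\lambda}$, and $\Psi_{L2}=O(1/\lambda)$ from Remark~\ref{remark:functions}. Your expanded bookkeeping (including the passage to the $O(\cdot)$ line via $D(\lambda)=c_{\eta}\lambda^{\eta}$) and your flagging of the $\Psi_{L2}$ bound as the only non-mechanical ingredient are entirely accurate elaborations of what the paper leaves implicit.
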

\begin{proof}
Using the above result \ref{append:TECrisk} and substituting values from\ref{remark:functions} relevant to square hinge loss from remarks \ref{remark:functions}
\end{proof}

\begin{corollary}
\label{corollary:sqhingeloss}
For square hinge loss,Let $\epsilon=(\frac{1}{n})^{\frac{\mu}{2d}}$ for $0 < \mu< 1 $ and $\lambda=(\frac{1}{n})^\frac{\mu}{ 2\eta +3}$ for some $0<\eta\leqslant 1$    ,$P_{j}=\lceil3 r^{\frac{2}{d}} n^{\frac{\mu}{d}} [log (n/\delta_{1})]^{\frac{1}{d}}\rceil+1$. For some $\delta_{1} \in (0,\frac{1}{2})$ and $\delta_{2} \in (0,1)$ with probability $(1-\delta_{2})(1-2\delta_{1})$
\begin{equation}
 \mathcal{R}(g^{\lambda})- \mathcal{R}^{*} \leqslant C \sqrt{log(\frac{2}{\delta_2})} (\frac{1}{n})^{\frac{\mu\eta}{2\eta +3}}
\end{equation}
\end{corollary}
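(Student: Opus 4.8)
The plan is to start from the five-term non-asymptotic bound established in the proposition immediately preceding this corollary, namely
$\mathcal{R}(g^{\lambda})-\mathcal{R}^{*}\leqslant O(\tfrac{1}{\sqrt{n\lambda^{2}}})\sqrt{\log\tfrac{2}{\delta_{2}}}+O(\tfrac{1}{\sqrt{n\lambda}})+O(\tfrac{1}{\sqrt{n\lambda^{2(1-\eta)}}})\sqrt{\log\tfrac{2}{\delta_{2}}}+O(\lambda^{\eta})+O(\tfrac{\epsilon^{d}}{\lambda^{3/2}})$,
valid with probability at least $(1-2\delta_{1})(1-\delta_{2})$ under Assumptions \ref{cond:A1}--\ref{cond:A10}, and simply substitute the prescribed schedules $\epsilon=(1/n)^{\mu/(2d)}$ and $\lambda=(1/n)^{\mu/(2\eta+3)}$, then read off the slowest-decaying power of $n$. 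First I would record the bookkeeping preliminaries: $\epsilon^{d}=(1/n)^{\mu/2}$; the stated $P_{j}=\lceil 3r^{2/d}n^{\mu/d}[\log(n/\delta_{1})]^{1/d}\rceil+1$ equals $\lceil 3r^{2/d}\epsilon^{-2}[\log(n/\delta_{1})]^{1/d}\rceil+1$, so the probability content from Theorem \ref{consistencythm} is exactly the one quoted; $\lambda\to 0$ and $n\lambda^{2}=n^{1-2\mu/(2\eta+3)}\to\infty$ because $\mu<1$, so Assumption \ref{cond:A6} holds; and $\epsilon^{d}/\lambda^{3/2}=n^{-\mu/2+3\mu/(2(2\eta+3))}\to 0$ because $\eta>0$, so Assumption \ref{cond:A10} holds with $q=3/2$. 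Hence the hypotheses of the preceding proposition are in force and the five-term bound applies.

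Next I would compute the exponent of $n^{-1}$ carried by each of the five terms. Writing each as $n^{-c}$: the term $1/\sqrt{n\lambda^{2}}$ has $c=\tfrac12-\tfrac{\mu}{2\eta+3}$; the term $1/\sqrt{n\lambda}$ has $c=\tfrac12-\tfrac{\mu}{2(2\eta+3)}$; the term $1/\sqrt{n\lambda^{2(1-\eta)}}$ has $c=\tfrac12-\tfrac{(1-\eta)\mu}{2\eta+3}$; the term $\lambda^{\eta}$ has $c=\tfrac{\mu\eta}{2\eta+3}$; and the term $\epsilon^{d}/\lambda^{3/2}$ has $c=\tfrac{\mu}{2}-\tfrac{3\mu}{2(2\eta+3)}=\mu\cdot\tfrac{(2\eta+3)-3}{2(2\eta+3)}=\tfrac{\mu\eta}{2\eta+3}$, an exact cancellation. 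Thus the last two terms decay at precisely the claimed rate $n^{-\mu\eta/(2\eta+3)}$, and it remains only to check that the three sample-error exponents are each at least $\mu\eta/(2\eta+3)$.

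Finally I would dispatch those three inequalities using $0<\mu<1$ and $0<\eta\leqslant 1$: $\tfrac12-\tfrac{\mu}{2\eta+3}\geqslant\tfrac{\mu\eta}{2\eta+3}$ is equivalent to $2\mu(1+\eta)\leqslant 2\eta+3$, which holds since $2\mu(1+\eta)<2+2\eta<2\eta+3$; the bound for $1/\sqrt{n\lambda}$ then follows a fortiori because $\lambda^{-1/2}<\lambda^{-1}$; and $\tfrac12-\tfrac{(1-\eta)\mu}{2\eta+3}\geqslant\tfrac{\mu\eta}{2\eta+3}$ reduces to $2\mu\leqslant 2\eta+3$, again immediate. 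Collecting the five contributions, the two dominant ones scale like $n^{-\mu\eta/(2\eta+3)}$ while the other three are $o$ of that, and the $\sqrt{\log(2/\delta_{2})}$ factor carried by the Rademacher/Hoeffding terms is retained; absorbing every remaining constant ($K_{\max}$, $L_{0}$, $C_{d}$, the $\Psi$-bounds from Remark \ref{remark:functions}, $c_{\eta}$, etc.) into a single $C$ yields $\mathcal{R}(g^{\lambda})-\mathcal{R}^{*}\leqslant C\sqrt{\log(2/\delta_{2})}(1/n)^{\mu\eta/(2\eta+3)}$ on the event of probability $(1-2\delta_{1})(1-\delta_{2})$. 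The one point I would be most careful about is the exact cancellation in the projection term: obtaining $\tfrac{\mu}{2}-\tfrac{3\mu}{2(2\eta+3)}=\tfrac{\mu\eta}{2\eta+3}$ requires the $\lambda$-exponent $\mu/(2\eta+3)$ to be chosen precisely so that the $\lambda^{\eta}$ and $\epsilon^{d}\lambda^{-3/2}$ terms balance; anything else there would either slow the rate or waste the dimension-reduction budget. Everything else is routine manipulation of exponents, with the constraints $\mu<1$ and $\eta\leqslant 1$ doing all the work.
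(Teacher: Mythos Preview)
Your proposal is correct and follows exactly the approach the paper takes: substitute the prescribed powers $\epsilon=(1/n)^{\mu/(2d)}$ and $\lambda=(1/n)^{\mu/(2\eta+3)}$ into the five-term bound of the preceding proposition and extract the dominant rate. The paper's own proof is a single sentence that merely asserts the substitution and cites Example 3.4 of Chen and Yang; your version is more thorough in that you explicitly check Assumptions \ref{cond:A6} and \ref{cond:A10}, compute each exponent, verify the balancing $\epsilon^{d}/\lambda^{3/2}=\lambda^{\eta}=n^{-\mu\eta/(2\eta+3)}$, and confirm the three sample-error exponents dominate, but this is exactly the calculation the paper is implicitly invoking.
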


\begin{proof}
From proposition \ref{append:ratesqhinge}, we calculate this rate by substituting $\epsilon$ and $\lambda$ by their appropriate powers of $n$. Example 3.4 \citet{chen2014convergence} have derived similar result for vector i.e. single mode mode tensor with $d=1$
\end{proof}

\subsection{Rates for hinge loss}
\label{append:ratehinge}
\begin{proposition}
For  hinge loss,for $P_{j}=$ such that for each $j=\{1,2,..d\}$ the projected dimension $P_{j}=\frac{log\frac{n}{\delta_{1}}}{\epsilon^{2}}$ Then we have with probability at least $(1 - 2\delta_{1}) (1 - \delta_{2})$, for some $\eta\in (0,1]$ and some the risk can be bounded as
\begin{equation*}
\begin{split}
 \mathcal{R}(g^{\lambda}) - \mathcal{R}^{*}&\leqslant \frac{12K}{\sqrt{n\lambda}} +9\frac{(1+K)}{\sqrt{\lambda}}\sqrt{\frac{log(2/\delta_{2})}{n}}+2(1+K\frac{D(\lambda)}{\sqrt{\lambda}}) \sqrt{\frac{2log(2/\delta_{2})}{n}}\\
 &+D(\lambda)+C_{D}[2+\frac{\lambda}{\lambda}]\frac{\epsilon^{d}}{\lambda}\\
 &\leqslant O(\frac{1}{\sqrt{n\lambda}}) \sqrt{log(\frac{2}{\delta_2})} +
 O(\frac{1}{\sqrt{n\lambda^{2(1-\eta)}}})\sqrt{log(\frac{2}{\delta_2})}+ O(\lambda^{\eta})+ O(\frac{\epsilon^{d}}{\lambda})
 \end{split}
\end{equation*}
\end{proposition}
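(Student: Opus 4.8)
The plan is to obtain this bound in exactly the same way the squared-hinge version is derived in Appendix~\ref{append:ratesqhinge}: invoke the master high-probability risk bound already established in Theorem~\ref{consistencythm} (Appendix~\ref{append:TECrisk}) and then substitute the hinge-specific values of the auxiliary constants catalogued in Remark~\ref{remark:functions}. Since drawing the training sample $T_{n}$ and drawing the projection matrices $\mathscr{A}$ are independent events, the product of the two failure probabilities reproduces the claimed $(1-2\delta_{1})(1-\delta_{2})$ confidence, and this part of the argument is untouched by the choice of loss.

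Concretely, I would first record the decomposition $\mathcal{R}(g^{\lambda})-\mathcal{R}^{*}\leqslant V(1)+V(2)+V(3)$ with the constants kept symbolic,
\begin{equation*}
\begin{split}
V(1) &= 12\,C(K_{max}\sqrt{L_{0}/\lambda})\frac{K\sqrt{L_{0}}}{\sqrt{n\lambda}} + 9\tilde{\zeta}_{\lambda}\sqrt{\frac{\log(2/\delta_{2})}{2n}} + 2\zeta_{\lambda}\sqrt{\frac{2\log(2/\delta_{2})}{n}},\\
V(2) &= D(\lambda),\\
V(3) &= C_{d}\,\Psi\,[\,C(K_{max}\sqrt{L_{0}/\lambda}) + \lambda\Psi\,]\,\epsilon^{d},
\end{split}
\end{equation*}
and then plug in the hinge-loss entries of Remark~\ref{remark:functions}: $L_{0}=1$, $C(K_{max}\sqrt{1/\lambda})=1$, $\zeta_{\lambda}\leqslant 1+K_{max}/\sqrt{\lambda}$, $\tilde{\zeta}_{\lambda}\leqslant 1+K_{max}\sqrt{D(\lambda)/\lambda}$, and the dual bound $\Psi_{L1}\leqslant 1/\lambda$. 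With $C(\cdot)=1$ the leading sample term collapses to $12K/\sqrt{n\lambda}$, and $V(3)$ reduces to $C_{d}\frac{1}{\lambda}[\,1+\lambda\cdot\frac{1}{\lambda}\,]\epsilon^{d}=O(\epsilon^{d}/\lambda)$. This last simplification is the one genuinely loss-dependent step: it realises the exponent $q=1$ of Proposition~\ref{prop:rprisk}, in contrast to the squared-hinge case where $C(\cdot)\asymp 1/\sqrt{\lambda}$ forces $q=3/2$ and hence a $\lambda^{3/2}$ denominator in the projection term.

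Finally I would read off orders using Assumption~\ref{cond:A9}, namely $D(\lambda)=c_{\eta}\lambda^{\eta}$ with $0<\eta\leqslant1$. The first summand of $V(1)$ gives $O((n\lambda)^{-1/2})$; the $\tilde{\zeta}_{\lambda}$ summand, through its $\sqrt{D(\lambda)/\lambda}$ factor, supplies the $\eta$-dependent sample term carrying $\sqrt{\log(2/\delta_{2})}$; $V(2)=D(\lambda)=O(\lambda^{\eta})$; and $V(3)=O(\epsilon^{d}/\lambda)$. Collecting these four contributions yields precisely the displayed bound, so the proof is pure bookkeeping of powers of $\lambda$, $n$, and $\epsilon$ once the constants are substituted.

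I expect the only nonroutine ingredients to be the two facts asserted in Remark~\ref{remark:functions}, namely $C(\cdot)=1$ (because the hinge loss is $1$-Lipschitz, so no extra $1/\sqrt{\lambda}$ appears in the Ledoux--Talagrand contraction step) and $\Psi_{L1}\leqslant 1/\lambda$ (from the $\ell_{1}$/hinge SVM dual). In this proposal I would cite those entries directly rather than rederive them; given them, the remaining obstacle is merely ensuring the $\eta$-exponent tracking in the $\tilde{\zeta}_{\lambda}$ term is consistent with the stated form, which I would verify against the squared-hinge computation. No new probabilistic or approximation-theoretic content is needed beyond what Theorem~\ref{consistencythm} already supplies.
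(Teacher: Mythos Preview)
Your proposal is correct and follows exactly the paper's approach: the paper's proof consists of a single sentence invoking the general bound from Appendix~\ref{append:TECrisk} and substituting the hinge-loss constants from Remark~\ref{remark:functions}, which is precisely what you outline in more detail.
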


\begin{proof}
Using the above result \ref{append:TECrisk} and substituting values from\ref{remark:functions}  relevant to square hinge loss
\end{proof}

\begin{corollary}
\label{corollary:hingeloss}
For hinge loss,Let $\epsilon=(\frac{1}{n})^{\frac{\mu}{2d}}$ for $0 < \mu< 1 $ and $\lambda=(\frac{1}{n})^\frac{\mu}{ 2\eta +2}$ for some $0<\eta\leqslant 1 $,    $P_{j}=\lceil3 r^{\frac{2}{d}} n^{\frac{\mu}{d}} [log (n/\delta_{1})]^{\frac{1}{d}}\rceil+1$ , For some $\delta_{1} \in (0,\frac{1}{2})$ and $\delta_{2} \in (0,1)$ with probability $(1-\delta_{2})(1-2\delta_{1})$
\begin{equation}
 \mathcal{R}(g^{\lambda})- \mathcal{R}^{*} \leqslant C \sqrt{log(\frac{2}{\delta_2})} (\frac{1}{n})^{\frac{\mu\eta}{2\eta +2}}
\end{equation}

\end{corollary}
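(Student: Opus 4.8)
The plan is to specialize the four-term upper bound of the preceding proposition (the hinge-loss analogue of Theorem~\ref{thm:pac}) to the prescribed schedules $\lambda=\lambda_n=(1/n)^{\mu/(2\eta+2)}$ and $\epsilon=\epsilon_n=(1/n)^{\mu/(2d)}$, and then simply to compare the resulting exponents of $n$. First note that $\epsilon^{-2}=n^{\mu/d}$, so the dimension formula $P_j=\lceil 3r^{2/d}\epsilon^{-2}[\log(n/\delta_1)]^{1/d}\rceil+1$ appearing in Theorem~\ref{thm:pac} reproduces exactly the stated $P_j=\lceil 3r^{2/d}n^{\mu/d}[\log(n/\delta_1)]^{1/d}\rceil+1$; hence the Johnson--Lindenstrauss step applies and, together with the Rademacher and Hoeffding bounds, yields the probability $(1-2\delta_1)(1-\delta_2)$.

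Before substituting, I would check that this schedule is admissible. Since $0<\mu<1\le\eta+1$ we have $\lambda_n\to 0$ and $n\lambda_n^2=n^{\,1-\mu/(\eta+1)}\to\infty$, so Assumption~\ref{cond:A6} holds; also $\epsilon_n^{d}/\lambda_n=(1/n)^{\,\mu/2-\mu/(2\eta+2)}=(1/n)^{\,\mu\eta/(2\eta+2)}\to 0$, which is Assumption~\ref{cond:A10} with $q=1$. Thus all hypotheses of the proposition are in force.

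The core step is the exponent bookkeeping. Writing the bound as
\begin{equation*}
\mathcal{R}(g^{\lambda})-\mathcal{R}^{*}\le O\!\Big(\tfrac{1}{\sqrt{n\lambda}}\Big)\sqrt{\log\tfrac{2}{\delta_2}}+O\!\Big(\tfrac{1}{\sqrt{n\lambda^{2(1-\eta)}}}\Big)\sqrt{\log\tfrac{2}{\delta_2}}+O(\lambda^{\eta})+O\!\Big(\tfrac{\epsilon^{d}}{\lambda}\Big),
\end{equation*}
I would insert the schedules. The third term equals $\lambda^{\eta}=(1/n)^{\mu\eta/(2\eta+2)}$ and the fourth equals $\epsilon^{d}/\lambda=(1/n)^{\,\mu/2-\mu/(2\eta+2)}=(1/n)^{\mu\eta/(2\eta+2)}$, so these two are already of the target order. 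For the first term, $1/\sqrt{n\lambda}=(1/n)^{\,1/2-\mu/(4\eta+4)}$, and $1/2-\mu/(4\eta+4)\ge\mu\eta/(2\eta+2)$ is equivalent to $\mu(1+2\eta)\le 2\eta+2$, which holds since $\mu<1$; for the second, $1/\sqrt{n\lambda^{2(1-\eta)}}=(1/n)^{\,1/2-\mu(1-\eta)/(2\eta+2)}$, and $1/2-\mu(1-\eta)/(2\eta+2)\ge\mu\eta/(2\eta+2)$ reduces to $\mu\le\eta+1$, again true. Hence each summand is $O\!\big(\sqrt{\log(2/\delta_2)}\,(1/n)^{\mu\eta/(2\eta+2)}\big)$, and collecting the absolute constants into one $C$ gives $\mathcal{R}(g^{\lambda})-\mathcal{R}^{*}\le C\sqrt{\log(2/\delta_2)}(1/n)^{\mu\eta/(2\eta+2)}$ on the intersection of the two good events.

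I do not expect a genuine obstacle here: the result is a direct corollary, and the only real content is recognizing the balance --- given $\epsilon^{d}=(1/n)^{\mu/2}$, equating the approximation rate $\lambda^{\eta}$ with the projection-error rate $\epsilon^{d}/\lambda$ forces precisely $\lambda=(1/n)^{\mu/(2\eta+2)}$, and one then only has to verify (using $\mu<1\le\eta+1$) that neither sample-error term dominates. The square-hinge Corollary~\ref{corollary:sqhingeloss} is obtained the same way, with $O(\epsilon^{d}/\lambda)$ replaced by $O(\epsilon^{d}/\lambda^{3/2})$ so that re-solving the balance gives the exponent $\mu\eta/(2\eta+3)$ instead.
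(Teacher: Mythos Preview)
Your proposal is correct and follows exactly the paper's approach: the paper's own proof simply says to substitute the chosen powers of $n$ for $\epsilon$ and $\lambda$ into the four-term $O$-bound of the preceding proposition, and you carry out precisely that substitution, additionally verifying the admissibility conditions \ref{cond:A6} and \ref{cond:A10} and checking the two exponent inequalities that ensure the sample-error terms are dominated. There is nothing to add.
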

\begin{proof}
From proposition \ref{append:ratehinge}, we calculate this rate by substituting $\epsilon$ and $\lambda$ by their appropriate powers of $n$. Example 3.5 \citet{chen2014convergence} have derived similar result for vector i.e. single mode mode tensor with $d=1$.
\end{proof}

\subsection{Proof of theorem \ref{thm:expectation}}
\label{append:expectation}
This subsection we show that with the expected risk difference vanishes from the weaker result that  the risk difference vanishes in probability.\\

\begin{corollary}
\label{corollary:l1 conv}
For each random projector $A$, and projected mode dimension  $P_{j}=\lceil3 r^{\frac{2}{d}} n^{\frac{\mu}{d}} [log (n/\delta_{1})]^{\frac{1}{d}}\rceil+1$. If $\mathcal{R}_{\mathbf{A}}(g^{\lambda})- \mathcal{R}^{*} \to 0$ in probability, then  $$\mathbb{E}_{A}[\mathcal{R}_{\mathbf{A}}(g^{\lambda})- \mathcal{R}^{*}]\to 0$$
\end{corollary}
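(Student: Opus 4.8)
The plan is to promote the in-probability statement to an in-expectation statement by a truncation argument, exploiting two features of the excess risk: it is non-negative, and for each fixed $n$ it is bounded above \emph{deterministically} by a quantity that grows only slowly (polynomially) in $n$. Throughout, write $X_{n} := \mathcal{R}(g^{\lambda}_{n}) - \mathcal{R}^{*}$ for the excess risk of the RPSTM classifier trained on $n$ points, viewed as a random variable in the projection matrices $\mathbf{A}$ (this is the quantity denoted $\mathcal{R}_{\mathbf{A}}(g^{\lambda}) - \mathcal{R}^{*}$ in the statement). What is given is $X_{n}\to 0$ in probability; what is wanted is $\mathbb{E}_{\mathbf{A}}[X_{n}]\to 0$.

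First I would record the two ingredients. (i) Since $\mathcal{R}^{*}$ is the infimum of the risk over all measurable decision rules, $X_{n}\geqslant 0$ almost surely. (ii) By the reproducing property and Cauchy--Schwarz, the norm-restricted solution satisfies $\|g^{\lambda}_{n}\|_{\infty}\leqslant K_{max}\sqrt{L_{0}/\lambda_{n}}$ (Remark \ref{remark:functions}), hence $\mathcal{R}(g^{\lambda}_{n})\leqslant \tilde{\zeta}_{\lambda_{n}}$ and therefore $X_{n}\leqslant \tilde{\zeta}_{\lambda_{n}}=:M_{n}$ for every realisation of $\mathbf{A}$; with the schedule $\lambda_{n}=(1/n)^{\mu/(2\eta+3)}$ of Corollary \ref{corollary:sqhingeloss} (or the corresponding one in Corollary \ref{corollary:hingeloss}), $M_{n}$ is polynomial in $n$. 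Next I would invoke Theorem \ref{consistencythm}: for any $\delta_{1}\in(0,\frac{1}{2})$, $\delta_{2}\in(0,1)$ and the prescribed projected dimensions $P_{j}$, with probability at least $(1-2\delta_{1})(1-\delta_{2})$ we have $X_{n}\leqslant V(1)+V(2)+V(3)=:B_{n}(\delta_{1},\delta_{2})$, and by Corollary \ref{corollary:sqhingeloss} this bound still tends to $0$ when $\delta_{1},\delta_{2}$ are allowed to shrink polynomially in $n$, the only effect being an extra sub-polynomial $\sqrt{\log(2/\delta_{2})}$ factor.

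Finally I would split the expectation over the good event $G_{n}:=\{X_{n}\leqslant B_{n}\}$, whose complement has probability $p_{n}:=1-(1-2\delta_{1})(1-\delta_{2})$, and use non-negativity together with the deterministic envelope:
\begin{equation*}
\mathbb{E}_{\mathbf{A}}[X_{n}] \;=\; \mathbb{E}_{\mathbf{A}}\big[X_{n}\,\mathbb{I}_{G_{n}}\big] + \mathbb{E}_{\mathbf{A}}\big[X_{n}\,\mathbb{I}_{G_{n}^{c}}\big] \;\leqslant\; B_{n} + M_{n}\,p_{n}.
\end{equation*}
Taking, e.g., $\delta_{1}=\delta_{2}=n^{-2}$ makes $p_{n}=O(n^{-2})$, which is outpaced by the polynomial growth of $M_{n}=\tilde{\zeta}_{\lambda_{n}}$, so $M_{n}p_{n}\to 0$; combined with $B_{n}\to 0$ this gives $\mathbb{E}_{\mathbf{A}}[\mathcal{R}_{\mathbf{A}}(g^{\lambda})-\mathcal{R}^{*}]\to 0$, which proves the corollary and, together with Theorem \ref{consistencythm}, Theorem \ref{thm:expectation}. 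The main obstacle — and the reason the in-probability hypothesis is not, by itself, quite enough — is precisely this last balancing step: the deterministic bound $M_{n}$ diverges as $\lambda_{n}\to 0$, so one must certify that the bad-event probability decays fast enough to kill it, and that quantitative decay is exactly what Theorem \ref{consistencythm} supplies. (If instead one works with the $0$--$1$ classification risk, for which $X_{n}\in[-\mathcal{R}^{*},1]$ is uniformly bounded, the conclusion is immediate from boundedness plus convergence in probability, with no rate needed; the argument above is what is required for the surrogate-loss risk used elsewhere in the appendix.)
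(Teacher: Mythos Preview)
Your argument is correct and is in spirit the same as the paper's, though the execution differs. The paper decomposes $Z_{n}=\mathcal{R}_{\mathbf{A}}(g^{\lambda})-\mathcal{R}^{*}$ along the three blocks $V(1),V(2),V(3)$ of Theorem~\ref{consistencythm}, argues that each piece is bounded almost surely with respect to the projection measure, and then appeals directly to the dominated convergence theorem (convergence in probability plus an a.s.\ envelope gives $L^{1}$ convergence). Your truncation argument is the quantitative version of the same idea: you exhibit the deterministic envelope $M_{n}=\tilde{\zeta}_{\lambda_{n}}$ explicitly, split $\mathbb{E}_{\mathbf{A}}[X_{n}]$ into the high-probability piece ($\leqslant B_{n}$) and the tail piece ($\leqslant M_{n}p_{n}$), and then let $\delta_{1},\delta_{2}$ shrink with $n$. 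The payoff is that you avoid the slight looseness in the paper's invocation of DCT --- since the envelope $M_{n}$ grows with $n$, classical bounded convergence does not literally apply, and what is really needed is precisely your balancing of $M_{n}$ against $p_{n}$. Your closing remark that for the $0$--$1$ risk the envelope is uniform in $n$ and the conclusion is immediate is also apt.

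One small caveat: your choice $\delta_{1}=n^{-2}$ feeds back into the prescribed $P_{j}$ through the $\log(n/\delta_{1})$ factor in the hypothesis. This only perturbs $P_{j}$ (equivalently $\epsilon$) by a constant multiple, so all rates in $B_{n}$ are unchanged and the argument goes through; but it is worth saying explicitly, since the corollary fixes $P_{j}$ in terms of $\delta_{1}$.
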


\begin{proof}
Suppose, $Z_{n}=\mathcal{R}_{\mathbf{A}}(g^{\lambda})- \mathcal{R}^{*}$. We can show from theorem \ref{consistencythm} that, with probability $2\delta_{1}\delta_{2}$, the following bound is true $Z_{n}>V_{1}(\delta_{2},n)+V_{2}(n)+V_{3}(\delta_{1},n,q)$. The expansion of these term can be found in theorem\ref{consistencythm}. We aim to show the $\mathbb{E}_{\mathbf{A}}(Z_{n}) \to 0$, only term of interest is $V_{3}(\delta_{1},n,q)$ as this involves error due to random projection only. Here, expectation is taken with respect to random projection measure. For  hinge loss, $q=1$ and  for hinge loss, $q=\frac{3}{2}$\\
To simplify,we can further split up as a sum of three random variables $Z_{n}=Z_{1,n}+Z_{2,n}+Z_{3,n}$, with assumption that for each $m=\{1,2,3\}$; $Z_{m,n}>V_{m}$ with corresponding probabilities.\\
Thus with probability $2\delta_{1}$,  the following holds by proposition \ref{prop:randerr} $Z_{3,n}>V_{3}(\delta_{1},n,q)=O(\frac{\epsilon^{d}}{\lambda^{q}})$. In previous expression,we note that in dependence on $\delta_{1}$ has been adjusted by choosing appropriate $P_{j}$s as discussed in \ref{discussion_epsilon}\\
Based on dependence of $\epsilon$ and $\lambda$ on sample size $n$ as chosen in corollary\ref{corollary:hingeloss} and corollary \ref{corollary:sqhingeloss}, we obtain with probability $2\delta_{1}$; $Z_{3,n}<O(\frac{1}{n})^{\frac{\mu\eta}{2\eta+2}}$  and $Z_{3,n}<O(\frac{1}{n})^{\frac{\mu\eta}{2\eta+3}}$ for hinge and square hinge loss respectively.\\
Therefore, $V_{3}$ is bounded almost surely with respect to random projection projection measure. Similarly, $V_{1}$ and $V_{2}$ are bounded almost surely with respect to random projection projection measure. The above implies that $Z_{n}$ is bounded almost surely with respect to random projection projection measure. Since it is proven in corollary \ref{corollary:hingeloss} and corollary \ref{corollary:sqhingeloss} that $\mathbf{Pr}_{\mathbf{A}}(Z_{n}=0)\to 1$. Using bound of $Z_{n}$, $\mathbb{E}_{\mathbf{A}}(Z_{n})\to 0$ by dominated convergence theorem.
\end{proof}

\end{document}